\def\1{\bm{1}}
\def\mM{{\bm{M}}}
\DeclareMathAlphabet{\mathsfit}{\encodingdefault}{\sfdefault}{m}{sl}
\SetMathAlphabet{\mathsfit}{bold}{\encodingdefault}{\sfdefault}{bx}{n}
\newcommand{\Var}{\mathrm{Var}}
\DeclareMathOperator*{\argmax}{arg\,max}
\newtheorem{thm}{Theorem}
\newtheorem{assumption}{Assumption}[thm]
\newenvironment{customthm}[1]
  {\innercustomthm}
  {\endinnercustomthm}
\newenvironment{customprop}[1]
  {\innercustomprop}
  {\endinnercustomprop}
\newenvironment{customcoro}[1]
  {\innercustomcoro}
  {\endinnercustomcoro}
\newenvironment{customlemma}[1]
  {\innercustomlemma}
  {\endinnercustomlemma}
 \newcommand*\samethanks[1][\value{footnote}]{\footnotemark[#1]}
\def\@fnsymbol#1{\ensuremath{\ifcase#1\or *\or \mathparagraph\or
   \dagger\or \mathparagraph\or \|\or **\or \dagger\dagger
   \or \ddagger\ddagger \else\@ctrerr\fi}}
\def\##1\#{\begin{align}#1\end{align}}
\def\$#1\${\begin{align*}#1\end{align*}}
\def\spacingset#1{\renewcommand{\baselinestretch}%
{#1}\small\normalsize} \spacingset{1}
\begin{document}

\pagenumbering{arabic}
%
%
\title{\bf Quasi-optimal Reinforcement Learning with Continuous Actions}
\author[a]{Yuhan Li\thanks{The first two authors contributed equally to this work.}}
\author[b]{Wenzhuo Zhou\samethanks[1]\thanks{Correspondence to: Wenzhuo Zhou  \href{mailto:wenzhuz3@uci.edu}{<{\color{blue}wenzhuz3@uci.edu}>} }}
\author[a]{Ruoqing Zhu}
\affil[a]{Department of Statistics, University of Illinois Urbana-Champaign}
\affil[b]{Department of Statistics, University of California Irvine}
 \date{}
\maketitle 

\baselineskip=21pt

%



\begin{abstract}  
Many real-world applications of reinforcement learning (RL) require making decisions in continuous action environments. In particular, determining the optimal dose level plays a vital role in developing medical treatment regimes. One challenge in adapting existing RL algorithms to medical applications, however, is that the popular infinite support stochastic policies, e.g., Gaussian policy, may assign riskily high dosages and harm patients seriously. Hence, it is important to induce a policy class whose support only contains near-optimal actions, and shrink the action-searching area for effectiveness and reliability. To achieve this, we develop a novel \emph{quasi-optimal learning algorithm}, which can be easily optimized in off-policy settings with guaranteed convergence under general function approximations. Theoretically, we analyze the consistency, sample complexity, adaptability, and  convergence of the proposed algorithm. We evaluate our algorithm with comprehensive simulated experiments and a dose suggestion real application to Ohio Type 1 diabetes dataset.
\end{abstract}

\section{Introduction}\label{intro}
Learning good strategies in a continuous action space is important for many real-world problems \citep{lillicrap2015continuous}, including precision medicine, autonomous driving, etc. In particular, when developing a new dynamic regime to guide the use of medical treatments, it is often necessary to decide the optimal dose level \citep{murphy2003optimal,laber2014dynamic,chen2016personalized,zhou2021parsimonious}. In infinite horizon sequential decision-making settings \citep{luckett2019estimating,shi2021statistical}, learning such a dynamic treatment regime falls into a reinforcement learning (RL) framework. Many RL algorithms \citep{mnih2013playing,silver2017mastering,nachum2017bridging,chow2018path,hessel2018rainbow} have achieved considerable success when the action space is finite. A straightforward approach to adapting these methods to continuous domains is to discretize the continuous action space. However, this strategy either causes a large bias in coarse discretization \citep{lee2018deep,cai2021deep} or suffers from the the curse of dimensionality \citep{chou2017improving} for fine-grid.



There has been recent progress on model-free reinforcement learning in continuous action spaces without utilizing discretization. In policy-based methods \citep{williams1992simple,sutton1999policy,silver2014deterministic,duan2016benchmarking}, a Gaussian distribution is used frequently for policy distribution representation, while its mean and variance are parameterized using function approximation and updated via policy gradient descent. In addition, many actor-critic based approaches, e.g., soft actor-critic \citep{haarnoja2018softac}, ensemble critic \citep{fujimoto2018addressing} and Smoothie \citep{nachum2018smoothed}, have been developed to improve the performance in continuous action spaces. These works target to model a Gaussian policy for action allocations as well.  

However, there are two less-investigated issues in the aforementioned RL approaches, especially for their applications in the healthcare\citep{fatemi2021medical,yu2021reinforcement}. First, existing methods that use an infinite support Gaussian policy as the treatment policy may assign arbitrarily high dose levels, which may potentially harm the patient \citep{yanase2020harm}. Hence, these approaches are not reliable in practice due to safety and ethical concerns. It would be more desirable to develop a policy class to identify the near-optimal \citep{tang2020clinician}, or at least safe, action regions, and reduce the optimal action search area for reliability and effectiveness. Those actions out of the identified region are discriminated as non-optimal, and would be screened out with zero densities in the policy distribution.
Second, for many real-world applications, the action spaces are bounded due to practical constraints. Examples include autonomous driving with a limited steering angle and dose assignment with a budget or safety constraint. In these scenarios, modeling an optimal policy by an infinite support probability distribution, e.g., Gaussian policy, would inevitably introduce a non-negligible off-support bias as shown in Figure \ref{q-gauss}. In consequence, the off-support bias damages the performance of policy learning and results in a biased decision-making procedure. Instead, constructing a policy class with finite but adjustable support might be one of the demanding solutions. 





In this work, we take a substantial step towards solving the aforementioned issues by developing a novel \emph{quasi-optimal learning} algorithm. Our development hinges upon a novel quasi-optimal Bellman operator and stationarity equation, which is solved via minimizing an unbiased kernel embedding loss. Quasi-optimal learning estimates an implicit stochastic policy distribution whose support region only contains near-optimal actions. In addition, our algorithm overcomes the difficulties of the non-smoothness learning issue and the double sampling issue \citep{baird1995residual}, and can be easily optimized using sampled transitions in off-policy scenarios without training instability and divergence. The main contribution of this paper can be summarized as follows:
\vspace{-2mm}
\begin{itemize}
\item We construct a novel Bellman operator and develop a reliable stochastic policy class, which is able to identify quasi-optimal action regions in scenarios with a bounded or unbounded action space. This address the shortcomings of existing approaches relying on modeling an optimal policy with infinite support distributions. 
\item We formalize an unbiased learning framework for estimating the designed quasi-optimal policy. Our framework avoids the double sampling issue and can be optimized using sampled transitions, which is beneficial in offline policy optimization tasks.
\item We thoroughly investigate the theoretical properties of the quasi-optimal learning algorithm, including the adaptability of the quasi-optimal policy class, the loss consistency, the finite-sample bound for performance error, and the convergence analysis of the algorithm.  
\item  Empirical analyses are conducted with comprehensive numerical experiments and a real-world case study, to evaluate the model performance in practice. 
\end{itemize}

 \vspace{-2mm}
\section{Related Works}
\vspace{-2mm}
In this work, we propose a provably convergent and sample efficient off-policy optimization algorithm. Our learning algorithm is trained in a fully offline fashion, without any future online interaction with the environment. This connects our work to offline RL algorithms \citep{lange2012batch}. The domain approaches of offline RL include  fitted Q-iteration
(FQI; \citet{ernst2005tree,riedmiller2005neural,munos2008finite,szepesvari2010algorithms} ), fitted policy iteration \citep{antos2007value, lagoudakis2003least,scherrer2012approximate},
Bellman Residual Minimization (BRM;
\citet{antos2008learning,hoffman2011regularized,farahmand2016regularized, dai2018sbeed,chen2019information, xie2020q}, gradient Q-learning \citep{maei2010toward,ertefaie2018constructing}, and Advantage learning \citep{murphy2003optimal,shi2018high,shi2022statistically}. We refer the reader to \cite{levine2020offline} for more comprehensive discussions on the topics of the offline RL. In the aforementioned mainstreams of works, ours is closely related to the Bellman Residual Minimization. 
They learn the value function by solving a nested optimization problem, where the function space used for the inner and outer optimization must be the same. From the perspective of the couple optimization, their inner optimization plays a similar role as the inner maximization of the min-max framework. In addition to the fundamental difference in derivation, our min-max optimization can be reduced to a single minimization problem aided by the kernel representation, while they have to solve an unstable minimax optimization problem. Most importantly, our quasi-optimal learning framework provides a practical way to learn a reliable policy in continuous action space via quasi-optimal region identifications. To the best of our knowledge, no existing RL algorithms can achieve this.



Algorithmically, our work is related to the entropy-regularized reinforcement learning algorithms \citep{rawlik2012stochastic,haarnoja2017reinforcement}, but these works are fundamentally different from ours. 
Our formulation is motivated by constructing a proximal counterpart of the Bellman operator, which serves as a basis for the latter quasi-oracle learning algorithm. Besides, the major drawback of the existing algorithms \citep{lee2018sparse,chow2018path,vieillard2020munchausen} is the lack of theoretical guarantees when accompanied by function approximation. It is not clear whether the algorithm is convergent, generalizable, and consistent. In contrast, our algorithm is thoroughly examined on both theoretical and empirical fronts. \cite{nachum2017bridging,chow2018path} exploit an analogous stationarity condition as in Theorem \ref{tempconsis} and minimize the upper bound of the error, which is biased and encounters double sampling issue. In contrast, our work leverages the kernel embedding to bypass the double sampling issue, and is provably consistent. Unlike our algorithm, the algorithms in continuous control problems, e.g., \citep{haarnoja2018softac,nachum2018trust,lee2019tsallis} do not check the policy optimality, but separately model a pre-specified policy class. This may introduce an additional bias if the pre-specified policy class is misspecified.

Our approach exemplifies more recent efforts that aim to learn optimal policy with continuous actions \citep{lillicrap2015continuous}. One of our key innovations is to develop a policy class that can identify quasi-optimal sub-regions and the induced policy has a closed-form regarding value function. This distinguishes us from the approaches, e.g., \citep{silver2014deterministic,mnih2016asynchronous,kumar2019stabilizing,kumar2020conservative}. These methods typically require prior knowledge to determine pre-specified policy class and commonly use Gaussian family distribution, but unfortunately facing the risk from off-support bias. 

Our work is also relevant to safe/risk-sensitive RL. When the risk measure is defined based on the reward, e.g., the quantile of return, it draws connections to our algorithm. Given potential application scenarios, quasi-optimal learning is also related to RL in healthcare domain. \cite{tang2020clinician} constructs set-valued policies of near-optimal actions allowing the interaction
between the clinician and the decision support
system. However, their method is not applicable in a fully offline setting. \cite{fatemi2021medical} assesses regions of risk and identifies treatments
to avoid in a safety-critical environment. Nevertheless, near-optimal regret guarantee is vacuous in their framework. We provide a detailed discussion on safe and healthcare RL in Appendix.
 
\section{Preliminaries}

\textbf{Notations} \; We first give an introduction to our notations. For two strictly positive sequences $\{\Psi(m)\}_{m\geq 1}$ and $\{\Upsilon(m)\}_{m\geq 1}$, the notation $\{\Psi(m)\}_{m\geq 1} \lesssim \{\Upsilon(m)\}_{m\geq 1}$ means that there exists a sufficiently small constant $c \geq 0$ such that $\Psi(n) \leq c\Upsilon(n)$. $\|\cdot\|_{L^{p}}$ and $\|\cdot\|_{\infty}$ denote the $L^{p}$ norm and supremum-norm, respectively. We define the set indicator function $\mathds{1}_{\text{set}}(x) = 1$ if $x \in \text{set}$ or 0 otherwise. The notation $\mathbb{P}_{n}$ denotes the empirical measure i.e., $\mathbb{P}_{n} = \frac{1}{n}\sum^{n}_{i=1}$. For two sets $\aleph_0$ and $\aleph_1$, the notation $\aleph_0 \setminus \aleph_1$ indicates that the set $\aleph_0$ excluding the elements in the set $\aleph_1$. We write  $|\aleph_0|$ as the cardinality of the set $\aleph_0$. For any Borel set $\aleph_2$, we denote $\sigma(\aleph_2)$ as the Borel measure of $\aleph_2$. We  denote a probability simplex over a space $\mathcal{F}$ by $\Delta(\mathcal{F})$, and in particular,  $\Delta_{\text{convex}}(\mathcal{F})$ indicates the convex probability simplex over $\mathcal{F}$. We denote $\lfloor \cdot \rfloor$ as the floor function, and use $\mathcal{O}$ as the convention. 

\noindent \textbf{Background} \; A Markov decision process (MDP) is defined as a tuple $<\mathcal{S},\mathcal{A},\mathbf{P},R,\gamma>$, where $\mathcal{S}$ is the state space, $\mathcal{A}$ is the action space, $\mathbf{P}:\mathcal{S}\times \mathcal{A} \to \Delta(\mathcal{S})$ is the unknown transitional kernel, $R: \mathcal{S}\times \mathcal{S} \times \mathcal{A} \to \mathbb{R}$ is a bounded reward function, and $\gamma \in [0,1)$ is the discounted factor. In this paper, we focus on the scenario of continuous action space.
We assume the offline data consists of $n$ i.i.d. trajectories, i.e.,  $\mathcal{D}_{1:n}= \{S_i^1,A_i^1,R_i^1,S_i^2, \ldots ,S_i^{T_i},A_i^{T}, R_i^{T},S_i^{T+1}\}^n_{i=1}$, where the length of trajectory $T$ is assumed to be non-random for simplicity. A policy $\pi$ is a map from the state space to the action space $\pi: \mathcal{S}\to \mathcal{A}$. The learning goal is to search an optimal policy $\pi^*$ which maximizes the expected discounted sum of rewards.  $V_t^{\pi}(s)=\mathbb{E}_{\pi}\left[\sum^{\infty}_{k=1}\gamma^{k-1}R^{t+k}|S^t=s\right]$ is the value function under a policy $\pi$, where $\mathbb{E}_{\pi}$ is taken by assuming that the system follows a policy $\pi$, and the Q-function is defined as $Q^{\pi}_t(s,a)=\mathbb{E}_{\pi}\left[\sum^{\infty}_{k=1}\gamma^{k-1}R^{t+k}|S^t=s,A^t=a\right]$. In a time-homogenous
Markov process \citep{puterman2014markov}, $V_t^{\pi}(s)$ and $Q^{\pi}_t(s,a)$ do not depend on $t$. The optimal value function $V^{*}$ is the unique fixed point of the Bellman operator $\mathcal{B}$, $
    \mathcal{B}V(s) :=\max_{a}\mathbb{E}_{S^{t+1} \sim \mathbf{P}(s,a)}\left[R^t+\gamma V(S^{t+1})|S^t=s,A^t=a\right]$.
Then $\mathcal{B}V^{*}(s) = V^{*}(s)$ for any $s \in \mathcal{S}$.  An optimal policy $\pi^{*}$ can be obtained by taking the \textit{greedy} action of $Q^{*}(s,a)$, that is ${\pi}^{*}(s) = \arg \max_{a} Q^{*}(s,a)$. For the rest of the paper, we use the short notation $\mathbb{E}_{s^{\prime}|s,a}$ for the conditional expectation $\mathbb{E}_{s^{\prime} \sim \mathbf{P}(s,a)}$; and $\mathbb{E}_{S^{t},A^{t},S^{t+1}}$ is short for $\mathbb{E}_{S^{t}\sim \upsilon ,A^{t}\sim \pi_{b}(\cdot|S^{t}),S^{t+1}\sim\mathbf{P}(S^{t},A^{t})}$, where $\upsilon$ is a  some fixed distribution and $\pi_{b}$ is some behavior policy.

\section{Methodology} \label{method}
To start with, we first revisit the Bellman optimality equation via a policy explicit view, 
\begin{equation}
\mathcal{B}V^{*}(s) \coloneqq \max _{\pi}  \mathbb{E}_{a \sim \pi(\cdot|s), \ {S}^{t+1} | s,a}\left[R(S^{t+1},s,a) + \gamma V^{*}({S}^{t+1})\right] = V^{*}(s).
\label{explicit_bellman}
\end{equation}
To obtain 
the optimal policy $\pi^{*}$ and value function $V^{*}$, an optimization idea is to minimize the discrepancy between the two sides of the equation under a $L^{2}$ loss. Unfortunately, there are several major challenges when it comes to optimization: (1) \textbf{Non-smoothness}: the Bellman operator involves a non-smoothed hard-$\max$ operator, which leads to training instability; (2) \textbf{Policy class}: As discussed in Section \ref{intro}, it is necessary to induce an optimal policy class whose support consists of quasi-optimal sub-regions for reliability, and avoids off-support bias in Figure \ref{q-gauss};  (3) \textbf{Double sampling}: the unknown conditional expectation $\mathbb{E}_{{S}^{t+1} | s,a}$ is required to be double sampled for obtaining an unbiased sample approximation for $\mathbb{E}_{{S}^{t+1} | s,a}$. However, this is usually infeasible in real-world environments;
(4) \textbf{Off-policy data}: directly minimizing the Bellman error is not easy to incorporate off-policy data. To address these issues, we propose a quasi-optimal counterpart of the Bellman equation \eqref{explicit_bellman}.



\subsection{Quasi-optimal Bellman Operator}
\label{proxoper}
In this subsection, we aim to tackle the first two challenges. We propose a quasi-optimal counterpart for the Bellman operator $\mathcal{B}$ that simultaneously circumvents the non-smoothness obstacles, and induce a novel policy class which can identify quasi-optimal sub-regions in continuous action spaces. 

We leverage the Legendre-Fenchel transform \citep{hiriart2012fundamentals} on the Bellman operator $\mathcal{B}$. For a convex probability simplex $\Delta_{\text{convex}}(\mathcal{A})$ and a strongly convex and continuous proximity function $\text{prox}(\pi): \Delta_{\text{convex}}(\mathcal{A}) \to \mathbb{R}$,
the Fenchel transform counterpart of $\mathcal{B}$ is defined as
\begin{align}
 \mathcal{B}_{\mu}V_{\mu}^{*}(s) &=\max_{\pi \in \Delta_{\text{convex}}(\mathcal{A})}\int_{a\in \mathcal{A}}\big[Q_{\mu}^{*}(s,a)\pi(a|s)+\mu \text{prox}(\pi(a|s)) 
 \big] da, \label{bellprox}
\end{align}
where $Q_{\mu}^{*}(s,a)=  \mathbb{E}_{S^{t+1}|s,a}[R(S^{t+1},s,a)+\gamma V_{\mu}^{*}(S^{t+1})]$ , and $V_{\mu}^{*}(s)$ is the unique fixed point of the quasi-optimal Bellman operator $\mathcal{B}_{\mu}$. Note that, besides the smoothing purpose, we are also interested in constructing a stochastic optimal policy class that can screen out the non-optimal and sub-optimal actions. Therefore, we further define a special $\text{prox}$  function class motivated by the rationale of $q$-logarithm as  $\text{prox}(x)=\log_{q}(x):=\frac{x(1-x^{q-1})}{q-1}$, where $\int_{a\in \mathcal{A}} \text{prox}(\pi(a|s)) da=\frac{1}{q-1}(1-\int_{a\in \mathcal{A}} \pi^q(a|s)da) $ essentially generalize the Shannon’s entropy \citep{martins2020sparse}. In this paper, we focus on the setting that $q=2$.
\begin{assumption}\label{density_bd}
For any policy distribution $\pi \in \Delta_{\text{convex}}(\mathcal{A})$, 
its density is bounded above by a constant, i.e., $\pi(\cdot|s)\leq \mathbf{C}$ for all $s\in \mathcal{S}$. 
\end{assumption}

This assumption avoids some extreme cases where a stochastic policy distribution degenerates to be deterministic. In the following, we show several nice properties of the proposed Bellman operator.

\textbf{Proximal Approximation} \;   The operator $\mathcal{B}_{\mu}$ is a proximal approximation to $\mathcal{B}$. This delivers two messages: firstly, the approximation bias is upper bounded; secondly, the operator $\mathcal{B}_{\mu}$ is a smoothed substitute for $\mathcal{B}$. In particular, Theorem \ref{approx} demonstrates that the approximation bias can vanish to zero for small enough $\mu$. In addition, the  operator $\mathcal{B}_{\mu}$ has a differentiable and analytical form \eqref{vfunc}, 
\begin{equation}
 \mathcal{B}_{\mu}V_{\mu}^{*}(s) = \mu - \frac{1}{4\mu}\left(\frac{(\int_{a^{\prime}\in \mathcal{W}_{s}} Q_{\mu}^{*}(s,a^{\prime})da^{\prime}-2\mu)^2}{ \sigma(\mathcal{W}_{s})} - \int_{a\in \mathcal{W}_{s}} {Q_{\mu}^{*}}^2(s,a)da\right) ,
\label{vfunc}
\end{equation}
where $\mathcal{W}_{s}$ denotes the the support of $\pi^{*}_{\mu}$ in \eqref{pdfpi} for a given state $s$. This justifies that  $\mathcal{B}_{\mu}$ is a smoothed counterpart of $\mathcal{B}$, see Corollary \ref{closed-form bellman} in Appendix for details.
\begin{thm}[Proximal bias]
\label{approx}
Under Assumption \ref{density_bd}, for any $s\in \mathcal{S}$ and value function $V$, 
$
\mathcal{B}_{\mu}V(s)-\mathcal{B}V(s)\in [\mu(1-\mathbf{C}),\mu].
$
\end{thm}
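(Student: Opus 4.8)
The plan is to isolate the sole difference between the two operators---the proximity term---bound it uniformly over the admissible policy class, and only then pass to the maximum. First I would specialize the proximity function to the paper's setting $q=2$, for which $\text{prox}(x)=x(1-x)=x-x^2$. Then for any $\pi\in\Delta_{\text{convex}}(\mathcal{A})$, since $\int_{a\in\mathcal{A}}\pi(a|s)\,da=1$, the proximity integral collapses to $\int_{a\in\mathcal{A}}\text{prox}(\pi(a|s))\,da=1-\int_{a\in\mathcal{A}}\pi^2(a|s)\,da$. Writing $J(\pi):=\int_{a\in\mathcal{A}}Q_V(s,a)\pi(a|s)\,da$ for the linear part and $H(\pi):=\mu\big(1-\int_{a\in\mathcal{A}}\pi^2(a|s)\,da\big)$ for the proximity part, where $Q_V(s,a)=\mathbb{E}_{S^{t+1}|s,a}[R(S^{t+1},s,a)+\gamma V(S^{t+1})]$, the operators read $\mathcal{B}V(s)=\max_\pi J(\pi)$ and, from \eqref{bellprox}, $\mathcal{B}_\mu V(s)=\max_\pi\big[J(\pi)+H(\pi)\big]$.

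The core estimate is a pointwise two-sided bound on $H(\pi)$ valid for every admissible $\pi$. For the upper side, $\int_{a\in\mathcal{A}}\pi^2(a|s)\,da\ge 0$ gives $H(\pi)\le\mu$ at once. For the lower side, Assumption \ref{density_bd} supplies $\pi(a|s)\le\mathbf{C}$, so $\int_{a\in\mathcal{A}}\pi^2(a|s)\,da\le\mathbf{C}\int_{a\in\mathcal{A}}\pi(a|s)\,da=\mathbf{C}$, whence $H(\pi)\ge\mu(1-\mathbf{C})$. Together these give $H(\pi)\in[\mu(1-\mathbf{C}),\mu]$, uniformly in $\pi$.

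It then remains to transfer this uniform sandwich through the maximization. Because $J(\pi)+\mu(1-\mathbf{C})\le J(\pi)+H(\pi)\le J(\pi)+\mu$ holds for every $\pi$, and because taking a maximum is monotone and commutes with the addition of a constant, applying $\max_\pi$ across the chain yields $\mathcal{B}V(s)+\mu(1-\mathbf{C})\le\mathcal{B}_\mu V(s)\le\mathcal{B}V(s)+\mu$; rearranging is precisely the claim.

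The step I expect to demand the most care is the consistency of the feasible set across the two maximizations, since the final sandwich is valid only when $\max_\pi J(\pi)$ is taken over the same convex simplex $\Delta_{\text{convex}}(\mathcal{A})$ of densities bounded by $\mathbf{C}$ used in \eqref{bellprox}. Concretely, for the lower inequality one effectively invokes the $\mathcal{B}$-optimal policy and must ensure it is admissible (its density respects the bound $\mathbf{C}$), so that subtracting a constant from the objective and re-maximizing returns exactly a shifted $\mathcal{B}V(s)$ rather than a strictly smaller density-constrained surrogate of the greedy value $\max_a Q_V(s,a)$ in \eqref{explicit_bellman}. Reconciling the policy-explicit maximum with the hard-$\max$ form of $\mathcal{B}$ under the density constraint is the only genuinely delicate point; the remaining inequalities are elementary.
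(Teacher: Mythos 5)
Your proof is correct and is essentially the paper's own argument: the paper likewise writes the prox contribution as $\mathbb{E}_{a\sim\pi}[\mu(1-\pi(a|s))]$, bounds it uniformly in $[\mu(1-\mathbf{C}),\mu]$ using $0\le\pi(a|s)\le\mathbf{C}$, and passes the resulting sandwich through the maximization over $\pi$. The feasible-set subtlety you flag at the end --- identifying $\max_{\pi\in\Delta_{\text{convex}}(\mathcal{A})}\mathbb{E}_{a\sim\pi}[Q(s,a)]$ with $\mathcal{B}V(s)$ when the densities are capped by $\mathbf{C}$ --- is a genuine point, but the paper's proof makes the same identification silently, so on that step you are, if anything, more careful than the original.
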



\textbf{Quasi-optimal Support Region} \;  In addition to the proximal approximation property, another unique and important property of $\mathcal{B}_{\mu}$ is inducing a policy $\pi^{*}_{\mu}$ whose support region contains all the actions with action-value higher than a certain threshold. The induced policy $\pi^{*}_{\mu}$ is bridged from the oracle Q-function:
\begin{equation}
    \pi^{*}_{\mu}(a|s)=\bigg(\frac{Q_{\mu}^{*}(s,a)}{2\mu}-\frac{\int_{a\in \mathcal{W}_{s}} Q_{\mu}^{*}(s,a)da}{2\mu\sigma(\mathcal{W}_{s})}+\frac{1}{\sigma(\mathcal{W}_{s})}\bigg)^+,
\label{pdfpi}
\end{equation}
where the support of $\pi^{*}_{\mu}$, i.e., $\mathcal{W}_{s} := \bigcup_{a\in \mathcal{A}}a \mathds{1}_{\texttt{screening set}}(a)$ with
\begin{align}
\texttt{screening set}:=&\left\{a\in \mathcal{A}:  \int_{a^{\prime}\in \mathcal{M}_{s}(a) }Q_{\mu}^{*}(s,a^{\prime})da^{\prime}-\sigma(\mathcal{M}_{s}(a))Q_{\mu}^{*}(s,a)>2\mu\right\}\label{support},  \\
 \mathcal{M}_{s}(a) := & \bigcup_{a^{\prime}\in \mathcal{A}}a^{\prime}\mathds{1}_{\{Q_{\mu}^{*}(s,a^{\prime})>Q_{\mu}^{*}(s,a)\}}(a^{\prime}).
\end{align}
\vspace{-4mm}

This mechanism allows us to identify multiple  sub-regions in the entire action space which only contains near-optimal actions, and weed out the sub-optimal and non-optimal
support regions. Note that, the identified sub-region might not be joint in general, which is beneficial to the situation that the true Q-function has multiple modes. 
The screening set in \eqref{support} indicates that the threshold parameter $\mu$ not only controls the degree of smoothness, but also determines how the quasi-optimal region behaves and controls the screening intensity, as shown in Figure \ref{screen}. 



\begin{figure}
\centering
\begin{minipage}[t]{0.48\textwidth}
\centering
\includegraphics[width=8cm]{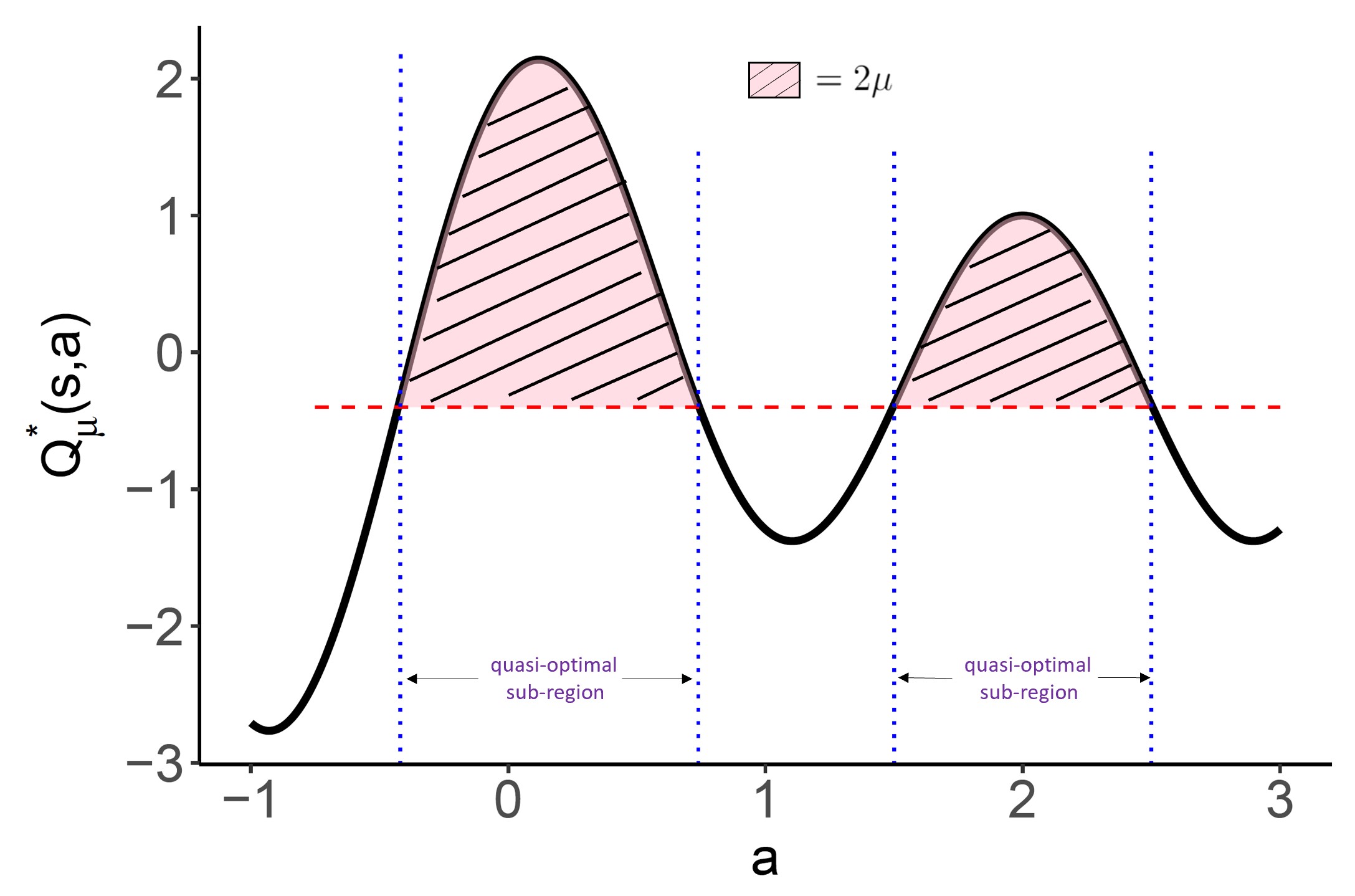}
\end{minipage}
\begin{minipage}[t]{0.48\textwidth}
\includegraphics[width=8cm]{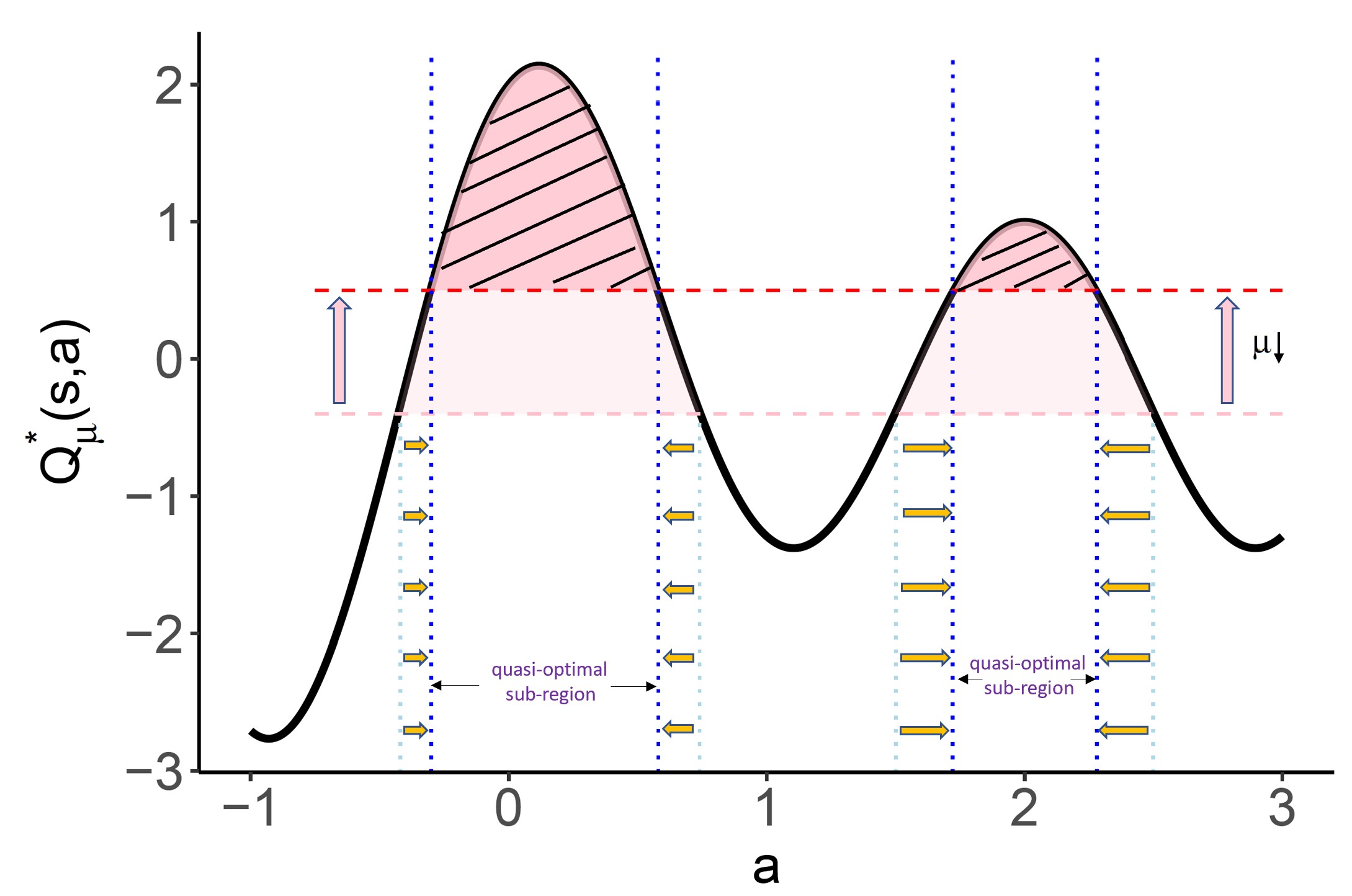}
\end{minipage}
\caption {An illustrating example of the quasi-optimal sub-regions. In the left panel, the lowest admissible action-value corresponds to the horizontal red dashed line, and the integral difference is the shadowed pink area, which equals $2\mu$. As shown in the right panel, when $\mu$ decreases, the pink area shrinks, and the quasi-optimal sub-regions become narrower.}
   \label{screen}
\end{figure}

\subsection{$q$-Gaussian Policy Distribution}

In this section, we bridge the induced policy distribution $\pi^{*}_{\mu}$ to an explainable $q$-Gaussian distribution. The $q$-Gaussian distribution is less favored for heavy tails, which makes it  
widely used in practice to model the effect of external stochasticity  \citep{d2013bounded}. In continuous actions problems, e.g., medical dose suggestion, the $q$-Gaussian distribution is a more suitable choice than the Gaussian distribution for policy modeling, since it can filter out non-optimal and risky dose levels, i.e., too high or too low dosage.

\begin{figure}
    \centering   
    \includegraphics[width=0.7\textwidth]{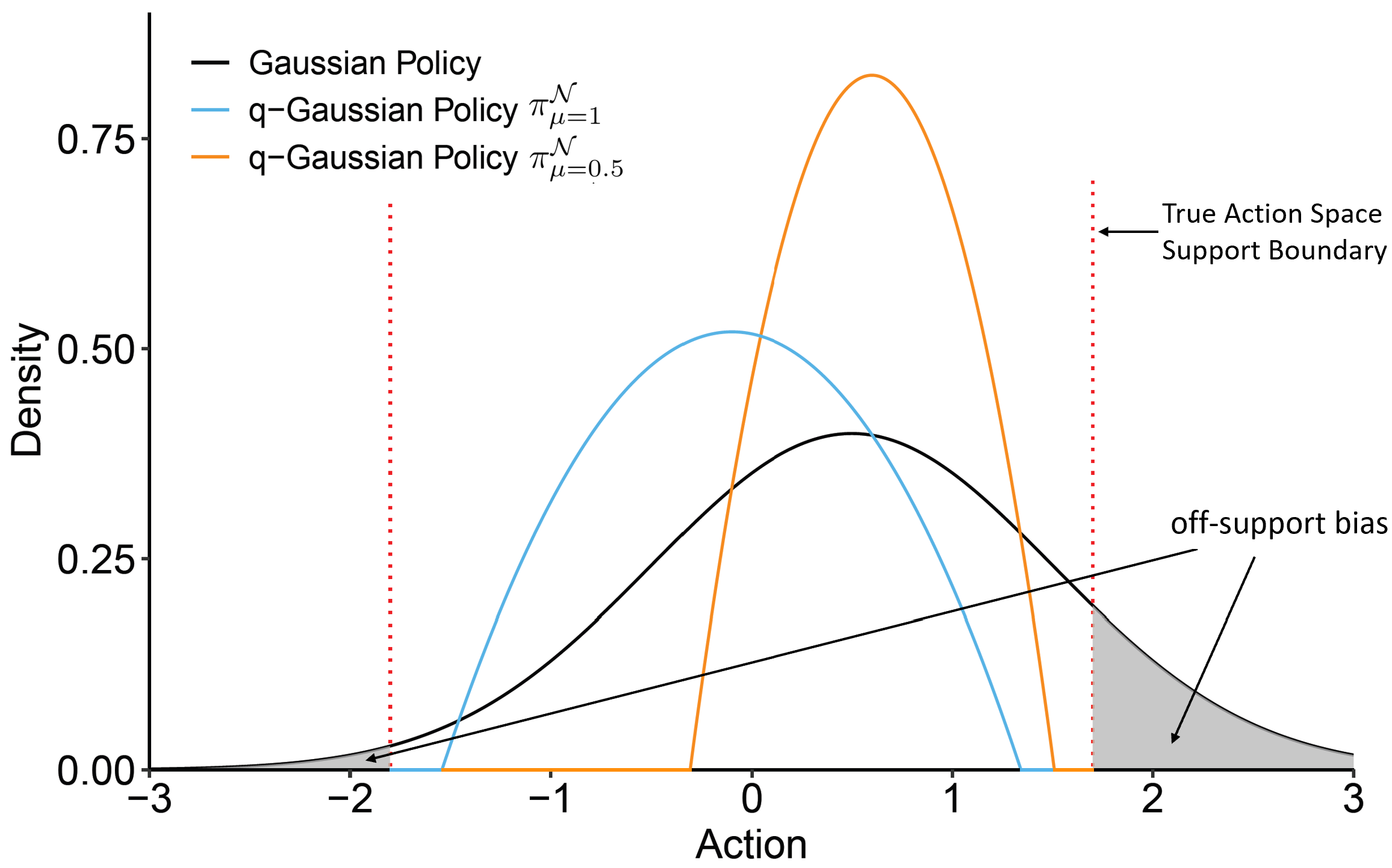}
    \caption {An illustrating example of bounded action space and q-Gaussian policy distribution.
   The Gaussian policy assigns non-zero probabilities density to all actions, even for those actions outside of the true action space support boundary. This causes the off-support bias. In contrast, the q-Gaussian policy relieves such off-support bias blessed by the boundedness of the quasi-optimal region.}
\label{q-gauss} 
\end{figure}

Motivated by the fact that the induced policy $\pi^{*}_{\mu}$ is feasible to identify quasi-optimal support sub-regions, and $q$-Gaussian policy distribution can realize bounded support in Figure \ref{q-gauss}, we conjectured that the $q$-Gaussian policy distribution might be recovered from the induced policy $\pi^{*}_{\mu}$. Fortunately, the $q$-Gaussian policy distribution is indeed a special case of the induced policy if   $Q_{\mu}^{*}(s,a)$ is a concavely quadratic function with respect to the action $a$. We illustrate this phenomenon in Theorem \ref{qgau}.



\begin{thm}\label{qgau}
Suppose $Q_{\mu}^{*}(s,a)$ is a concavely quadratic function over $a \in \mathcal{A}$, i.e.,  $Q_{\mu}^{*}(s,a)=-\alpha_1(s)a^2+\alpha_2(s)a+\alpha_3(s):=Q_{\mu}^{\mathcal{N}}(s,a)$ where $\alpha_1(s),\alpha_2(s),\alpha_3(s)$ are functions over $s \in \mathcal{S}$ and $\alpha_1(s)> 0$ for all $s$, then the induced policy distribution $\pi^{*}_{\mu}(\cdot|s)$ would follow a $q$-Gaussian distribution with a density function 
\begin{equation}
    \pi^{*}_{\mu}(a|s)=\left(\frac{\alpha_1(s)}{2\mu}\Big(a+\frac{\alpha_2(s)}{2\alpha_1(s)}\Big)^2-\frac{3}{2}\Big(\frac{\alpha_1(s)}{12\mu}\Big)^{\frac{1}{3}}\right)^{+} := \pi^{\mathcal{N}}_{\mu}(a|s),
\label{qgaupi}
\end{equation}
and a closed-form quasi-optimal support  region
\begin{equation}
\mathcal{W}_s =  \left[\frac{\alpha_2(s)-(12\alpha_1^2(s)\mu)^{\frac{1}{3}}}{2\alpha_1(s)},\frac{\alpha_2(s)+(12\alpha_1^2(s)\mu)^{\frac{1}{3}}}{2\alpha_1(s)}\right] := \mathcal{W}^{\mathcal{N}}_s.
\label{qgausupp}
\end{equation}
\end{thm}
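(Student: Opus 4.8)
The plan is to specialize the general induced policy \eqref{pdfpi} and its support set \eqref{support} to the concave-quadratic $Q_\mu^*$ and show that every integral there collapses to a closed form. Throughout I would work in the centered coordinate $u := a - \frac{\alpha_2(s)}{2\alpha_1(s)}$, so that $Q_\mu^{\mathcal N}(s,a) = -\alpha_1(s)\,u^2 + Q_{\max}(s)$ is even in $u$ with unique maximizer $a^*(s) = \frac{\alpha_2(s)}{2\alpha_1(s)}$ and $Q_{\max}(s) = \frac{\alpha_2^2(s)}{4\alpha_1(s)} + \alpha_3(s)$. Because $Q_\mu^{\mathcal N}(s,\cdot)$ is strictly concave, for an action at signed distance $u$ from the vertex the superlevel set $\mathcal M_s(a) = \{a' : Q_\mu^*(s,a') > Q_\mu^*(s,a)\}$ is exactly the symmetric interval $(a^*(s)-|u|,\,a^*(s)+|u|)$, with $\sigma(\mathcal M_s(a)) = 2|u|$. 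This unimodality is what forces the support to be a single interval rather than a union.

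Next I would evaluate the screening functional in \eqref{support}. Integrating the parabola over the symmetric interval, the two $Q_{\max}(s)$ contributions cancel while the cubic terms combine, leaving the clean cubic
\[
\int_{\mathcal M_s(a)} Q_\mu^*\,da' - \sigma(\mathcal M_s(a))\,Q_\mu^*(s,a) = \tfrac{4}{3}\,\alpha_1(s)\,|u|^3 .
\]
Since this is strictly increasing in $|u|$, the screening condition determines a single threshold $|u| = \big(\tfrac{3\mu}{2\alpha_1(s)}\big)^{1/3}$, which is exactly the half-width $\tfrac{(12\alpha_1^2(s)\mu)^{1/3}}{2\alpha_1(s)}$ appearing in \eqref{qgausupp}; this yields $\mathcal W_s = \mathcal W_s^{\mathcal N}$. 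I would double-check here that the inequality picks out the near-optimal band around $a^*(s)$ (small advantage mass of better actions) rather than its complement.

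With $\mathcal W_s$ explicit, writing $w := \big(\tfrac{3\mu}{2\alpha_1(s)}\big)^{1/3}$ I would compute $\sigma(\mathcal W_s) = 2w$ and the mean value $\tfrac{1}{\sigma(\mathcal W_s)}\int_{\mathcal W_s} Q_\mu^*\,da' = Q_{\max}(s) - \tfrac{1}{3}\alpha_1(s)\,w^2$. Substituting these and $Q_\mu^{\mathcal N}(s,a)$ into \eqref{pdfpi}, the $Q_{\max}(s)$ terms cancel against $\tfrac{Q_\mu^*(s,a)}{2\mu}$ and the leftover constants combine, via $w^3 = \tfrac{3\mu}{2\alpha_1(s)}$, as $\tfrac{\alpha_1(s)w^2}{6\mu} + \tfrac{1}{2w} = \tfrac{3}{4w}$. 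This leaves a parabola in $u$ truncated at its positive part, centered at the maximizer $a^*(s)$, which is exactly the $q$-Gaussian density \eqref{qgaupi}. As a final consistency check I would verify $\int_{\mathcal W_s}\pi_\mu^*\,da = 1$, which follows because $\tfrac{\alpha_1(s)w^3}{3\mu} = \tfrac12$, confirming $\pi_\mu^*$ is a bona fide probability density.

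The computation is largely bookkeeping, and the one step deserving real care is the second: verifying the cubic cancellation that makes the boundary condition exactly solvable, and confirming that the screening inequality selects the interval centered at $a^*(s)$. Everything else is substitution and simplification, with Assumption \ref{density_bd} ensuring that $\pi_\mu^*$ stays nondegenerate so that the positive-part truncation in \eqref{pdfpi} is the only binding constraint.
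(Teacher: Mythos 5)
Your proof is correct, and it reaches the result by a route that differs from the paper's in the one step that matters: how the half-width of the support is pinned down. The paper's proof never touches the screening-set characterization \eqref{support}; it posits, by symmetry of the quadratic, that the support is an interval $[y-l,\,y+l]$ centered at the vertex $y=\alpha_2/(2\alpha_1)$, and then determines $l$ by requiring the expression inside the positive part of \eqref{pdfpi} to vanish at the endpoints, i.e.\ $Q_\mu^*(s,y\pm l)=\frac{1}{2l}\int_{y-l}^{y+l}Q_\mu^*\,da-\mu/l$, which yields $l^3=3\mu/(2\alpha_1)$. You instead evaluate the advantage functional in \eqref{support} in closed form, $\int_{\mathcal M_s(a)}Q_\mu^*\,da'-\sigma(\mathcal M_s(a))Q_\mu^*(s,a)=\frac{4}{3}\alpha_1|u|^3$, and obtain the same half-width by setting it equal to $2\mu$. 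Both routes reduce to the identical cubic, but yours verifies the main-text definition of $\mathcal W_s$ directly and makes the interpretation of $2\mu$ as an advantage-mass threshold explicit, whereas the paper's is a continuity argument on the density formula itself; your route is also forced to confront---and correctly resolves---the fact that \eqref{support} as literally written (with ``$>2\mu$'') describes the screened-out complement rather than the support. From there the two proofs coincide: compute the mean of $Q_\mu^*$ over $\mathcal W_s$, substitute into \eqref{pdfpi}, and collapse the constants via $\frac{\alpha_1 w^2}{6\mu}+\frac{1}{2w}=\frac{3}{4w}=\frac{3}{2}\big(\frac{\alpha_1}{12\mu}\big)^{1/3}$; your closing normalization check $\int_{\mathcal W_s}\pi_\mu^*\,da=1$ is a consistency step the paper omits. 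One remark: the truncated parabola you obtain, $\big(\frac{3}{2}(\frac{\alpha_1}{12\mu})^{1/3}-\frac{\alpha_1}{2\mu}(a-\frac{\alpha_2}{2\alpha_1})^2\big)^+$, is concave and centered at the maximizer, which is the internally consistent form; the display \eqref{qgaupi} as printed carries sign typos (an upward-opening parabola centered at $-\alpha_2/(2\alpha_1)$, inconsistent with \eqref{qgausupp} and with a $q$-Gaussian shape), so that discrepancy lies with the paper's statement, not with your derivation.
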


The policy distribution $\pi^{\mathcal{N}}_{\mu}(\cdot|s)$ behaves as a affine transformation of the standard $q$-Gaussian distribution with mean $-\frac{\alpha_2(s)}{2\alpha_1(s)}$, where the maximum action-value attains, i.e., $Q_{\mu}^{\mathcal{N}}(s,-\frac{\alpha_2(s)}{2\alpha_1(s)}) = \argmax_{a\in\mathcal{A}}Q_{\mu}^{\mathcal{N}}(s,a)$. Note that the width of the quasi-optimal region is $\frac{(12\alpha_1^2(s)\mu)^{\frac{1}{3}}}{\alpha_1(s)}$ determined by the threshold parameter $\mu$. The actions within the region $\mathbb{R}\setminus\mathcal{W}^{\mathcal{N}}_s$ are discriminated as the non-optimal and would be assigned with zero probability densities. For a small  $\mu$, i.e., strong screening intensity, a narrow region would be identified as the quasi-optimal, which yields a relatively conservative action recommendation. In contrast, with a large $\mu$, more actions are included in the support. In an extreme case, $\mathcal{W}^{\mathcal{N}}_s$ degenerates to $\mathbb{R}$ as $\mu \to \infty$. In Theorem \ref{lambda-sparse} of Section \ref{sec:theory}, we investigate how the intensity of $\mu$ affects the induced policy distribution formally. 

So far, we have obtained the closed-form representations for the general policy $\pi^{*}_{\mu}(\cdot|s)$ and $q$-Gaussian policy  $\pi^{\mathcal{N}}_{\mu}$. However, how to make a policy estimation remains unknown. Indicated by the challenges in Section \ref{method}, we need to address the double sampling issue and utilize off-policy data in optimization. Both challenges cannot be easily solved by minimizing the Bellman error. Fortunately, the kernel embedding helps us to bypass the difficulties.

\subsection{Kernel Embedding on Quasi-optimal Error}
In this subsection, we introduce the quasi-optimal learning framework for solving the induced policy $\pi^{*}_{\mu}$. First, we establish a stationary equation in Theorem \ref{tempconsis}. This helps to incorporate off-policy data. Then we leverage the idea of the kernel embedding \citep{gretton2012kernel} to obtain an unbiased empirical loss without the double sampling issue.

\begin{thm}[Stationarity equation]
Let $V_{\mu}^*$ be a fixed point of the quasi-optimal Bellman operator $\mathcal{B}_{\mu}$, and  $\pi_{\mu}^*$ is the induced policy in \eqref{pdfpi}. For any $s \in \mathcal{S}, a \in \mathcal{A}$, and $\mu \in (0, \infty)$, the pair $(V_{\mu}^*,\pi_{\mu}^*)$ satisfies the following equation:
\begin{equation}
\mathbb{E}_{S^{t+1}|s,a}\big[R(S^{t+1},s,a)+\gamma V_{\mu}(S^{t+1}) \big]-\mu{\normalfont \text{prox}^{\circ}}(\pi_{\mu}(a|s))-{\eta}(s)+\varpi(s,a)=V_{\mu}(s).
\label{tempcon}
\end{equation}
Here ${\normalfont \text{prox}^{\circ}}(x)=2x-1$, $\eta(s): \mathcal{S} \to [-\mu\mathbf{C},0]$ and $\varpi(s,a): \mathcal{S}\times\mathcal{A}\to \mathbb{R}^+$ are Lagrange multipliers that $\varpi(s,a)\cdot \pi_{\mu}(a|s)=0$.
The discrepancy between the two sides of \eqref{tempcon} is ``{quasi-optimal error}''.
\label{tempconsis}
\end{thm}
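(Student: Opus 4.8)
The plan is to read \eqref{tempcon} as the Karush--Kuhn--Tucker (KKT) optimality condition of the concave program that defines $\mathcal{B}_\mu$ in \eqref{bellprox}, and then to identify the multiplier $\eta$ explicitly. Fix $s$ and write $Q=Q_\mu^*(s,\cdot)$. Since for $q=2$ we have $\text{prox}(x)=x-x^2$, the integrand $Q(a)\pi+\mu(\pi-\pi^2)$ is strictly concave in $\pi$ (the coefficient of $\pi^2$ is $-\mu<0$), so the maximization in \eqref{bellprox} over the simplex is a concave program for which strong duality and the KKT conditions hold. I would attach a multiplier $\lambda(s)$ to the normalization constraint $\int_{\mathcal{A}}\pi(a|s)\,da=1$ and a nonnegative multiplier $\varpi(s,a)$ to $\pi(a|s)\ge 0$, with complementary slackness $\varpi(s,a)\,\pi^*_\mu(a|s)=0$; the upper bound $\pi\le\mathbf{C}$ of Assumption \ref{density_bd} is inactive at the optimum (the closed form \eqref{pdfpi} never clips from above) and hence contributes no multiplier.

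Stationarity of the Lagrangian in $\pi$ then reads $Q(a)+\mu\,\text{prox}'(\pi^*_\mu(a|s))-\lambda(s)+\varpi(s,a)=0$ for every $a$. Because $\text{prox}'(x)=1-2x=-\text{prox}^\circ(x)$, this is exactly
\[
Q(a)-\mu\,\text{prox}^\circ(\pi^*_\mu(a|s))+\varpi(s,a)=\lambda(s),
\]
which already matches the left side of \eqref{tempcon} up to replacing $\lambda(s)$ by $V^*_\mu(s)+\eta(s)$. The crux is therefore to show that the normalization multiplier satisfies $\lambda(s)=V^*_\mu(s)-\mu\int_{\mathcal{W}_s}\big(\pi^*_\mu(a|s)\big)^2\,da$, so that setting $\eta(s):=\lambda(s)-V^*_\mu(s)$ both closes the equation and fixes its sign. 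To obtain this I would substitute the stationarity relation $Q(a)=\lambda(s)-\mu+2\mu\pi^*_\mu(a|s)$ (valid on the support $\mathcal{W}_s$, where $\varpi=0$) into $V^*_\mu(s)=\int_{\mathcal{W}_s}\big[Q(a)\pi^*_\mu+\mu(\pi^*_\mu-(\pi^*_\mu)^2)\big]da$; the linear-in-$\pi^*_\mu$ terms proportional to $\mu$ cancel, the quadratic terms combine, and using $\int_{\mathcal{W}_s}\pi^*_\mu\,da=1$ the result collapses to $V^*_\mu(s)=\lambda(s)+\mu\int_{\mathcal{W}_s}(\pi^*_\mu)^2\,da$.

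With $\eta(s)=-\mu\int_{\mathcal{W}_s}(\pi^*_\mu(a|s))^2\,da=-\mu\|\pi^*_\mu(\cdot|s)\|_{L^2}^2$ in hand, the required range $\eta(s)\in[-\mu\mathbf{C},0]$ follows at once: the integral is nonnegative, giving $\eta(s)\le 0$, while Assumption \ref{density_bd} gives $\int_{\mathcal{W}_s}(\pi^*_\mu)^2\,da\le\mathbf{C}\int_{\mathcal{W}_s}\pi^*_\mu\,da=\mathbf{C}$, giving $\eta(s)\ge-\mu\mathbf{C}$. Substituting $\lambda(s)=V^*_\mu(s)+\eta(s)$ back, and rewriting $Q(a)=\mathbb{E}_{S^{t+1}|s,a}[R(S^{t+1},s,a)+\gamma V^*_\mu(S^{t+1})]$ by definition of $Q_\mu^*$, yields \eqref{tempcon} for all $a\in\mathcal{A}$ (off the support the equation still holds because there $\pi^*_\mu=0$ and the nonnegativity multiplier $\varpi$ absorbs the slack). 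I expect the only genuine work to be the value-integral simplification producing the clean identity $\eta(s)=-\mu\|\pi^*_\mu(\cdot|s)\|_{L^2}^2$; once that cancellation is carried out, the multiplier interpretation and the range bound are routine.
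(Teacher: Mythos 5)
Your proposal is correct and takes essentially the same route as the paper's proof: both derive the KKT stationarity condition $Q_\mu^*(s,a)-\mu\,\text{prox}^{\circ}(\pi_\mu^*(a|s))+\varpi(s,a)=\lambda(s)$ for the proximal maximization, integrate it against $\pi_\mu^*$ (equivalently, substitute it into the fixed-point identity $\mathcal{B}_\mu V_\mu^*=V_\mu^*$) to identify the normalization multiplier as $\lambda(s)=V_\mu^*(s)+\eta(s)$ with $\eta(s)=-\mu\int_{\mathcal{W}_s}\big(\pi_\mu^*(a|s)\big)^2\,da$, and then obtain the range $\eta(s)\in[-\mu\mathbf{C},0]$ from Assumption \ref{density_bd}. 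The only organizational difference is that the paper (via its Theorem \ref{kkt}) carries a second multiplier $\varpi_2$ for the upper-bound constraint $\pi\le\mathbf{C}$ and drops it at the end by specializing to $\sigma(\mathcal{W}_{s,2})=0$, whereas you discard that constraint from the outset, which is consistent with the theorem's hypothesis that $\pi_\mu^*$ has the un-clipped form \eqref{pdfpi}.
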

\vspace{-0.2cm}
The equation \eqref{tempcon} connects quasi-optimal value function $V_{\mu}^*$ and policy function $\pi_{\mu}^*$ along with any arbitrary state-action pair. This provides an easy way to incorporate off-policy data, i.e., the state-action pairs which are sampled from state-action visitation under the behavior policy, without adjusting the distribution mismatch.

\textbf{Min-max Optimization} \;  One way to solve the equation \eqref{tempcon} is minimizing the quasi-optimal error under a $L^{2}$ loss function. Unfortunately, the double sampling issue would still appear if replacing the unknown  $\mathbb{E}_{S^{t+1}|s,a}[R(S^{t+1},s,a)+\gamma V_{\mu}(S^{t+1})]$ in the quasi-optimal error by its one-sample bootstrapping counterpart $R^{t}+\gamma V_{\mu}(S^{t+1})$. Alternatively, inspired by the average Bellman error \citep{jiang2017contextual}, we propose to minimize a weighted average quasi-optimal error, and the unwanted conditional variance of the bootstrapping counterpart under $L^{2}$ loss could vanish. We define the loss $\mathcal{L}(V_{\mu},\pi_{\mu},\eta,\varpi,u)$ as  
\begin{equation*}
\mathbb{E}_{S^{t}, A^{t},S^{t+1}}\left[u\left(S^{t}, A^{t}\right) \cdot 
 \left(\mathcal{G}_{V_{\mu},\pi_{\mu}}\left(S^{t}, A^{t},S^{t+1}\right)-{\eta}(S^{t})+\varpi(S^{t},A^{t})-V_{\mu}(S^{t})\right)\right],
\end{equation*}
where $\mathcal{G}_{V_{\mu},\pi_{\mu}}(s,a,s^{\prime}) := R(s^{\prime}, s, a)+\gamma V_{\mu}(s^{\prime})-\mu \text{prox}^{\circ}(\pi_{\mu}(a|s) )$ and 
$u(\cdot):\mathcal{S}\times \mathcal{A}\rightarrow \mathcal{R}$ is a bounded function in $L^2$ space $L^{2}({C_0}):=\{u\in L^2: \|u\|_{L^2}\ \leq C_0 \}$. Essentially, the weight function $u$ is to fit the discrepancy of \eqref{tempcon} and promotes the sample points with large quasi-optimal errors. 

As $\mathcal{L}(V^{*}_{\mu},\pi^{*}_{\mu},\eta,\varpi,u) = 0$ holds for any $u$ function, this leads to a minimax optimization:
\begin{equation}
    \min_{V_{\mu},\pi_{\mu},\eta,\varpi} \max_{u \in L^{2}({C_0})} \mathcal{L}^2(V_{\mu},\pi_{\mu},\eta,\varpi,u). 
\label{minimax}
\end{equation}

\begin{algorithm}[H]
\setstretch{1}
	\caption{Quasi-optimal Learning in Continuous Action Spaces}
	\label{SGD Algorithm}
	\begin{algorithmic}[1]
			\STATE \textbf{Input} observed transition pairs data $ \{(S_i^{t},A_i^{t},R_i^{t},S_i^{t+1}): t=1,...,T\}^n_{i=1}$.
			\STATE \textbf{Initialize} the  parameters of interests $(\theta,\xi) = (\theta^{0},\xi^0)$, the mini-batch size $n_0$, the learning rate $\alpha_{0}$, the prox parameter $\mu$, the kernel bandwidth $\textit{bw}_0$, and the stopping criterion $\varepsilon$.
			\STATE \textbf{For iterations} $j=1$ to $k$
						\STATE \; \ Randomly sample a mini-batch $ \{(S_i^{t},A_i^{t},R_i^{t},S_i^{t+1}): t=1,...,T\}^{n_0}_{i=1}$.
			          \STATE \;  \ Decay the learning rate $\alpha_{j} = \mathcal{O}(j^{-1/2})$.
			          \STATE \; \ Compute stochastic gradients with respect to $\theta$ and $\xi$: $\bar{\nabla}_{\theta}=\mathbb{P}_{n_0}\widehat{\nabla}_{\theta}\widehat{\mathcal{L}_{U}}$ and $ \bar{\nabla}_{\xi}=\mathbb{P}_{n_0}\widehat{\nabla}_{\xi}\widehat{\mathcal{L}_{U}}$.
			          
				\STATE \;  \ Update the parameters of interest as
					$\theta^{j} \leftarrow \theta^{j-1} - \alpha_{j}\bar{\nabla}_{\theta}\widehat{\mathcal{L}_{U}} , \; \xi^{j} \leftarrow \xi^{j-1} - \alpha_j\bar{\nabla}_{\xi}\widehat{\mathcal{L}_{U}}.$
									\STATE \;  \ Stop if $\|(\theta^{j},\xi^j) - (\theta^{j-1},\xi^{j-1}) \| \leq \varepsilon$.
			  \STATE \textbf{Return}	 $\widehat \theta \leftarrow \theta^{j}, \widehat \xi \leftarrow \xi^{j} $.
\end{algorithmic}
\end{algorithm}

\textbf{Kernel Representation} \; 
Solving the minimax optimization problem \eqref{minimax} is unstable, and it is also intractable due to the difficulty for the representation of $u$ in $L^2$ space. Fortunately, we identify continuity invariance between the reward function and the optimal weight function $u^{*}(\cdot)$ (see Theorem \ref{cts_weight} in Appendix). The optimal $u^{*}(\cdot)$ is continuous as long as the reward function is continuous, which is widely satisfied in real-world applications. As for a positive definite kernel $K$, a bounded reproducing
kernel Hilbert space (RKHS) $H_{\text{RKHS}}({C_0}):=\{u\in H_{\text{RKHS}}: \|u\|_{K}\ \leq C_0 \}$ has a diminishing approximation error to any continuous function class as $C_0 \rightarrow \infty$ \citep{bach2017breaking}. This together with continuity invariance provides us a basis for representing the weight function in a bounded RKHS.
This kernel representation further leads to a closed-form of the inner optimization maximizer \citep{gretton2012kernel}. The detailed derivation is provided in Theorem \ref{rkhs_embed} in Appendix. Upon this, the minimax optimization is reduced to only minimizing the loss
\#
 \mathcal{L}_U  = 
     \mathbb{E}_{S^t,\tilde S^t,A^t,\tilde A^t,S^{t+1},\tilde S^{t+1}} [\Lambda_{V_{\mu},\pi_{\mu}}(S^{t},A^{t}, S^{t+1})K(S^t,A^t;\tilde S^t, \tilde A^t) \Lambda_{V_{\mu},\pi_{\mu}}(\tilde S^{t},\tilde A^{t},\tilde S^{t+1})],
     \label{mini}
\#
where $\Lambda_{V_{\mu},\pi_{\mu}}(s,a,s^{\prime}) := \mathcal{G}_{V_{\mu},\pi_{\mu}}\left(s,a,s^{\prime}\right)-{\eta}(s)+\varpi(s,a)-V_{\mu}(s)$ and $(\tilde S^t,\tilde A^t, \tilde S^{t+1})$ is an independent copy of transition pair $(S^t, A^t, S^{t+1})$.

It observes that the loss $\mathcal{L}_U $ is symmetric and kernel represented. This motivates us to use an unbiased U-statistic estimator to obtain the sample loss.  Given the observed data, $\mathcal{D}_{1:n}$, with $n$ trajectories of length $T$, we can use a trajectory-based U-statistic estimator to capture the within-trajectory loss, thus the total loss $\mathcal{L}_U$ can be aggregated as the empirical mean of $n$ i.i.d. within trajectory loss:
\#
&\min_{V_{\mu},\pi_{\mu},\eta,\varpi}  \widehat{\mathcal{L}_{U}}  =\mathbb{P}_{n} {T \choose 2} \sum_{1 \leq j \neq k \leq T}[\Lambda_{V_{\mu},\pi_{\mu}}(S^{j}_{i},A^{j}_{i}, S^{j+1}_{i})K(S^{j},A^{j}; S^{k}, A^{k}) \Lambda_{V_{\mu},\pi_{\mu}}( S^{k}_i, A^{k}_{i}, S^{k+1}_{i})] \notag \\
& \quad \text{s.t.} \quad \varpi(a|s) \geq 0,  \pi_{\mu}(a|s)  \cdot\varpi(a|s)=0 \ \; \text{and} \; \eta(s) \in [-\mu \mathbf{C} ,0] \; \text{for all} \ s \in \mathcal{S},a \in \mathcal{A}.   
\label{mini_U}
\#
The sample loss $\widehat{\mathcal{L}_{U}}$ is unbiased and consistent with the population loss ${\mathcal{L}_{U}}$. The consistency is justified in Theorem \ref{exp-decay} via examining the tail behavior
of $\widehat{\mathcal{L}_{U}}$. In essence, solving the equation \eqref{mini_U} is a computationally intensive non-linear programming problem.  Alternatively, we convert the constrained problem to an unconstrained problem by restricting the Lagrange multipliers. Thus, it can be solved by an unconstrained true gradient algorithm, i.e., Algorithm \ref{SGD Algorithm} under function approximation $(V_{\mu}, \pi_{\mu}, \eta, \varpi) = (V_{\mu}^{\theta}, \pi_{\mu}^{\theta},  \eta^{\xi}, \varpi^{\theta})$.

\section{Practical Implementation}
\label{sec:algo}
In practice, $\{V_{\mu}^*, \pi_{\mu}^*, \eta, \varpi\}$ needs to be parameterized for practical implementation. However, noticing that $V_{\mu}^{*}$ and $ \pi_{\mu}^*$ are both associated with $Q_{\mu}^{*}$ with closed-form expressions \eqref{vfunc} and \eqref{pdfpi}. Thus, we propose to represent $(V_{\mu}^{*}, \pi_{\mu}^*)$ by modeling $Q_{\mu}^{*}$. Additionally, by modeling $Q_{\mu}^{*}$ as a quadratic function, the induced policy would follow a $q$-Gaussian distribution. Therefore, we model the coefficients associated with the quadratic form as a linear combination of basis function $\varphi(s)$ such that $Q_{\mu}^{*}(s,a; \theta)=-\exp\{\theta_1^T \varphi(s)\} a^2+ \theta_2^T \varphi(s) a+\theta_3^T\varphi(s) ,$ where $\varphi(s)=[\varphi_1(s),\varphi_2(s),...,\varphi_m(s)]^T$ is the $m$-dimensional basis function, and $\theta=[\theta_1,\theta_2,\theta_3]^T$ is the $3 m$-dimensional parameters we need to estimate. The advantage of such parametrization lies in that the parameter space could be reduced.

To solve the constrained optimization problem, we propose a computationally efficient algorithm by transforming the original constrained optimization problem into an unconstrained minimization problem. Specifically, we impose restrictions on the representation of Lagrangian multipliers $(\eta(s),\varpi(s,a))$ so that they satisfy their constraints automatically. Although such re-parametrization may sacrifice model flexibility, it gains great computational advantage as the unconstrained optimization problem would be much simpler. To be specific, we parametrize $\varpi$ as
\begin{equation}
    \varpi(s,a; \theta) = \max\Big(0, -\frac{Q_{\mu}^{*}(s,a; \theta)}{2\mu}+\frac{\int_{a\in \mathcal{\mathcal{W}}_1(s)}Q_{\mu}^{*}(s,a;\theta)da}{2\mu\sigma(\mathcal{W}_s)}-\frac{1}{\sigma(\mathcal{W}_s)} \Big),\label{psi1}
\end{equation}
Therefore, $ \varpi(S^t,A^t)  \geq 0$ and $\pi_{\mu}^*(A^t|S^t) \cdot\varpi(S^t,A^t)=0 $ are automatically satisfied. Also, by specifying the expression of Lagrangian multipliers,  $\varpi(s,a)$ share the same set of parameters $\theta$ as $(V_{\mu}^{*},\pi_{\mu}^*)$. We also define 
\begin{equation}
    \eta(s;\xi)=\Big\{\frac{-\mu \mathbf{C}}{1+\exp(-k_0(\xi^Ts-b_0))}      
    \Big\}, \label{bigpsi}
\end{equation}
where $b_0$ is the sigmoid's midpoint and $k_0$ is the logistic growth rate. By flipping the sigmoid function to parametrize $     \eta(s;\xi) $, the constraint $\eta(s) \in    [-\mu \mathbf{C} ,0 ]$ is also automatically satisfied.

\vspace{-1mm}
\section{Theory}\label{sec:theory}
\vspace{-1mm}
In this section, we study the theoretical properties of the proposed method. First, we study some general properties of the proposed quasi-optimal Bellman operator, 
given in Proposition \ref{contrac} and \ref{v-error} of Appendix. In Theorem \ref{lambda-sparse}, we disclose the effect of the intensity of $\text{prox}$ parameter $\mu$ on the induced optimal policy distribution. Moreover, a non-asymptotic concentration bound is established in Theorem \ref{exp-decay}, showing the consistency and measuring the rate of convergence of
$\widehat{\mathcal{L}_{U}}$ to $\mathcal{L}_{U}$. Further, the overall performance error of the algorithm is given in Theorem \ref{performance_error}, where the performance error is decomposed as the four sources. Finally, we show that the proposed quasi-optimal learning is a convergent algorithm. Before we present the theoretical results, we introduce some assumptions on the boundedness condition of the MDP and the sample trajectory properties, respectively. 

\begin{assumption}\label{boundedness}
The reward function $R(s^{\prime},s,a)$ is uniformly bounded, i.e, $\|R(\cdot)\|_{\infty}\leq R_{\max}$.
\end{assumption}
\begin{assumption}\label{mixing}
Suppose $\{S^t,A^t\}_{t\geq 1}$ is a 
strictly stationary and exponentially $\beta$-mixing sequence with a mixing coefficient $\beta(m)\lesssim \exp(-\delta_1m)$ for $m \geq 1$. We further assume that the behavior policy $\pi_{b}$, which is used to collect the offline data $\mathcal{D}_{1:n}$, satisfies that $\min_{a\in\mathcal{A}, s\in\mathcal{S}}\pi_{b}(a|s) > 0$.
\end{assumption}



\begin{thm}[Policy Adaptability]
\label{lambda-sparse}
Under Assumption \ref{boundedness}, for all $s\in \mathcal{S}$, the quasi-optimal policy distribution $\pi^{*}_{\mu}(\cdot|s)$ degenerates to a  uniform distribution over $\Delta(\mathcal{A})$ as $\mu \to \infty$, and $\pi^{*}_{\mu}(\cdot|s)$ concentrates in a point mass as $\mu \to 0$ and $\mathbf{C} \to \infty$. 
\vspace{-0.2cm}
\end{thm}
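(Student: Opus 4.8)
The plan is to treat the two limiting regimes separately, working from the closed-form density \eqref{pdfpi}, the screening set \eqref{support}, and the variational definition \eqref{bellprox}. Throughout I would write the density as a fluctuation plus a baseline, $\pi_\mu^*(a|s) = \big(\tfrac{Q_\mu^*(s,a) - \bar Q_\mu^*(s)}{2\mu} + \tfrac{1}{\sigma(\mathcal{W}_s)}\big)^+$ with $\bar Q_\mu^*(s) = \tfrac{1}{\sigma(\mathcal{W}_s)}\int_{\mathcal{W}_s}Q_\mu^*(s,a')\,da'$, so that the two effects governing the shape — the flatness of the density over its support and the size of that support — can be tracked separately.

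For $\mu\to\infty$ the baseline $\tfrac{1}{\sigma(\mathcal{W}_s)}$ is constant in $a$, so uniformity reduces to showing the fluctuation $\tfrac{Q_\mu^*(s,a)-\bar Q_\mu^*(s)}{2\mu}$ vanishes uniformly over $\mathcal{W}_s$. I would establish this through the variational form \eqref{bellprox}: dividing the inner objective by $\mu$ and using $\int \text{prox}(\pi)\,da = 1 - \int\pi^2\,da$, the functional converges to $-\int \pi^2\,da$ up to an additive constant and the vanishing $\int (Q_\mu^*/\mu)\pi\,da$ term, whose maximizer over $\Delta_{\text{convex}}(\mathcal{A})$ is the uniform density; strong convexity of $\text{prox}$ then transfers convergence of the objective to convergence of the maximizer. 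Equivalently, completing the square shows $\pi_\mu^*(\cdot|s)$ is the $L^2$-projection of $\tfrac{Q_\mu^*(s,\cdot)}{2\mu}$ onto the simplex, which tends to the uniform density as the shift vanishes. In parallel, from \eqref{support} the screening condition $\int_{\mathcal{M}_s(a)}[Q_\mu^*(s,a') - Q_\mu^*(s,a)]\,da' > 2\mu$ holds on an expanding set, so $\sigma(\mathcal{W}_s)$ grows — to $\sigma(\mathcal{A})$ when $\mathcal{A}$ is bounded, as in the explicit q-Gaussian support of Theorem \ref{qgau}; combined with the vanishing fluctuation this yields a flat density, i.e.\ a uniform law.

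For $\mu \to 0$ with $\mathbf{C}\to\infty$ the proximal correction switches off. By Theorem \ref{approx} the bias $\mathcal{B}_\mu V - \mathcal{B}V$ lies in $[\mu(1-\mathbf{C}),\mu]$ and hence vanishes as $\mu\to 0$, so the fixed point satisfies $V_\mu^* \to V^*$ and consequently $Q_\mu^*\to Q^*$, which is bounded under Assumption \ref{boundedness}. The threshold $2\mu\to 0$ then forces the defining inequality in \eqref{support} to hold only for actions whose superlevel set $\mathcal{M}_s(a)$ has vanishing measure, so $\mathcal{W}_s$ contracts to the singleton $\{\argmax_a Q^*(s,a)\}$; correspondingly $\tfrac{1}{\sigma(\mathcal{W}_s)}\to\infty$ and the mass concentrates. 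The role of $\mathbf{C}\to\infty$ is exactly to relax the density cap in Assumption \ref{density_bd}, which otherwise forbids the spiking density of a Dirac; letting $\mathbf{C}\to\infty$ permits $\pi_\mu^*(\cdot|s)$ to converge weakly to the point mass at the greedy action.

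The main obstacle is the self-referential dependence of $Q_\mu^*$ on $\mu$: because $Q_\mu^*$ is built from the fixed point $V_\mu^*$, which Theorem \ref{approx} only controls up to an $O(\mu)$ bias, one cannot simply treat $Q_\mu^*$ as bounded in the $\mu\to\infty$ analysis. The crux is therefore to show the \emph{action-variation} of $Q_\mu^*(s,\cdot)$ is of lower order than $\mu$, so that the fluctuation term still vanishes. I would argue that the $O(\mu)$ part of $V_\mu^*$ is asymptotically state-independent — the value \eqref{vfunc} is dominated by the proximity term evaluated at the near-uniform policy, whose contribution $\mu(1-1/\sigma(\mathcal{W}_s))$ becomes constant in $s$ once the support fills $\mathcal{A}$ — so that only a bounded, indeed $o(\mu)$, action-dependent remainder survives in $Q_\mu^*(s,a)-\bar Q_\mu^*(s)$. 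Making this decoupling quantitative, e.g.\ the $O(\mu^{2/3})$ variation over an $O(\mu^{1/3})$-wide support visible in the q-Gaussian formulas of Theorem \ref{qgau}, is the step that requires the most care, while the $\mu\to 0$ direction is comparatively routine once $Q_\mu^*\to Q^*$ is in hand.
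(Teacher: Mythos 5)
Your overall architecture---splitting the density \eqref{pdfpi} into the baseline $1/\sigma(\mathcal{W}_s)$ plus the fluctuation $\big(Q_\mu^*(s,a)-\bar Q_\mu^*(s)\big)/(2\mu)$, and recognizing that the whole $\mu\to\infty$ direction hinges on showing the \emph{action-variation} of $Q_\mu^*(s,\cdot)$ is $o(\mu)$---is exactly the skeleton of the paper's proof, and you correctly flag this as the crux. The gap is that you never give a non-circular argument for that crux. Your proposed justification reads the $O(\mu)$ part of $V_\mu^*$ off the closed form \eqref{vfunc} ``evaluated at the near-uniform policy,'' valid ``once the support fills $\mathcal{A}$''---but near-uniformity of $\pi_\mu^*$ and full support are precisely the conclusions being proven, so as stated the argument assumes what it must establish. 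The paper breaks this circularity with a different device: it views $\mathcal{B}_\mu$ as the Bellman operator of a modified MDP whose per-step reward is $r(s,a)+\mu H(\pi(\cdot|s))$, where $H(\pi)=\mathbb{E}_{a\sim\pi}[1-\pi(a|s)]$ and $H_{\max}=\max_\pi H(\pi)$, and sandwiches $Q_\mu^*$ for \emph{every} $(s,a)$ between bounds sharing the same action-independent entropy accumulation: the upper bound comes from comparing $V_\mu^*$ with $V^*$ plus the accumulated bonus $\mu H_{\max}/(1-\gamma)$ and invoking Assumption \ref{boundedness}, while the lower bound comes from playing the max-entropy policy $\pi_H=\argmax_\pi H(\pi)$ from the second step onward, giving $Q_\mu^*(s,a)\ge -R_{\max}/(1-\gamma)+\gamma\mu H_{\max}/(1-\gamma)$. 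Subtracting the two bounds yields $|Q_\mu^*(s,a)-\max_{a'}Q_\mu^*(s,a')|\le 2R_{\max}/(1-\gamma)$ uniformly, with no assumption on the shape of $\pi_\mu^*$; dividing by $2\mu$ kills the fluctuation, and the normalization $\int_{\mathcal{A}}\pi_\mu^*(a|s)\,da=1$ then pins the baseline to $1/\sigma(\mathcal{A})$. This sandwich is the missing lemma in your plan, and without it (or an equivalent bootstrap) the $\mu\to\infty$ half does not go through.

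Your $\mu\to0$, $\mathbf{C}\to\infty$ direction also has a flaw. You invoke Theorem \ref{approx} to conclude $Q_\mu^*\to Q^*$, but its bias interval is $[\mu(1-\mathbf{C}),\mu]$, and in the regime of the theorem $\mathbf{C}\to\infty$ simultaneously, so $\mu(1-\mathbf{C})$ need not vanish absent a rate coupling such as $\mu\mathbf{C}\to0$, which the statement does not assume. Moreover, while the cap $\pi_\mu^*\le\mathbf{C}$ of Assumption \ref{density_bd} is in force, the support can never have measure below $1/\mathbf{C}$, so $\mathcal{W}_s$ cannot ``contract to the singleton'' for fixed $\mathbf{C}$ as you claim. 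The paper sidesteps both issues by taking the limits sequentially and arguing directly from the KKT characterization of Theorem \ref{kkt}: for $a$ in the region $\mathcal{W}_{s,1}$ where $0<\pi_\mu^*(a|s)<\mathbf{C}$, the quantity $\sigma(\mathcal{W}_{s,1})Q_\mu^*(s,a)-\int_{\mathcal{W}_{s,1}}Q_\mu^*(s,a')\,da'+2\mu-2\mu\mathbf{C}\sigma(\mathcal{W}_{s,2})$ is squeezed inside $\big(0,\,2\mu\mathbf{C}\sigma(\mathcal{W}_{s,1})\big)$; letting $\mu\to0$ with $\mathbf{C}$ fixed forces all actions in $\mathcal{W}_{s,1}$ to share a common $Q$-value, hence $\sigma(\mathcal{W}_{s,1})=0$, so the policy is uniform at level $\mathbf{C}$ on a set of measure $1/\mathbf{C}$; only afterwards is $\mathbf{C}\to\infty$ taken to obtain the point mass. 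No convergence of $Q_\mu^*$ to $Q^*$ is ever needed, which is what makes the paper's argument immune to the uncoordinated double limit that undermines yours.
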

Theorem \ref{lambda-sparse} formally investigates the effect of $\mu$ on  $\pi^{*}_{\mu}(\cdot|s)$. In an extreme case that $\mu \to 0, \mathbf{C}\to \infty$, only the action maximizing $Q^{*}_{\mu}(s,a)$ would be included in the quasi-optimal region. In the following, we establish a  non-asymptotic concentration inequality for the empirical loss in the non-i.i.d. case.


\begin{thm}
For any $\mu \in (0, \infty)$ and $\epsilon >0$, under Assumptions \ref{boundedness}-\ref{mixing}, we have $\epsilon$-divergence of $|\widehat {\mathcal{L}_U}-\mathcal{L}_U|$ bounded in probability, i.e., 
\vspace{-1mm}
\begin{align*}
    \mathbb{P}(|\widehat {\mathcal{L}_U}-\mathcal{L}_U|>\epsilon)\leq & \ C_1\exp \bigg(-\frac{\epsilon^2T-C_2\epsilon M^2_{\max}\sqrt{T}}{M_{\max}^2+(\frac{\epsilon}{2}-\frac{C_2M^2_{\max}}{\sqrt{T}})\log T \log\log(T)}\bigg) + C_3\exp \left(\frac{-n\epsilon^2}{M_{\max}^4}\right),
\end{align*}
\vspace{-1mm}
where $C_1,C_2$ and $C_3$ are some constants depending on $\delta_1$ respectively, and $M_{\max}=\frac{4}{1-\gamma} R_{\max}+\mu \mathbf{C}$.
\label{exp-decay}
\vspace{-0.3cm}
\end{thm}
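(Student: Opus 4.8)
The plan is to control the deviation $|\widehat{\mathcal{L}_U}-\mathcal{L}_U|$ by isolating its two qualitatively different sources of fluctuation — the temporal dependence inside each trajectory and the finite number of trajectories — bounding each at level $\epsilon/2$, and combining them by a union bound. The first prerequisite is a uniform envelope: I would show $\|\Lambda_{V_\mu,\pi_\mu}\|_\infty \le M_{\max}=\frac{4}{1-\gamma}R_{\max}+\mu\mathbf{C}$. This follows by collecting $|R|\le R_{\max}$ and $|V_\mu|\le R_{\max}/(1-\gamma)$ from Assumption \ref{boundedness}; the bound $|\mu\,\text{prox}^\circ(\pi_\mu)|\lesssim \mu\mathbf{C}$ from $\text{prox}^\circ(x)=2x-1$ with $\pi_\mu\in[0,\mathbf{C}]$; and the multiplier ranges $\eta(s)\in[-\mu\mathbf{C},0]$, $\varpi\ge 0$. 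With a normalized kernel $\|K\|_\infty\le 1$, each summand $\Lambda K\Lambda$ in \eqref{mini}–\eqref{mini_U} is then bounded by $M_{\max}^2$, which is exactly what fixes the $M_{\max}^2$ and $M_{\max}^4$ scales appearing in the two exponents.

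For the between-trajectory contribution I would use that the $n$ trajectories are i.i.d., so $\widehat{\mathcal{L}_U}$ is an empirical average of the $n$ i.i.d. within-trajectory U-statistics $U_i^{(T)}$, each taking values in $[-M_{\max}^2,M_{\max}^2]$. A Hoeffding bound applied to this bounded i.i.d. average yields the second term $C_3\exp(-n\epsilon^2/M_{\max}^4)$, with $C_3$ absorbing the numerical constant.

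The harder contribution is the within-trajectory one, where the transition tuples $X^j=(S^j,A^j,S^{j+1})$ are temporally dependent. Here I would invoke Assumption \ref{mixing}: the geometric $\beta$-mixing coefficient $\beta(m)\lesssim\exp(-\delta_1 m)$ is summable, which simultaneously controls the bias $|\mathbb{E}[U^{(T)}]-\mathcal{L}_U|\lesssim M_{\max}^2/T$ — arising from the near-diagonal pairs where mixing has not yet decoupled $X^j$ from $X^k$ — and the concentration. To handle the concentration of a degree-two U-statistic over a mixing sequence, I would first apply Hoeffding's decoupling representation, writing $U^{(T)}$ as an average over permutations of sums of $\lfloor T/2\rfloor$ kernel evaluations and bounding its moment generating function by that of a single such partial sum via Jensen. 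That partial sum is a sum of $\approx T/2$ bounded terms of a geometrically $\beta$-mixing sequence, to which I would apply a Bernstein-type inequality for exponentially mixing sequences of Merlev\`ede–Peligrad–Rio type. Its Gaussian regime supplies the $\epsilon^2 T$ in the numerator and the variance proxy $M_{\max}^2$ in the denominator, the mixing correction supplies the $\log T\log\log(T)$ factor, and the U-statistic reduction together with the decoupling bias supply the $\sqrt{T}$ corrections $C_2\epsilon M^2_{\max}\sqrt{T}$ and $C_2 M^2_{\max}/\sqrt{T}$; the constants $C_1,C_2$ then inherit their dependence on $\delta_1$ from the blocking bookkeeping.

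Finally I would union-bound $\mathbb{P}(|\widehat{\mathcal{L}_U}-\mathcal{L}_U|>\epsilon)$ by the within-trajectory deviation at level $\epsilon/2$ and the between-trajectory deviation at level $\epsilon/2$, producing the stated sum of two exponential terms. The main obstacle is precisely the concentration of a degree-two U-statistic of a $\beta$-mixing sequence: ordinary Bernstein inequalities govern weakly dependent partial sums, not U-statistics, so the decoupling step that controls the U-statistic MGF by a decoupled partial-sum MGF must be interlocked carefully with the sharp exponential-mixing Bernstein inequality in order to recover the exact $\sqrt{T}$ and $\log T\log\log(T)$ dependence, while one simultaneously checks that the mixing-induced bias is of strictly smaller order than $\epsilon$.
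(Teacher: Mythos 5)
Your overall architecture coincides with the paper's: the same envelope bound (Lemma \ref{u-bound}), the same split of $|\widehat{\mathcal{L}_U}-\mathcal{L}_U|$ into an across-trajectory i.i.d.\ fluctuation handled by Hoeffding's inequality (yielding the $C_3\exp(-n\epsilon^2/M_{\max}^4)$ term) and a within-trajectory part reflecting temporal dependence, with the dependence-induced bias absorbed into the deviation level. Your treatment of the bias is in fact cleaner than the paper's: where the paper runs a Hoeffding decomposition of the kernel plus a Yoshihara-type case analysis over index orderings \citep{yoshihara1976limiting} to get a bias of order $M_{\max}^2T^{-(1+\tau)/2}$, your direct total-variation/covariance argument for $\beta$-mixing pairs gives $O(M_{\max}^2/T)$, which is sharper and still compatible with the stated $\sqrt{T}$ corrections.

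The gap is at the crux step, the exponential concentration of the within-trajectory U-statistic. You assert that, after Hoeffding's representation, the partial sum $\frac{1}{T_0}\sum_{t}\bar{K}(X^{\sigma(t)},X^{\sigma(T_0+t)})$ is ``a sum of $\approx T/2$ bounded terms of a geometrically $\beta$-mixing sequence,'' to which a Merlev\`ede--Peligrad--Rio inequality applies. That premise is false: each permutation induces a perfect matching of $\{1,\dots,T\}$, and the resulting pair process interleaves time indices, so it is not $\beta$-mixing at any useful rate. Concretely, for the identity permutation the sequence $Y_t=(X^t,X^{T_0+t})$ has $Y_{T_0}$ containing $X^{T_0}$ while $Y_1$ contains $X^{T_0+1}$; these are adjacent in time, so the dependence between $\sigma(Y_1,\dots,Y_j)$ and $\sigma(Y_{j+m},\dots,Y_{T_0})$ does not decay as $m$ grows, and no reordering of the matched pairs fixes this. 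Hence the Bernstein inequality for mixing sequences cannot be applied to these partial sums, and your Jensen/MGF reduction has nothing valid to reduce to. This is precisely the difficulty the paper outsources: its proof invokes Theorem 3.1 of \cite{han2018exponential}, a dedicated exponential inequality for U-statistics of geometrically $\beta$-mixing data, whose internal decomposition breaks the U-statistic sum so that historical terms cancel after conditioning on the future, and only then applies the inequality of \cite{merlevede2009bernstein}; that is where the $\epsilon^2T$ numerator, the $M_{\max}^2$ variance proxy, and the $\log T\log\log(T)$ factor actually come from. To repair your argument you would need either to cite such a U-statistic-specific result, or to replace the Hoeffding-representation step by a blocking/Berbee-coupling scheme that first reduces the trajectory to independent blocks (paying a $T\beta(\cdot)$ coupling cost) before invoking i.i.d.-type U-statistic concentration.
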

\vspace{-3mm}
Theorem \ref{exp-decay} implies that $\widehat{\mathcal{L}_{U}}$ is a consistent estimator to $\mathcal{L}_{U}$, and thus avoiding the double sampling issue. Note that the concentration bound is sharper than the bound established in \cite{chakrabortty2018tail} since we utilize a novel temporal correlatedness structure to decompose the U-statistic. We now analyze the performance error between the finite sample learner and true solution, which can be decomposed into four source errors. 


\begin{thm}\label{performance_error}
Under Assumption \ref{boundedness}-\ref{mixing}, let ${V}_{\mu}^{\theta_1,k}$ be the optimizer from Algorithm \ref{SGD Algorithm} and  $V^{*}$ is the optimal value function and $\kappa_{\min}$ be the smallest eigenvalue corresponding to an orthonormal basis of $L^{2}(\mathcal{S}\times\mathcal{A})$ space. With probability $1-\delta$, the performance error is upper bounded by
\begin{align*}
\|  \widehat {V}_{\mu}^{\theta_1,k} - V^{*} \|^2_{L^{2}}  
 \leq &
 \underbrace{\frac{C_{4}}{\kappa_{\min}(1-\gamma)^2}\left( \sqrt{\frac{C_5 D_{\text{P-dim}}\log \left(\frac{8C_{4}}{\delta}\right)}{n}} + \sqrt{\frac{2\big(\frac{\bar\Delta}{\delta_1} \vee 1\big)\bar\Delta }{C_6 \lfloor T / 2\rfloor}}\right)}_{\text{generalization error}} + \\
 & \underbrace{C_7\frac{\mu^{2}(\mathbf{C}+|1-\mathbf{C}|\vee 1)^2}{(1-\gamma)^2}}_{\text{proximal bias}} + \underbrace{C_8\left\|\widehat {V}_{\mu}^{\theta_1}-\widehat {V}_{\mu}^{\theta_1,k}\right\|_{L^{2}}^2}_{\text{optimization error}} + \epsilon_{\text{approximation error}} ,
\end{align*}
where $\bar \Delta = \frac{D_{\text{P-dim}}\log \lfloor T / 2\rfloor}{2} +\log (\frac{e}{\delta})+\log^{+}\big(\frac{C_5 C_6^{D_{\text{P-dim}}}}{2}\big)$, $D_{\text{P-dim}} = \text{P-dim}(\Theta_{1})+\text{P-dim}(\Theta_2)+\text{P-dim}(\Xi_1)+\text{P-dim}(\Xi_2)$, and $C_4,...,C_8$ are some constants. Here $\text{P-dim}(\cdot)$ denotes the pseudo-dimension operator \citep{gyorfi2010distribution}, and $\Theta_1, \Theta_2,\Xi_1$ and $\Xi_2$ are function spaces for $V_{\mu}, {\pi}_{\mu},\varpi$ and $\eta$, respectively. The $\epsilon_{\text{approximation error}}$ is from parametrization  $(V^{\theta_1}_{\mu}, {\pi}^{\theta_2}_{\mu},\varpi^{\xi_1}, \eta^{\xi_2})$ on $(V_{\mu}, {\pi}_{\mu},\varpi, \eta)$.
\end{thm}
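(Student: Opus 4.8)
The plan is to control the performance error by an oracle-style decomposition that isolates the four advertised sources, then to bound each piece using the structural results already in hand. Introducing the Bellman fixed point $V_{\mu}^{*}$ of $\mathcal{B}_{\mu}$ and the exact empirical minimizer $\widehat{V}_{\mu}^{\theta_1}$ (the limit of Algorithm \ref{SGD Algorithm} run to convergence), I would apply the elementary inequality $\|x-y\|^2 \le 3(\|x-a\|^2+\|a-b\|^2+\|b-y\|^2)$ to write
\begin{equation*}
\| \widehat{V}_{\mu}^{\theta_1,k} - V^{*} \|^2_{L^{2}} \lesssim \underbrace{\| \widehat{V}_{\mu}^{\theta_1,k} - \widehat{V}_{\mu}^{\theta_1} \|^2_{L^{2}}}_{\text{optimization}} + \underbrace{\| \widehat{V}_{\mu}^{\theta_1} - V_{\mu}^{*} \|^2_{L^{2}}}_{\text{statistical}+\text{approximation}} + \underbrace{\| V_{\mu}^{*} - V^{*} \|^2_{L^{2}}}_{\text{proximal bias}}.
\end{equation*}
The first term is retained as the optimization residual $C_8\|\widehat{V}_{\mu}^{\theta_1}-\widehat{V}_{\mu}^{\theta_1,k}\|^2_{L^2}$, whose smallness follows separately from the convergence analysis of the algorithm.

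For the proximal-bias term I would combine the contraction property of $\mathcal{B}_{\mu}$ (Proposition \ref{contrac}) with the pointwise bound of Theorem \ref{approx}. Writing $V_{\mu}^{*}=\mathcal{B}_{\mu}V_{\mu}^{*}$ and $V^{*}=\mathcal{B}V^{*}$ and inserting $\mathcal{B}_{\mu}V^{*}$, the $\gamma$-contraction yields $\|V_{\mu}^{*}-V^{*}\|_{\infty}\le \frac{1}{1-\gamma}\|\mathcal{B}_{\mu}V^{*}-\mathcal{B}V^{*}\|_{\infty}$, and Theorem \ref{approx} bounds the right-hand numerator at order $\mu(\mathbf{C}+|1-\mathbf{C}|\vee 1)$. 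Passing from the supremum norm to $L^2$ on the state space and squaring produces exactly the stated proximal-bias term with its $(1-\gamma)^{-2}$ prefactor.

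The substantive work is the statistical term $\|\widehat{V}_{\mu}^{\theta_1}-V_{\mu}^{*}\|^2_{L^2}$. I would first peel off the parametric approximation error $\epsilon_{\text{approximation error}}$ by comparing $V_{\mu}^{*}$ to its best in-class representative, leaving the estimation error of the empirical minimizer. The key link is a curvature bound showing that the population loss lower-bounds the value error: since $\mathcal{L}_U$ is the squared RKHS norm of the conditional quasi-optimal error, and that error vanishes precisely at $(V_{\mu}^{*},\pi_{\mu}^{*})$ by the stationarity equation (Theorem \ref{tempconsis}), expanding $\mathcal{L}_U$ around the fixed point and diagonalizing over an orthonormal basis of $L^2(\mathcal{S}\times\mathcal{A})$ produces the factor $\kappa_{\min}^{-1}(1-\gamma)^{-2}$. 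The basic inequality of empirical risk minimization, $\widehat{\mathcal{L}_U}(\widehat{V}_{\mu}^{\theta_1})\le \widehat{\mathcal{L}_U}(\text{best in-class})$, then reduces the whole term to a uniform-deviation bound on $\sup|\widehat{\mathcal{L}_U}-\mathcal{L}_U|$ over the parametrized class.

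The main obstacle is establishing this uniform deviation at the stated two-scale rate. The pointwise concentration of Theorem \ref{exp-decay} already exhibits the $n^{-1/2}$ behavior across trajectories and the $T^{-1/2}$ behavior within trajectories (after the U-statistic decomposition under $\beta$-mixing); upgrading it to a uniform bound over $\Theta_1,\Theta_2,\Xi_1,\Xi_2$ requires a chaining/covering argument whose complexity is measured by the aggregate pseudo-dimension $D_{\text{P-dim}}$. This is where $\bar\Delta$ enters, carrying the $\tfrac{1}{2}D_{\text{P-dim}}\log\lfloor T/2\rfloor$ covering-penalty for the within-trajectory U-statistic under the exponential mixing coefficient of Assumption \ref{mixing}, with the $\delta_1$ dependence tracking the mixing rate. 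Carefully marrying the dependent-data concentration to the covering bound, while ensuring the non-i.i.d. U-statistic structure does not inflate the rate, is the delicate step; the remaining pieces are then assembled by collecting the four bounds and absorbing constants into $C_4,\dots,C_8$.
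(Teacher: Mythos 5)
Your skeleton (optimization error $+$ statistical error $+$ proximal bias $+$ approximation error) matches the paper's, and several of your ingredients are exactly what the paper uses: the optimization residual is retained as is, $\|V_{\mu}^{*}-V^{*}\|$ is bounded by contraction plus Theorem \ref{approx} (this is the paper's Proposition \ref{v-error}), and the uniform deviation is handled with covering numbers/pseudo-dimension for the $n$ i.i.d.\ trajectories together with a Hoeffding decomposition and blocking argument for the within-trajectory mixing U-statistic. The gap is in your curvature step, and it is genuine. You assert that the population loss $\mathcal{L}_U$ ``vanishes precisely at $(V_{\mu}^{*},\pi_{\mu}^{*})$'' and use this to convert a small loss into a small $\|\widehat{V}_{\mu}^{\theta_1}-V_{\mu}^{*}\|_{L^2}$. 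But Theorem \ref{tempconsis} only says that $(V_{\mu}^{*},\pi_{\mu}^{*})$ \emph{is} a solution of the stationarity equation; it does not say it is the unique one. Since the Lagrange multipliers $\eta\in[-\mu\mathbf{C},0]$ and $\varpi\geq 0$ enter the loss as free variables, $\mathcal{L}_U=0$ holds for \emph{every} tuple $(V,\pi,\eta,\varpi)$ satisfying the stationarity equation with admissible multipliers, and the value functions in this zero set form a family of diameter of order $\mathbf{C}\mu/(1-\gamma)$ around $V_{\mu}^{*}$ (perturbing $\eta$ by a constant shifts the corresponding solution $V$ by roughly that constant divided by $1-\gamma$). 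Hence no inequality of the form $\|V-V_{\mu}^{*}\|_{L^2}^{2}\lesssim \kappa_{\min}^{-1}(1-\gamma)^{-2}\,\mathcal{L}_U(V,\pi,\eta,\varpi)$ can hold uniformly: the right-hand side can be zero while the left-hand side is of order $\mu^{2}\mathbf{C}^{2}/(1-\gamma)^{2}$.

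The paper's proof repairs exactly this point. It anchors the statistical analysis at a generic stationarity-equation solution $V_{\mu}^{\widetilde{\pi}_{\mu}}$ rather than at $V_{\mu}^{*}$: the kernel/eigenvalue argument then bounds $\|\widehat{V}_{\mu}^{\theta_1}-V_{\mu}^{\widetilde{\pi}_{\mu}}\|_{L^2}^{2}$ by the excess risk over $\kappa_{\min}(1-\gamma)^{2}$ \emph{plus} additional $O(\mu^{2}/(1-\gamma)^{2})$ terms coming from the differences of the prox terms and of the multipliers (terms your version also drops). It then proves separately, via a monotonicity-and-iteration argument with $\mathcal{B}_{\mu}$, that every such solution satisfies $\|V_{\mu}^{\widetilde{\pi}_{\mu}}-V_{\mu}^{*}\|_{L^2}\leq \mathbf{C}\mu/(1-\gamma)$, and folds this, together with $\|V_{\mu}^{*}-V^{*}\|$, into the proximal-bias term. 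Your final bound still has the correct form because the contribution you miss is of the same order $\mu^{2}(1-\gamma)^{-2}$ as the proximal bias you do keep, but the curvature inequality as you state it is false, so the argument does not go through without this additional step.
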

\vspace{-2mm}
The above sample complexity bound gives an insight into the performance error of the proposed algorithm. The generalization error $\varepsilon_{\text{gerr}} = \mathcal{O}(1/\sqrt{T})$ if $n$ is as the same order of $T$, the proximal bias $\varepsilon_{\text{prox}} = \mathcal{O}(\mu^{2})$ and the optimization error $\varepsilon_{\text{optim}} = \mathcal{O}(1/k)$ for $k$ iterations. Although the prox function introduces a proximal bias in the quasi-optimal Bellman operator $\mathcal{B}_{\mu}$, it leads to a smoothed approximation for $\mathcal{B}$. There exists a trade-off between the proximal bias and approximation error. As the increase of $\mu$, it enlarges the proximal bias but decreases the approximation error since true function space becomes more smoothed and easy for function approximation. On the other hand, a small $\mu$ leads to a small proximal bias but a relatively large approximation error.

\begin{thm}\label{conv}
Suppose $\widehat{\mathcal{L}_U}$ in Algorithm \ref{SGD Algorithm} is differentiable, but not necessarily convex, and its gradient $\nabla \widehat{\mathcal{L}_U}(\theta,\xi)$ is $M_{\mathcal{L}}$-Lipschitz and $\Var(\bar{\nabla}_{\theta}+\bar{\nabla}_{\xi}) \leq \sigma^2_{0}$. And suppose that the learning rate $\{\alpha_{j}\}$ are set to $\alpha_{j} = \min \left\{\frac{2}{M_{\mathcal{L}}}, \frac{\Lambda}{\sigma_{0} \sqrt{j}}\right\}$ for some $\Lambda \geq 0$ and $\varepsilon$ is sufficient small. Let $k = \widetilde{k}$ with $\mathbb{P}(\widetilde{k}=j) = \frac{ \alpha_j(2-M_{\mathcal{L}} \alpha_j)}{\sum_{j=1}^k\left(\alpha_j(2-M_{\mathcal{L}} \alpha_j)\right)}$ for $j=1, \ldots, k_{\diamond 
}$. Then, if $(\widehat{\theta}, \widehat{\xi})$ is the optimization solution and $(\theta^{1},\xi^{1})$ is the first step solution, we have 
\$
\big\|\nabla \widehat{\mathcal{L}_U}(\widehat \theta,\widehat \xi)\big\|^2_{L^{2}} \leq  2M_{\mathcal{L}}\Big(\widehat{\mathcal{L}_U}(\theta^{1},\xi^{1})-\min_{\theta,\xi}{\widehat{\mathcal{L}_U}}(\theta,\xi)\Big)\Big(\frac{M_{\mathcal{L}}}{k_{\diamond 
}} + \frac{\sigma_0}{M_{\mathcal{L}}\Lambda \sqrt{k_{\diamond 
}}}\Big) + \frac{\Lambda\sigma_0M_{\mathcal{L}}}{\sqrt{k_{\diamond 
}}},
\$
\end{thm}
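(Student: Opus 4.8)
The plan is to treat Theorem \ref{conv} as an instance of the randomized-stopping analysis of stochastic gradient descent for a smooth but nonconvex objective. I would concatenate the two parameter blocks into a single iterate $w_{j}=(\theta^{j},\xi^{j})$, so that the update in Algorithm \ref{SGD Algorithm} reads $w_{j}=w_{j-1}-\alpha_{j}g_{j}$, where $g_{j}=(\bar{\nabla}_{\theta},\bar{\nabla}_{\xi})$ is the mini-batch stochastic gradient evaluated at $w_{j-1}$. Two facts are used throughout: $g_{j}$ is an unbiased estimate of the full empirical gradient, $\mathbb{E}[g_{j}\mid\mathcal{F}_{j-1}]=\nabla\widehat{\mathcal{L}_U}(w_{j-1})$, and its conditional second moment is controlled, $\mathbb{E}[\|g_{j}\|^{2}\mid\mathcal{F}_{j-1}]\le\|\nabla\widehat{\mathcal{L}_U}(w_{j-1})\|^{2}+\sigma_{0}^{2}$, via the hypothesis $\Var(\bar{\nabla}_{\theta}+\bar{\nabla}_{\xi})\le\sigma_{0}^{2}$ on the combined stochastic gradient. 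The assumption that $\varepsilon$ is sufficiently small guarantees the stopping test does not fire prematurely, so the full horizon $k_{\diamond}$ is realized.

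First I would invoke the descent lemma implied by $M_{\mathcal{L}}$-Lipschitzness of $\nabla\widehat{\mathcal{L}_U}$, namely $\widehat{\mathcal{L}_U}(w_{j})\le\widehat{\mathcal{L}_U}(w_{j-1})+\langle\nabla\widehat{\mathcal{L}_U}(w_{j-1}),\,w_{j}-w_{j-1}\rangle+\tfrac{M_{\mathcal{L}}}{2}\|w_{j}-w_{j-1}\|^{2}$. Substituting the update, taking $\mathbb{E}[\cdot\mid\mathcal{F}_{j-1}]$, and using unbiasedness with the second-moment bound gives the one-step recursion
\[
\mathbb{E}\big[\widehat{\mathcal{L}_U}(w_{j})\mid\mathcal{F}_{j-1}\big]\le\widehat{\mathcal{L}_U}(w_{j-1})-\tfrac{\alpha_{j}(2-M_{\mathcal{L}}\alpha_{j})}{2}\big\|\nabla\widehat{\mathcal{L}_U}(w_{j-1})\big\|^{2}+\tfrac{M_{\mathcal{L}}\alpha_{j}^{2}\sigma_{0}^{2}}{2}.
\]
Taking total expectations, summing over $j=1,\dots,k_{\diamond}$, telescoping, and lower-bounding the terminal value by $\min_{\theta,\xi}\widehat{\mathcal{L}_U}$ yields $\sum_{j}\tfrac{\alpha_{j}(2-M_{\mathcal{L}}\alpha_{j})}{2}\mathbb{E}\|\nabla\widehat{\mathcal{L}_U}(w_{j-1})\|^{2}\le\widehat{\mathcal{L}_U}(\theta^{1},\xi^{1})-\min_{\theta,\xi}\widehat{\mathcal{L}_U}+\tfrac{M_{\mathcal{L}}\sigma_{0}^{2}}{2}\sum_{j}\alpha_{j}^{2}$. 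The randomized output rule $\mathbb{P}(\widetilde{k}=j)\propto\alpha_{j}(2-M_{\mathcal{L}}\alpha_{j})$ is designed precisely so that the left side equals a normalizing constant times $\mathbb{E}\|\nabla\widehat{\mathcal{L}_U}(\widehat\theta,\widehat\xi)\|^{2}$; dividing through produces the master inequality
\[
\mathbb{E}\big\|\nabla\widehat{\mathcal{L}_U}(\widehat\theta,\widehat\xi)\big\|^{2}\le\frac{2\big(\widehat{\mathcal{L}_U}(\theta^{1},\xi^{1})-\min_{\theta,\xi}\widehat{\mathcal{L}_U}\big)+M_{\mathcal{L}}\sigma_{0}^{2}\sum_{j=1}^{k_{\diamond}}\alpha_{j}^{2}}{\sum_{j=1}^{k_{\diamond}}\alpha_{j}(2-M_{\mathcal{L}}\alpha_{j})}.
\]
The bound in the statement is read in this expected sense over the random index $\widehat{k}=\widetilde{k}$.

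The remaining work is schedule-dependent bookkeeping on the two stepsize sums for $\alpha_{j}=\min\{2/M_{\mathcal{L}},\,\Lambda/(\sigma_{0}\sqrt{j})\}$. For the numerator I would use $\alpha_{j}\le 2/M_{\mathcal{L}}$ to write $\sum_{j}\alpha_{j}^{2}\le\tfrac{2}{M_{\mathcal{L}}}\sum_{j}\alpha_{j}$ and then $\sum_{j\le k_{\diamond}}j^{-1/2}\asymp\sqrt{k_{\diamond}}$ on the decaying part; for the denominator I would split the horizon into the ``capped'' iterations, where $\alpha_{j}=2/M_{\mathcal{L}}$, and the ``decaying'' iterations, where $\alpha_{j}=\Lambda/(\sigma_{0}\sqrt{j})$ and $2-M_{\mathcal{L}}\alpha_{j}$ is bounded below, so that $\sum_{j}\alpha_{j}(2-M_{\mathcal{L}}\alpha_{j})\gtrsim(\Lambda/\sigma_{0})\sqrt{k_{\diamond}}$ plus the capped-block contribution. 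Collecting these estimates and regrouping constants converts the master inequality into the stated form, with the $\tfrac{M_{\mathcal{L}}}{k_{\diamond}}$ term tracking the early/capped contribution and the $\tfrac{\sigma_{0}}{M_{\mathcal{L}}\Lambda\sqrt{k_{\diamond}}}$ and $\tfrac{\Lambda\sigma_{0}M_{\mathcal{L}}}{\sqrt{k_{\diamond}}}$ terms coming from the decaying tail. The main obstacle is exactly this last step: the cap value $2/M_{\mathcal{L}}$ makes $2-M_{\mathcal{L}}\alpha_{j}$ degenerate on the capped block, so one must argue that the denominator is not dominated by those vanishing weights and that the $\sqrt{k_{\diamond}}$ growth survives; obtaining the clean three-term form without stray logarithmic factors (which the naive bound $\sum_{j}\alpha_{j}^{2}\asymp\log k_{\diamond}$ would introduce) is what forces the $\alpha_{j}^{2}\le\tfrac{2}{M_{\mathcal{L}}}\alpha_{j}$ reduction rather than a direct summation.
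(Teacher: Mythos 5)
Your route differs from the paper's in presentation but not in underlying substance. The paper's proof is citation-plus-verification: it observes that Algorithm \ref{SGD Algorithm} is an instance of the randomized stochastic gradient method, imports the rate from Corollary 2.2 of \cite{ghadimi2013stochastic} (invoking Theorem 3 of \cite{drori2020complexity} to cover the decaying learning rate and Euclidean stopping rule), and spends essentially all of its text on the one hypothesis specific to this paper --- unbiasedness of the stochastic gradient --- which it verifies by writing out $\nabla_{\theta_1}\mathcal{L}_U$, $\nabla_{\theta_2}\mathcal{L}_U$, $\nabla_{\xi_1}\mathcal{L}_U$, $\nabla_{\xi_2}\mathcal{L}_U$ explicitly for the kernel-embedded loss. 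You instead re-derive the engine behind that corollary (descent lemma, one-step recursion, telescoping, randomized output index, master inequality); that part of your argument is correct and is a legitimate self-contained alternative. What it buys is transparency; what it omits is precisely what the paper proves: you postulate unbiasedness of $(\bar{\nabla}_{\theta},\bar{\nabla}_{\xi})$ as a ``fact,'' whereas establishing it for this U-statistic loss is the only paper-specific step and the property the paper credits for convergence. A complete write-up needs that verification.

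The genuine gap is in your endgame, which you flag as the main obstacle but whose resolution, as sketched, does not close. First, with the cap $2/M_{\mathcal{L}}$ the weight $\alpha_j(2-M_{\mathcal{L}}\alpha_j)$ is \emph{identically zero} on every capped iteration, so the capped block contributes nothing to the denominator; indeed if $k_{\diamond} \leq j_0 := \big(M_{\mathcal{L}}\Lambda/(2\sigma_0)\big)^2$ all weights vanish and $\mathbb{P}(\widetilde{k}=j)$ is undefined. On the decaying block, $2-M_{\mathcal{L}}\alpha_j = 2\big(1-\sqrt{j_0/j}\big)$ is \emph{not} bounded below near the transition, so your lower bound of order $(\Lambda/\sigma_0)\sqrt{k_{\diamond}}$ only holds once $k_{\diamond} \gg j_0$. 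Second, your device for avoiding the logarithm, $\alpha_j^2 \leq (2/M_{\mathcal{L}})\alpha_j$, overshoots: it turns the variance term into $2\sigma_0^2\sum_j\alpha_j$, and since $\sum_j\alpha_j$ and the denominator both grow like $(\Lambda/\sigma_0)\sqrt{k_{\diamond}}$, the resulting quotient is a non-vanishing $O(\sigma_0^2)$ constant rather than the stated $\Lambda\sigma_0 M_{\mathcal{L}}/\sqrt{k_{\diamond}}$; keeping $\sum_j\alpha_j^2$ instead costs a $\log k_{\diamond}$ factor. The clean three-term bound in the theorem is what the Ghadimi--Lan analysis yields for a \emph{constant, horizon-tuned} stepsize $\alpha \equiv \min\{1/M_{\mathcal{L}},\,\Lambda/(\sigma_0\sqrt{k_{\diamond}})\}$, for which $\alpha(2-M_{\mathcal{L}}\alpha)\geq\alpha$ and $\sum_j\alpha_j^2 = k_{\diamond}\alpha^2$ with no logarithm. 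So to finish along your lines you must either switch to that horizon-tuned constant stepsize (the regime the cited corollary actually covers) or restrict to $k_{\diamond}\gg j_0$ and tolerate the logarithmic degradation. The tension you uncovered between the stated per-iteration decaying rate and the stated bound is real; the paper's citation-style proof simply never confronts it.
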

\vspace{-2mm}
Theorem \ref{conv} implies that the quasi-optimal learning algorithm is converges to a stationary point with a sub-linear rate $\mathcal{O}(1/\sqrt{k_{\diamond 
}})$ even if the empirical loss is non-convex. 
The property serves as a basis for applying non-linear function approximation with convergent guarantees. Theorem \ref{conv} is adapted from Corollary 2.2 in \cite{ghadimi2013stochastic} under a decay learning rate and a Euclidean stopping criterion. The convergence of Algorithm \ref{SGD Algorithm} is blessed by our unbiased stochastic gradient estimator.

\section{Experiments}\label{sec:simu}
In this section, we evaluate our proposed method on synthetic and real environments. We compare our method to the state-of-the-art baselines including DDPG \citep{lillicrap2015continuous}, SAC \citep{haarnoja2018soft}, BEAR \citep{kumar2019stabilizing}, Greedy-GQ \citep{ertefaie2018constructing}, V-Learning \citep{luckett2019estimating}. We also compete with two safe RL algorithms 
CQL \citep{kumar2020conservative} and IQN \citep{dabney2018implicit} for a comprehensive comparison from the safety RL point of view.


\subsection{Synthetic Data}
The four environments are simulated to mimic the real environments for continuous treatment applications. 
In Environment I and II, we consider a bounded action space to evaluate the potential of quasi-optimal learning for addressing off-support bias. 
The design of Environment III is to mimic safety-critical environment by incorporating the notion of safety into the reward function \citep{jia2020safe}, i.e., the optimal dosage is unique, and a high dosage leads to excessive toxicity while a lower dosage is ineffective \citep{zang2014adaptive}. This is helpful for examining safety performance. In Environment IV, all the methods are implemented and compared in a more complex environment. 

The details of the data generative model of each environment in Section \ref{sec:simu} are stated as below:

\noindent Environment I: We consider a bounded action space where $\mathcal{A}=[0,1]$, and a 2-dimensional state space. $A_i^t\overset{iid} \sim \text{Unif}(0,1)$, the state transition function is defined as \newline
$ S^{t+1}_{i,1}=\frac{1-\exp(-A_i^t)}{1+\exp(-A_i^t)} S_{i,1}^t+0.25 S_{i,1}^t S_{i,2}^t+\epsilon_{i,1}^t, S^{t+1}_{i,2}=-\frac{1-\exp(-A_i^t)}{1+\exp(-A_i^t)}  S_{i,2}^t+0.25 S_{i,1}^t S_{i,2}^t+\epsilon_{i,2}^t,$
where $\epsilon_{i,1}^t ,\epsilon_{i,2}^t \overset {iid}\sim  N(0,0.5^2)$, and the reward function is 
$$R_i^t=3 \left(-\exp(S_{i,1}^{t+1}-S_{i,2}^{t+1}) (A_i^t)^2+(S_{i,1}^{t+1}+S_{i,2}^{t+1}+0.5)A_i^t+S_{i,1}^{t+1}+S_{i,2}^{t+1}\right).$$
Environment II: We consider a bounded action space where $\mathcal{A}=[0,1]$, and a 2-dimensional state space. $A_i^t\overset{iid} \sim \text{Unif}(0,1)$, the state transition function is defined as 
$S^{t+1}_{i,1}=0.75(2A_i^t-1)\cdot S_{i,1}^t+0.25 S_{i,1}^t S_{i,2}^t+\epsilon_{i,1}^t, S^{t+1}_{i,2}=0.75 (1-2A_i^t)S_{i,2}^t+0.25 S_{i,1}^t S_{i,2}^t+\epsilon_{i,2}^t.$ 
where $\epsilon_{i,1}^t ,\epsilon_{i,2}^t \overset {i.i.d}\sim  N(0,0.5^2)$, and
$$R_i^t=0.25 (S_{i,1}^{t+1})^3+2 S_{i,1}^{t+1}+0.5 (S_{i,2}^{t+1})^3+S_{i,2}^{t+1}+0.25 (2A_i^t-1).$$
Environment III: We consider an unbounded action space where $\mathcal{A}=(-\infty,\infty)$, and a 8-dimensional state space. We sampled action uniformly from a bounded space, $A_i^t\overset{iid} \sim \text{Unif}(-100,100)$, while it is allowed to select actions on $\mathbb{R}$ for the learned policy. The state transition function is defined as, $S_i^{t+1}\sim N(\mu^{t+1}_{i},\Sigma)$, where $\Sigma$ is a pre-specified covariance matrix, and $\mu^{t}_{i} = [\mu^t_{i,1},...,\mu^t_{i,8}]$, \newline
$$\mu^{t+1}_{i,j}=\frac{\exp(A_i^t/100+\mu_{i,j}^t)-\exp(-(A_i^t/100+\mu_{i,j}^t))}{\exp(A_i^t/100+\mu_{i,j}^t)+\exp(-(A_i^t/100+\mu_{i,j}^t))} \quad \text{for} \quad j=1,2,3,4,$$
$$\mu^{t+1}_{i,j}=\frac{\exp(-A_i^t/100+\mu_{i,j}^t)-\exp(-(-A_i^t/100+\mu_{i,j}^t))}{\exp(-A_i^t/100+\mu_{i,j}^t)+\exp(-(-A_i^t/100+\mu_{i,j}^t))} \quad \text{for} \quad j=5,6,7,8.$$
$$R_i^t=-\exp(S_{i,1}^{t+1}/2+S_{i,5}^{t+1}/2)(A_i^t/100)^2+2(S_{i,2}^{t+1}+S_{i,3}^{t+1}+S_{i,6}^{t+1}+S_{i,7}^{t+1}+0.5)A_i^t/100+S_{i,4}^{t+1}+S_{i,8}^{t+1}.$$  
Environment IV: This environment shares the same transition kernel as Environment III, the only difference is the reward function here is  \$
R_i^t=(S_{i,1}^{t+1}/2)^3+(S_{i,2}^{t+1}/2)^3+ S_{i,3}^{t+1}+S_{i,4}^{t+1}+2 [(S_{i,5}^{t+1}/2)^3+(S_{i,6}^{t+1}/2)^3]+0.5(S_{i,7}^{t+1}+S_{i,8}^{t+1}).\$ 

For all four environments, we consider different sample sizes where the number of trajectories $n=\{25,50\}$, and the length of each trajectory $T=\{24,36\}$. The discount factor $\gamma$ is set to 0.9. The detailed discussion on the motivations of experiment designs is deferred to Section \ref{sec:experiment} in Appendix.

To evaluate the policy obtained from the proposed method in synthetic experiments, we generate 100 independent trajectories, each with a length of 100 based on the learned policy. We use rejection sampling \citep{robert1999monte} to randomly sample each action by the induced density $\pi_{\mu}(a|s)$ and calculate the discounted sum of reward for each trajectory. We compare the discounted return of each method. The boxplot of synthetic experiments results based on 50 runs is presented in Figure \ref{fig:simu}.

\begin{figure}[h]
    \centering 
     \includegraphics[width=1.03\textwidth]{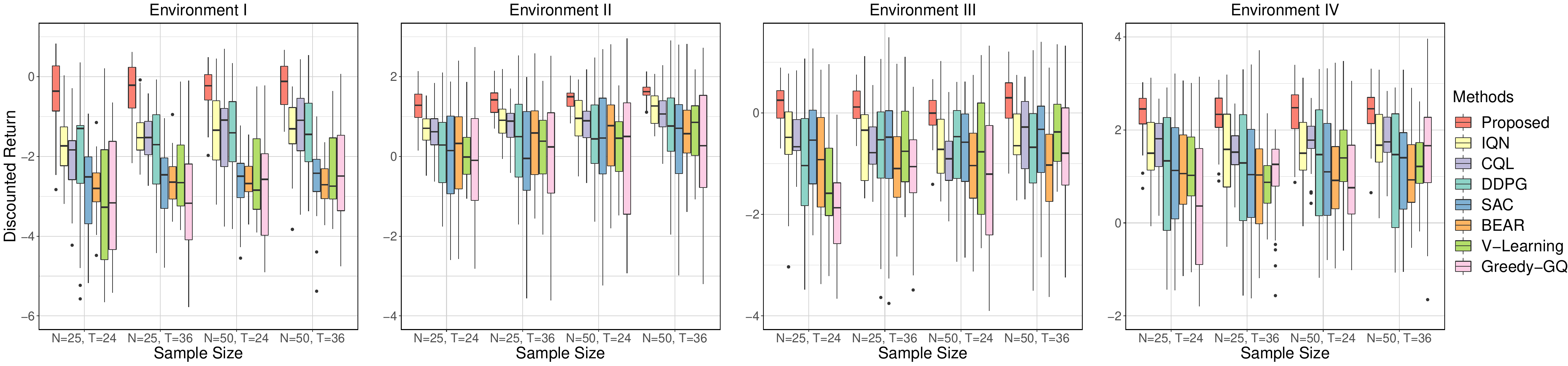}
    \caption{The boxplot of the discounted return over $50$ repeated experiments. }
     \label{fig:simu}
\end{figure}

Figure \ref{fig:simu} shows that our proposed method outperforms competing methods with a relatively small variance. This mainly benefits from identifying the quasi-optimal region, which guarantees the suggested action is near-optimal, hence improving the performance. In comparison, SAC and BEAR use a Gaussian policy and assign non-negligible positive densities to all actions, even for the non-optimal ones, which damages the model performance. Meanwhile, even though safe RL methods (i.e., CQL and IQN) show better performance and smaller variance compared with non-safe methods, their performance is still negatively affected by assigning non-zero densities to non-optimal actions.
In addition, in Environment I and II with bounded action support, the competing methods are affected by an off-support bias which lowers their discounted return. In Environment III and IV, the performance gains of the proposed method are mainly from the well-recover of the quasi-optimal regions.  

To validate the cross-validation procedure in practice and analyze the effect of $\mu$ on model performance, we conduct sensitivity analyses for the change of $\mu$. Results are summarized in Figure \ref{fig:sensi}. This confirms that the cross-validation procedure indeed selects a proper $\mu$ which maximizes the discounted return. 

\begin{figure}[H]
     \vspace{-2mm}
    \centering 
     \includegraphics[width=1\textwidth]{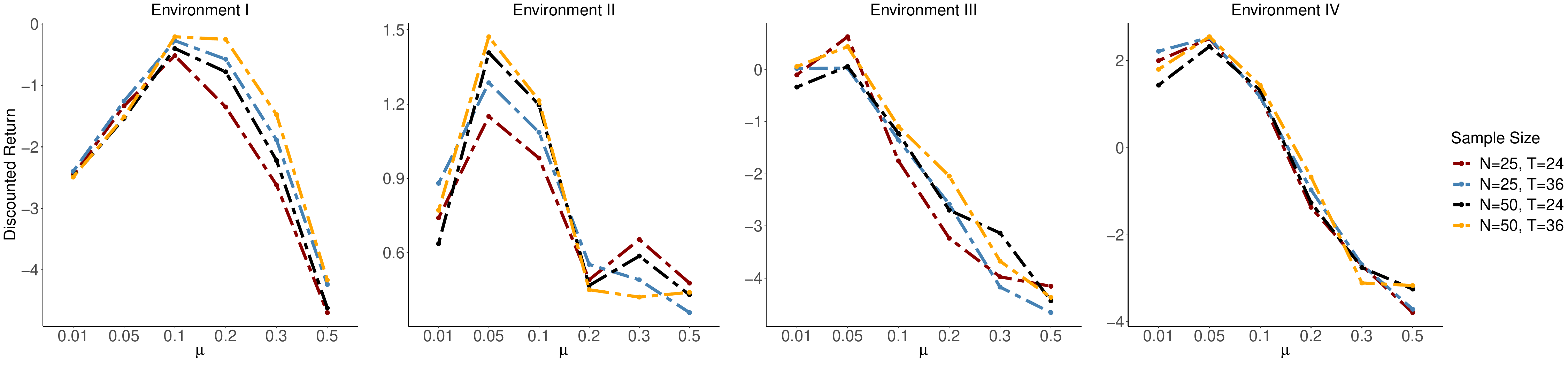}
    \caption{The sensitivity analyses of $\mu$ over 50 repeated experiments}
     \label{fig:sensi}
\end{figure}

Also, note that our algorithm achieves stable performance in small sample size settings, which is blessed by the smoothness and  optimization-friendly of our algorithm. This is promising as limited data is common in medical applications.  Additional experiment details including parameter tuning, competing methods setup and computational time are provided in Appendix.

To measure the performance on safety, we aim to evaluate the distribution of Monte-Carlo discounted sum of rewards for each roll-out trajectory \cite{dabney2018implicit}, instead of its empirical mean, i.e., discounted return.

\begin{figure}[H]
     \vspace{-2mm}
    \centering 
     \includegraphics[width=1\textwidth]{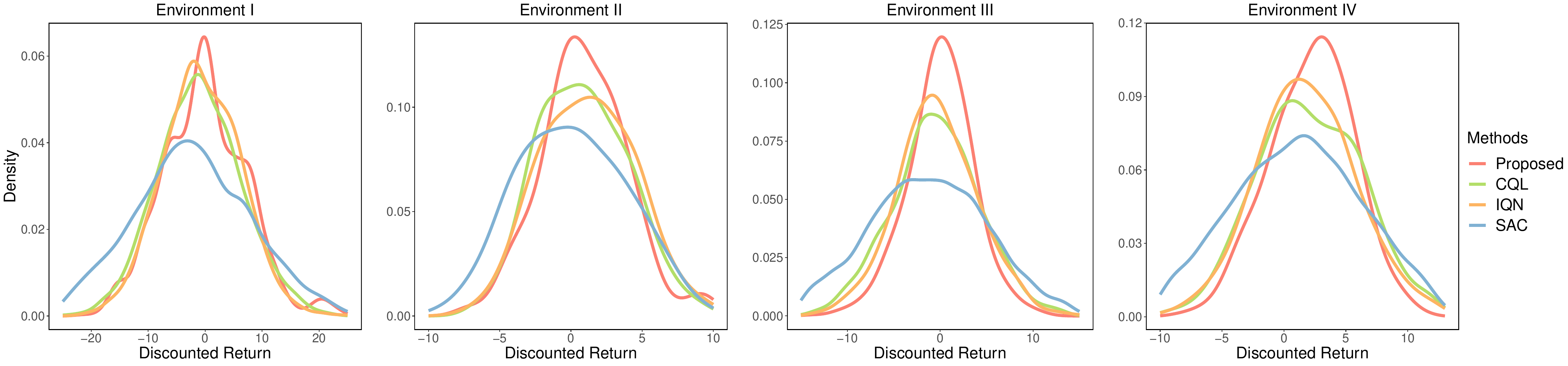}
    \caption{The distribution of Monte-Carlo discounted sum of rewards over 50 repeated experiments.}
     \label{fig:densi}
\end{figure}

In particular, we generate $100$ trajectories under the learned policy and record the discounted sum of rewards of each single trajectory. Then we draw the density plots in Figure \ref{fig:densi} for all four environments. As shown in Figure \ref{fig:densi}, 
the distribution of the quasi-optimal learning shows a thinner tail on the left. This is aligned to two safe RL algorithms IQN and CQL. The phenomenon indicates that there is less chance to enter a low reward trajectory which is damaged by allocating highly-risk actions. However, the non-safe RL approach SAC is more evenly distributed on both extremes; Hence, SAC may enter a low reward trajectory with higher probability (heavier left tail) compared to the quasi-optimal learning and two safe RL baselines. This validates that quasi-optimal learning can avoid risky actions as the other two safe RL baselines.

\subsection{Real Data: A Ohio Type 1 Diabetes Case Study} 

 Ohio type 1 diabetes (OhioT1DM) dataset \citep{marling2020ohiot1dm} contains 2 cohorts of patients with Type-1 diabetes, each patient with 8 weeks of life-event data including health status measurements and insulin injection dosage. Clinicians are interested in adjusting insulin injection dose levels \citep{marling2020ohiot1dm,bao2011improving} based on patient's health status to maintain the glucose level in a certain range for safe dose suggestions. As each individual has dramatically distinctive glucose dynamics, We follow \citet{zhu2020causal} to regard each patient data as an independent dataset, and the data from each day as a trajectory. The state variables are health status measurements, and the action space is a bounded insulin dose range. The glycemic index is regarded as a reward function to measure the goodness of dose suggestion.

For individuals in the first cohort, we treat glucose level , carbon-hydrate intake, and acceleration level as state variables, i.e., $S_{i,1}^t,S_{i,2}^t$ and $S_{i,3}^t$ . For individuals in the second cohort, heart rate is used instead of acceleration level as $S_{i,3}^t$. The reward function is defined as
\$
R_i^t=-\frac{\mathbbm{1}(S_{i,1}^t>140)^{1.1}+\mathbbm{1}(S_{i,1}^t<80)(S_{i,1}^t-80)^2}{30}.
\$

Since the data-generating process is unknown, we follow \citet{luckett2020estimating} to utilize the Monte Carlo approximation of the estimated V-function of the initial state of each trajectory to evaluate the performance of each method. To better evaluate the stability and performance of each method, we randomly select 10 or 20 trajectories from each individual based on available trajectories 50 times and apply all methods to the selected data. The baseline refers to the observed discounted return. The mean and standard deviation of the improvements on the Monto Carlo discounted returns are presented in Table \ref{real_data}.

As shown in Table \ref{real_data}, the proposed method achieves the best performance among almost  all patients. The proposed method mitigates the off-support bias in this bounded dosage space and outperforms the competing methods. This finding is consistent with the results in the synthetic data and demonstrates the potential of
our method in continuous action spaces. 

\begin{table}  [h]
\setlength{\tabcolsep}{0.15em}
\footnotesize
  \caption{The discounted return for the policy improvement  based on 50 repeated experiments.}
  \renewcommand{\arraystretch}{1.1}
		\centering
\begin{tabular}{c|cccccccc }
\hline Patient ID & Proposed & DDPG & SAC & BEAR & Greedy-GQ & VL & CQL & IQN \\
\hline  $540$  & $\mathbf{18.6} \pm \mathbf{0.6}$ & $14.1 \pm 2.3$ & $14.2 \pm 1.2$ & $13.7 \pm 0.9$ & $15.5 \pm 2.4$  & $14.1 \pm 2.4$  & $17.0 \pm 0.9$  & $18.2 \pm 0.9$ \\
      \hline
        $544$  & $\mathbf{11.0} \pm \mathbf{0.7}$ & $7.5 \pm 1.5$ & $7.5 \pm 2.5$ & $5.9 \pm 0.8$ & $6.3 \pm 2.9$  & $8.1 \pm 2.9$  & $9.3 \pm 1.0$  & $9.8 \pm 1.0$\\
        \hline
        $552$  & $6.3 \pm \mathbf{0.4}$ & $4.8 \pm 0.5$ & $5.7 \pm 1.0$ & $3.6 \pm  0.6$ & $4.1 \pm 1.8$  & $5.2 \pm 1.3$  & $\mathbf{6.7} \pm 0.7$  & $6.1 \pm 0.8$\\
              \hline
        $567$  & $29.9 \pm 1.5$ & $30.0 \pm 2.0$ & $27.3 \pm 2.2$ & $29.6 \pm 1.2$ & $24.8 \pm 3.8$  & $20.2 \pm 2.8$  & $\mathbf{31.5} \pm 1.1$  & $29.8 \pm \mathbf{0.6}$\\
              \hline
        $584$  & $\mathbf{32.1} \pm \mathbf{0.8}$ & $27.0 \pm 2.0$ & $23.3 \pm 3.2$ & $26.9 \pm 1.3$ & $17.8 \pm 3.2$  & $18.7 \pm 2.6$  & $26.6 \pm 1.3$  & $27.7 \pm 1.2$  \\
              \hline
        $596$  & $\mathbf{5.5} \pm 1.1$ & $4.1 \pm 0.8$ & $4.5 \pm 0.9$ & $2.7 \pm 1.0$ & $2.7 \pm 1.8$ & $3.7 \pm 3.0$  & $4.6 \pm 0.6$  & $4.7 \pm \mathbf{0.6}$\\
              \hline
        $559$  & $\mathbf{24.1} \pm 1.4$ & $20.1 \pm 1.2$ & $19.6 \pm 1.2$ & $19.6 \pm \mathbf{0.7}$ & $17.3 \pm 1.6$   & $20.6 \pm 2.7$  & $22.1 \pm 1.3$  & $22.6 \pm 1.2$ \\
              \hline
        $563$  & $\mathbf{11.6} \pm \mathbf{0.6}$ & $8.4 \pm 0.9$ & $9.3 \pm 0.7$ & $8.4 \pm 0.7$ & $9.2 \pm 1.5$ & $8.8 \pm 1.9$  & $9.4 \pm 0.7$  & $9.9 \pm 0.8$\\
              \hline
        $570$  & $25.0 \pm  \mathbf{0.8}$ & $24.5 \pm 1.4$ & $\mathbf{26.1} \pm 0.8$ & $25.8 \pm 0.8$ & $22.8 \pm 1.6$  & $22.6 \pm 1.5$  & $25.8 \pm 0.9$  & $25.9 \pm 0.8$ \\
              \hline
        $575$  & $\mathbf{15.5} \pm 1.0$ & $10.4 \pm 1.3$ & $8.8 \pm 1.4$ & $10.2 \pm 1.0$ & $5.7 \pm 2.8$ & $8.5 \pm 2.3$ & $12.6 \pm \mathbf{0.9}$ & $12.7 \pm 1.2$\\
              \hline
        $588$  & $\mathbf{18.6} \pm \mathbf{0.7}$ & $14.2 \pm 1.3$ & $13.5 \pm 1.5$ & $12.0 \pm 0.9$ & $10.0 \pm 3.1$  & $8.6 \pm 2.3$ & $15.7 \pm 0.8$ & $15.9 \pm 1.3$ \\
              \hline
        $591$  & $\mathbf{15.4} \pm 1.0$ & $12.3 \pm 0.6$ & $11.9 \pm \mathbf{0.6}$ & $12.8 \pm 0.7$ & $10.7 \pm 1.7$ & $10.5 \pm 2.6$ & $14.9 \pm 0.6$ & $15.2 \pm 0.7$ \\
\hline
\end{tabular} \label{real_data}
\end{table}

Besides the model performance, we also evaluate the safety in the following two dimensions for applying proposed method in real-world scenarios. 

 We illustrate the safety of the proposed method via evaluating the proportion of safe transition, i.e., from a fixed current state to a safe transition state. The goal of the OhioT1M case study is to maintain the glucose level in a safe range. The safe state in this study is defined as the state where the glucose level is within the range of 80-140 mg/dL. The reward function, i.e., the index of glycemic control tends to favor the safe range and penalize the risky scenario where the glucose level is out of the range of 80-140 mg/dL. The details of the evaluation procedure are summarized in the following. In offline OhioT1M dataset, we pick up the observed states which transited to risky states, i.e., the states out of the safe range of glucose level. On the picked-up states, we calculate the proportion of safe transition, in which the corresponding transition states are sampled from the transition kernel under the learned policy. The transition kernel is estimated by maximum likelihood estimation from the offline dataset. We summarize the results of the safe proportions on $1000$ transition samplings in the left panel of Figure \ref{safe}. As shown, the quasi-optimal learning achieves $82.2\%$ safe proportions, which outperforms $67.3\%$ in safe RL baseline IQN and $44.6\%$ in non-safe RL baseline SAC. By the results, we may conclude that quasi-optimal learning enjoys a better safety guarantee when applied to the medical domain.

\begin{figure}
\centering
\begin{minipage}[t]{0.48\textwidth}
\centering
\includegraphics[width=7cm]{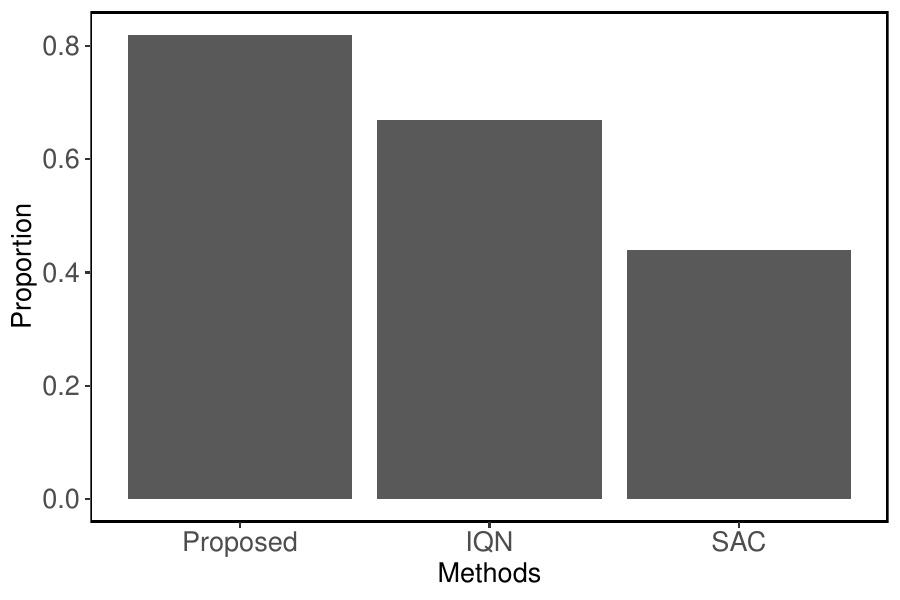}
\end{minipage}
\begin{minipage}[t]{0.48\textwidth}
\includegraphics[width=7cm]{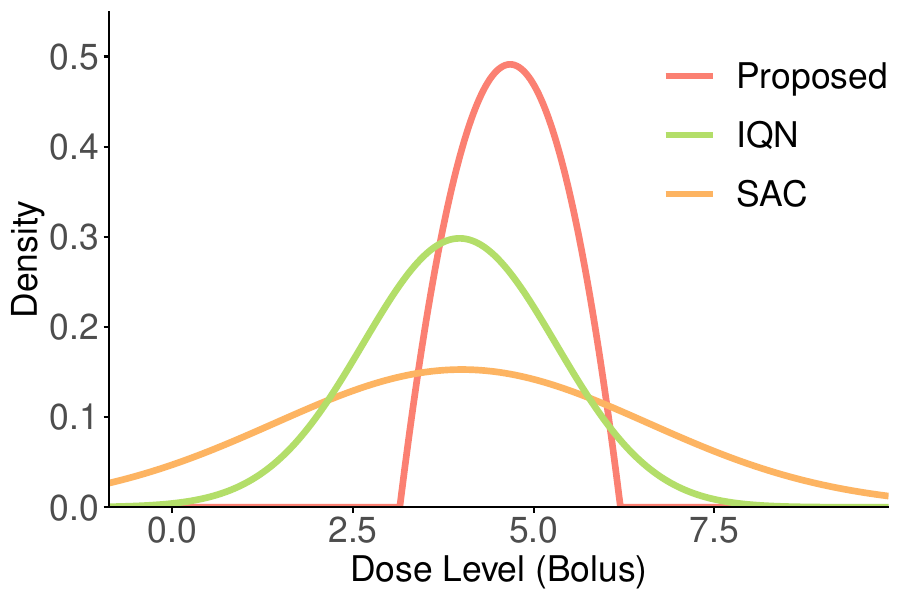}
\end{minipage}
\caption {Left panel: Proportions of safe transition from each method. Right panel: The learned policy distribution of each method for the same given state.}
   \label{safe}
\end{figure}

In the following, we illustrate the validity of the quasi-optimal policy distribution on a fixed state. In OhioT1M dataset, we select a patient state with a glucose level of $217$ mg/dL, which is moderate hyperglycemia. On this state, we draw a density plot in the right panel of Figure \ref{safe} for the policy distribution learned by the quasi-optimal learning, IQN, and SAC. The right panel of Figure\ref{safe}  shows that the quasi-optimal learning identified support regions $[3.15,6.19]$. As the patient is under moderate hyperglycemia, so the moderate insulin dosage, i.e., $[3.15,6.19]$, works well to decrease the glucose level into a safe range. Meanwhile, it avoids overly dropping the patient's glucose level and causes hypoglycemia. In comparison, SAC is risky as it has a non-negligible probability of assigning too low and too high insulin dosage to the patient. The policy learned by the safe RL algorithm IQN tends to avoid assigning extreme dosage, but it has wider support than the one learned by quasi-optimal learning. Regarding efficiency or safety, the quasi-optimal has certain advantages compared with IQN in this case.

\section{Conclusions}
\label{sec:conclusion}
We introduce a novel quasi-oracle learning algorithm for continuous action allocations, which is particularly useful in determining the dose level when developing medical treatment regimes. The quasi-optimal learning algorithm is provably convergent in off-policy cases, and a PAC bound is provided to analyze its sample complexity. The promising results arise some interesting directions for future works, including extending the framework to online settings interacting with environments.

\bibliographystyle{apalike}
\bibliography{mycite}

\newpage

\appendix

\vspace{0.2in}

\appendixpage
\startcontents[sections]
\printcontents[sections]{l}{1}{\setcounter{tocdepth}{3}}

\newpage



\section{Additional Related Works}

We discuss additional related works in this section.

\textbf{Safe RL} \; Safe Reinforcement Learning (safe-RL) aims at finding an optimal policy while ensuring safety \citep{garcia2015comprehensive}. In the safe-RL framework, the definition of safety and its guarantee varies based on the specific purpose of learning tasks. In our view, there are three mainstream works for safe RL. 
\begin{itemize}
\item Safe Exploration: ensuring safe action allocations in the exploration process by incorporating prior knowledge, which often exists in online RL settings \citep{pham2018optlayer}. 
\item Safety Constraints: finding an optimal policy that satisfies external user-specified safe constraints \citep{chow2018lyapunov, gu2022review}.  
\item Risk-sensitivity and Conservatism: finding a policy maximizing the infinite-horizon cumulative discounted reward while incorporating the notion of risk \citep{morimura2010nonparametric,mavrin2019distributional}, e.g., value at risk (quantile), percentile performance, chance, the variance of return. 
\end{itemize}

In medical applications, specifying explicit constraints is typically hard to realize in practice \citep{vincent2014reinforcement}. Alternatively, the notion of safety is usually incorporated in the design of reward functions, where high-risk actions lead to significantly low reward \citep{raghu2017deep,jia2020safe}.

Based on these, our quasi-optimal learning is closely related to the risk-sensitive RL framework, which aims to control value at risk to ensure safety. For example, maintaining the discounted return above a certain threshold \citep{tamar2015optimizing}, reducing the variability of performance by avoiding extremely low performance \citep{ma2020dsac}, or target to maximize the robust performance criterion, e.g., quantile of the discounted return \citep{dabney2018distributional}. Commonly used algorithms in risk-sensitive RL include conservative Q-learning (CQL; \citet{kumar2020conservative}) and implicit quantile network (IQN; \citep{dabney2018implicit}). CQL learns a conservative Q-function such that the expected value of a policy under this Q-function lower-bounds its true value and thus avoids selecting high-risk actions with over-estimation action value. IQN models the full quantile function for the state-action return distribution and yields risk-sensitive policies. For a more comprehensive empirical study, we compare the proposed algorithm with the aforementioned two safe RL baselines, conduct additional numerical experiments and analyze the results from the safety point of view.

\textbf{RL in healthcare }\; Reinforcement learning has a wide variety of applications in healthcare \citep{yu2021reinforcement}. Some of the recent works aim to solve safety issues when applying RL to healthcare domains. \cite{tang2020clinician} considers identifying set-valued policies with near-optimal actions, which allows incorporating expert knowledge from clinicians to assist in decision making.  As the same rationale in our proposed  quasi-optimal region, \cite{tang2020clinician} also utilizes the value function to threshold a near-optimal action set. However, this method is only developed on discrete action space, and it is still not directly applicable in fully offline settings. \cite{fatemi2021medical} considers identifying high-risk states in data-constrained offline settings by training two separate Q functions that model the probability of negative outcomes and positive outcomes respectively. They target to identify treatments proportional to their chance of leading to dead-ends, and attain safety by excluding these treatments from consideration. However, as they aim to identify possible “dead-ends” of a state space and treatments, there exists a trade-off between safety and optimality. In particular, it still has a gap for optimal treatment allocations.

Other interesting works in RL for healthcare including  \cite{henry2015targeted,komorowski2018artificial} adopt RL algorithms for sepsis treatment recommendations, \cite{jia2020safe} redefine the state variables and reward function to reflect practical safety concerns in sepsis treatments. We refer readers to \citep{yu2021reinforcement} for a more comprehensive review.


\section{Technical Proofs}

\subsection{Proofs on Constructing Quasi-Optimal Bellman Operator}
\subsubsection{Proof of Theorem \ref{kkt}}
\begin{customthm}{S.1}
Assume the induced policy has density function $\pi_{\mu}^*(a|s)\leq \mathbf{C}$ for all $a,s$, where $\mathbf{C}$ is a given constant. Then the proximal Bellman operator $\mathcal{B}_{\mu}$ in equation \eqref{bellprox} has a closed form equivalent:
\begin{equation}
\begin{aligned}
& \mathcal{B}_{\mu}V_{\mu}^*(s)  =\mu\left\{ 1-\int_{a\in \mathcal{W}_{s,1}}\left[\left(\frac{\int_{a\in \mathcal{W}_{s,1}} Q_{\mu}^*(s,a) da}{2\mu \sigma(\mathcal{W}_{s,1})}-\frac{1}{\sigma(\mathcal{W}_{s,1})}\right)^2 -\left(\frac{Q_{\mu}^*(s,a)}{2\mu}\right)^2\right] da \right\} \\
& +  \frac{\mathbf{C}\sigma(\mathcal{W}_{s,1}) \int_{a\in \mathcal{W}_{s,2}} Q_{\mu}^{*}(s,a) da- \mathbf{C}\sigma(\mathcal{W}_{s,2})\int_{a\in \mathcal{W}_{s,1}} Q_{\mu}^*(s,a) da}{2\sigma(\mathcal{W}_{s,1})} -\frac{\mu \mathbf{C}^2\sigma(\mathcal{W}_{s,2})(\sigma(\mathcal{W}_{s,2})+\sigma(\mathcal{W}_{s,1}))}{\sigma(\mathcal{W}_{s,1})}, 
\end{aligned}\label{vfunction}
\end{equation}
where $\mathcal{W}_{s,1}$ refers to the set $\{a \in \mathcal{A}: \mathbf{C}>\pi_{\mu}^*(a|s)>0\}$, $\mathcal{W}_{s,2}$ refers to the set $\{a \in \mathcal{A}: \pi_{\mu}^*(a|s)=\mathbf{C}\}$.
\label{kkt}
\end{customthm}
\textbf{Proof:} The proof is mainly to check the KKT conditions of the maximization. The Lagrangian function of the RHS of \eqref{bellprox} can be expressed as follows:
\begin{align*}
L(\pi,\tilde \eta,\varpi_1,\varpi_2) & = \mathbb{E}_{a\sim \pi_(\cdot|s)}\left[Q_{\mu}(s,a)+\mu\text{prox}(\pi(a|s)) \right]-\tilde \eta(s)\left(\int_{a\in \mathcal{A}} \pi(a|s)
da-1\right)\\
&+\varpi_1(s,a)\pi(a|s)-\varpi_2(s,a)(\pi(a|s)-\mathbf{C}).
\end{align*}
The following KKT conditions are necessary for the maximizer $\pi_{\mu}^*$ in the equation:
\begin{itemize}
    \item Primal: $\int_{a\in \mathcal{A}} \pi_{\mu}^*(a|s) da-1=0$, $-\pi_{\mu}^*(a|s)\leq 0$, $\pi_{\mu}^*(a|s)\leq \mathbf{C}$.
    \item Duality: $\varpi_1(s,a)\geq 0$,  $\varpi_2(s,a)\geq 0$.
    \item Complementary slackness: $\varpi_1(s,a)\pi_{\mu}^*(a|s)=0$,   $\varpi_2(s,a)(\pi^*_{\mu}(a|s)-\mathbf{C})=0$.
    \item Stationarity: $Q_{\mu}^*(s,a)+\mu(1-2\pi_{\mu}^*(a|s))-\tilde \eta(s)+\varpi_1(s,a)-\varpi_2(s,a)=0$.
\end{itemize}
We can obtain the equation for $\pi_{\mu}(a|s)$ from the stationary condition such that
$$
\pi_{\mu}^*(a|s)=\frac{1}{2}-\frac{1}{2\mu}[\tilde \eta(s)-Q_{\mu}^*(s,a)-\varpi_1(s,a)+\varpi_2(s,a)].
$$
Combined with complementary slackness condition, 
\begin{itemize}
    \item If $\pi_{\mu}^*(a|s)=0$, then $\varpi_1(s,a)\geq 0$, $\varpi_2(s,a) = 0$, thus $Q_{\mu}^*(s,a)\leq \tilde \eta(s)-\mu$.
    \item If $\mathbf{C>}\pi_{\mu}^*(a|s)>0$, then $\varpi_1(s,a)=\varpi_2(s,a)= 0$, thus \newline $\tilde \eta(s)-\mu+2\mu \mathbf{C}>Q_{\mu}^*(s,a)> \tilde\eta(s)-\mu$.
    \item  If $\pi_{\mu}^*(a|s)=\mathbf{C}$, then $\varpi_1(s,a)= 0$, $\varpi_2(s,a) \leq 0$, thus $Q_{\mu}^*(s,a)\geq \tilde \eta(s)-\mu+2\mu \mathbf{C}$.
\end{itemize} 

Therefore, $\pi_{\mu}^*(s,a)$ can be expressed as:
\begin{equation}
\pi_{\mu}^*(a|s)=\begin{cases}
          0 \quad &\text{if} \,Q_{\mu}^*(s,a)\leq \tilde \eta(s)-\mu \\
          \frac{1}{2}-\frac{1}{2\mu}\big(\tilde \eta(s)-Q_{\mu}^*(s,a)\big)  \quad &\text{if} \,  \tilde \eta(s)-\mu+2\mu \mathbf{C}>Q_{\mu}^*(s,a)> \tilde \eta(s)-\mu\\
          \mathbf{C} \quad &\text{if} \, Q_{\mu}^*(s,a)\geq \tilde \eta(s)-\mu+2\mu \mathbf{C}\\
     \end{cases}
 \label{pitmp}
\end{equation}

Meanwhile, notice that $\int_{a\in \mathcal{A}}\pi_{\mu}^*(s,a)=1$, we can show that $\tilde \eta(s)$ has a closed form:
$$
\tilde \eta(s)=\mu+\frac{\int_{a\in \mathcal{W}_{s,1}}Q_{\mu}^*(s,a)da-2\mu+2\mu \mathbf{C}\sigma(\mathcal{W}_{s,2})}{\sigma(\mathcal{W}_{s,1})},
$$
where $\mathcal{W}_{s,1}$ refers to the set $\{a \in \mathcal{A}: C>\pi_{\mu}^*(a|s)>0\}$, $\mathcal{W}_{s,2}$ refers to the set $\{a \in \mathcal{A}: \pi_{\mu}^*(a|s)=C\}$, and $\sigma(\mathcal{W}_{s,1})$, $\sigma(\mathcal{W}_{s,1})$ refers to the interval length of the corresponding set. We take $\tilde \eta(s)$ back to \eqref{pitmp}, we then have 

\begin{equation}
\pi_{\mu}^*(a|s)=\begin{cases}
          0 \quad &\text{if} \,Q_{\mu}^*(s,a)\leq \tilde \eta(s)-\mu \\
          \frac{Q_{\mu}^*(s,a)}{2\mu}-\frac{\int_{a\in \mathcal{W}_{s,1}} Q_{\mu}^*(s,a)da}{2\mu\sigma(\mathcal{W}_{s,1})}+\frac{1-\mathbf{C}\sigma(\mathcal{W}_{s,2})}{\sigma(\mathcal{W}_{s,1})} \quad &\text{if} \,  \tilde \eta(s)-\mu+2\mu \mathbf{C}>Q_{\mu}^*(s,a)> \tilde\eta(s)-\mu\\
        \mathbf{C} \quad &\text{if} \, Q_{\mu}^*(s,a)\geq \tilde\eta(s)-\mu+2\mu \mathbf{C} \\
     \end{cases}
\label{comp-pi}
\end{equation}

We finally plug in the closed form of $\pi_{\mu}^*(a|s)$ to \eqref{bellprox}, by some algebra, we have 
\begin{align*}
& \mathcal{B}_{\mu}V_{\mu}^*(s)  =\mu\left\{ 1-\int_{a\in \mathcal{W}_{s,1}}\left[\left(\frac{\int_{a\in \mathcal{W}_{s,1}} Q_{\mu}^*(s,a) da}{2\mu \sigma(\mathcal{W}_{s,1})}-\frac{1}{\sigma(\mathcal{W}_{s,1})}\right)^2 -\left(\frac{Q_{\mu}^*(s,a)}{2\mu}\right)^2\right] da \right\} \\
& +  \frac{\mathbf{C}\sigma(\mathcal{W}_{s,1}) \int_{a\in \mathcal{W}_{s,2}} Q_{\mu}^{*}(s,a) da- \mathbf{C}\sigma(\mathcal{W}_{s,2})\int_{a\in \mathcal{W}_{s,1}} Q_{\mu}^*(s,a) da}{2\sigma(\mathcal{W}_{s,1})} -\frac{\mu \mathbf{C}^2\sigma(\mathcal{W}_{s,2})(\sigma(\mathcal{W}_{s,2})+\sigma(\mathcal{W}_{s,1}))}{\sigma(\mathcal{W}_{s,1})}.
\end{align*}

\subsubsection{Proof of Corollary \ref{closed-form bellman}}
\begin{customcoro}{S.1}\label{closed-form bellman}
When $\sigma(\mathcal{W}_{s,2})=0$, we denote $\mathcal{W}_1$ as $\mathcal{W}$, the closed form in \eqref{vfunction} can be simplified as
\begin{equation*}
 \mathcal{B}_{\mu}V_{\mu}^{*}(s) = \mu - \frac{1}{4\mu}\left(\frac{(\int_{a^{\prime}\in \mathcal{W}_{s}} Q_{\mu}^{*}(s,a^{\prime})da^{\prime}-2\mu)^2}{ \sigma(\mathcal{W}_{s})} - \int_{a\in \mathcal{W}_{s}} {Q_{\mu}^{*}}^2(s,a)da\right).
\end{equation*}
\end{customcoro}

\textbf{Proof:} We plug in $\sigma(\mathcal{W}_{s,2})=0$ to \eqref{vfunction}, then could obtain the result.

\subsubsection{Proof of Theorem \ref{approx}}
\textbf{Proof of Theorem \ref{approx}:}
For any generic value function $V(s)$ and the corresponding generic Q-function $Q(s,a)$, we first build the lower bound: 
\begin{align*}
    \mathcal{B}_{\mu}V(s) & =\max_{\pi\in  \Delta_{\text{convex}}(\mathcal{A})}\mathbb{E}_{a\sim \pi(\cdot|s)}[Q(s,a)+\mu(1-\pi(a|s))]\\
    & \geq \max_{\pi\in  \Delta_{\text{convex}}(\mathcal{A})}\mathbb{E}_{a\sim \pi(\cdot|s)}[Q(s,a)+\mu-\mu \mathbf{C}]\\
    & = \mathcal{B}V(s)+\mu(1-\mathbf{C}).
\end{align*}

For the upper bound:
\begin{align*}
    \mathcal{B}_{\mu}V(s) & =\max_{\pi\in  \Delta_{\text{convex}}(\mathcal{A})(\mathcal{A})}\mathbb{E}_{a\sim \pi(\cdot|s)}[Q(s,a)+\mu(1-\pi(a|s))]\\
    & \leq \max_{\pi\in  \Delta_{\text{convex}}(\mathcal{A})(A)}\mathbb{E}_{a\sim \pi(\cdot|s)}[Q(s,a)+\mu]\\
    & = \mathcal{B}V(s)+\mu.
\end{align*}
Therefore, we have $ \mathcal{B}_{\mu}V(s)-\mathcal{B}V(s) \in [\mu(1-\mathbf{C}),\mu]$.

\subsubsection{Proof of Theorem \ref{qgau}}
\textbf{Proof of Theorem \ref{qgau}:} Suppose $Q_{\mu}^*(s,a)=-\alpha_1(s)a^2+\alpha_2(s)a+\alpha_3(s)$ with $\alpha_1(s)>0$. We assume the density won't reach its boundary value $\mathbf{C}$ for this theorem, and we proceed by simplifying $\alpha_i(s)$ as $\alpha_i$ for $i=1,2,3$. By Equation \eqref{pdfpi}, we have 
\begin{equation*}
    \pi_{\mu}^*(a|s)=\bigg\{\frac{Q_{\mu}^*(s,a)}{2\mu}-\frac{\int_{a\in \mathcal{W}_s} Q_{\mu}^*(s,a) da}{2\mu\sigma(\mathcal{W}_s)}+\frac{1}{\sigma(\mathcal{W}_s)}\bigg\}^+. 
\end{equation*}
We first try to find the support set of $\pi_{\mu}^*(a|s)$. Since $Q_{\mu}^*$ takes the maximum value at $y=\frac{\alpha_2}{2\alpha_1}$, by the symmetric property of quadratic function, the support set should be of the form $\mathcal{W}_s=[y-l,y+l] (l>0)$. Additionally, the boundary point of the support set should be the solution of 
$$
\frac{Q_{\mu}^*(s,a)}{2\mu}-\frac{\int_{a\in \mathcal{W}_s} Q_{\mu}^*(s,a)da}{2\mu\sigma(\mathcal{W}_s)}+\frac{1}{\sigma(\mathcal{W}_s)}=0,
$$
with respect to $a$. Thus, we can find the boundary point of the support set by solving the equation with respect to $l$:
$$
-\alpha_1 (y\pm l)^2+\alpha_2 (y\pm l)+ \alpha_3 = \frac{1}{2l}\int^{y+l}_{y-l}( -\alpha_1a^2+\alpha_2 a +\alpha_3) da-\frac{\mu}{l}.
$$
It turns out that $l=\frac{(12\alpha_1^2\mu)^{\frac{1}{3}}}{2\alpha_1}$. Thus, the support set has the closed-form 
\begin{equation*}
    \mathcal{W}_s=\left\{a: a\in \left[\frac{\alpha_2-(12\alpha_1^2\mu)^{\frac{1}{3}}}{2\alpha_1},\frac{\alpha_2+(12\alpha_1^2\mu)^{\frac{1}{3}}}{2\alpha_1}\right]\right\}.
\end{equation*}
Therefore $\sigma(\mathcal{W}_s)=\frac{(12\alpha_1^2\mu)^{\frac{1}{3}}}{\alpha_1}$, and
$$
\frac{\int_{a\in \mathcal{W}_s} Q_{\mu}^*(s,a) da}{2\mu\sigma(\mathcal{W}_s)} = -\frac{(12\alpha_1^2\mu)^{\frac{2}{3}}-3\alpha_2^2}{24\mu\alpha_1}+\frac{\alpha_3}{2\mu}.
$$
We plug in the result to the closed form of $\pi_{\mu}^*(a|s)$, and obtain the probability density function 
\begin{equation*}
    \pi_{\mu}^*(a|s)=\left\{\frac{\alpha_1}{2\mu}(a+\frac{\alpha_2}{2\alpha_1})^2-\frac{3}{2}(\frac{\alpha_1}{12\mu})^{\frac{1}{3}}\right\}^{+}.
\end{equation*}
It is clear that the resulting distribution of $\pi_{\mu}^*(a|s)$ is of the exact form of $q$-Gaussian distribution with $q=0, \beta=\frac{\alpha_1}{2\mu}$ and centered at $\frac{\alpha_2}{2\alpha_1}$. 

\subsection{Proofs on Quasi-Optimal Staionarity Equation}
\subsubsection{Proof of Theorem \ref{tempconsis}}
\textbf{Proof of Theorem \ref{tempconsis}:} By the stationary condition from Theorem \ref{kkt} we have 
$$Q_{\mu}^*(s,a)+\mu(1-2\pi_{\mu}^*(a|s))-\tilde \eta(s)+\varpi_1(s,a)-\varpi_2(s,a)=0,$$
therefore, by the definition of $Q_{\mu}^*(s,a)$, we have 
\begin{equation}
 \mathbb{E}_{S^{t+1}|s,a}[R(S^{t+1},s,a)]+\gamma \mathbb{E}_{S^{t+1}|s,a} [V_{\mu}^*(S^{t+1})]+\mu(1-2\pi_{\mu}^*(a|s))-\tilde \eta(s)+\varpi_1(s,a)-\varpi_2(s,a)=0.
\label{stationary}
\end{equation}

Notice that $\mathbb{E}_{S^{t+1}|s,a}[R(S^{t+1},s,a)]=r(s,a)$, and we take expectation with respect to $a$ following the policy distribution $\pi_{\mu}^*(a|s)$ from both sides of \eqref{stationary},
\begin{align*}
 0 &= \mathbb{E}_{a\sim \pi_{\mu}^*(a|s)}\left[r(s,a)+\gamma \mathbb{E}_{S^{t+1}|s,a} [V_{\mu}^*(S^{t+1})]+\mu(1-2\pi_{\mu}^*(a|s))-\tilde \eta(s)+\varpi_1(s,a)-\varpi_2(s,a)\right], \\
 0 &= \int_{a\in \mathcal{A}} \pi_{\mu}^*(a|s)\left[r(s,a)+\gamma \mathbb{E}_{S^{t+1}|s,a} [V_{\mu}^*(S^{t+1})]+\mu(1-2\pi_{\mu}^*(a|s))-\tilde\eta(s)+\varpi_1(s,a)-\varpi_2(s,a)\right] da.
\end{align*}
According to the proximal Bellman optimality equation $\mathcal{B}_{\mu}V_{\mu}^*(s)=V_{\mu}^*(s)$, where $V_{\mu}^*(s)$ is the fixed point of $\mathcal{B}_{\mu}$. With the explicit definition of $V_{\mu}^*$, we observe that
\begin{align*}
0 = & \int_{a\in \mathcal{A}} \pi^*_{\mu}(a|s)\left[r(s,a)+\gamma \mathbb{E}_{S^{t+1}|s,a} [V^*_{\mu}(S^{t+1})]+\mu(1-\pi^*_{\mu}(a|s))\right] da   -\int_{a\in \mathcal{A}} \mu \pi^{*^2}_{\mu}(a|s) da\\
&-\int_{a\in \mathcal{A}} \pi^*_{\mu}(a|s)\tilde \eta(s) da + \int_{a\in \mathcal{A}} \pi^*_{\mu}(a|s) \varpi_1(s,a) da -\int_{a\in \mathcal{A}} \pi^*_{\mu}(a|s) \varpi_2(s,a) da\\
= & V_{\mu}^*(s)-\int_{a\in \mathcal{A}} \mu \pi_{\mu}^{*^2}(a|s) da-\int_{a\in \mathcal{A}} \pi^*_{\mu}(a|s)\tilde \eta(s) da + \int_{a\in \mathcal{A}} \pi^*_{\mu}(a|s) \varpi_1(s,a) da \\
&-\int_{a\in \mathcal{A}} \pi^*_{\mu}(a|s) \varpi_2(s,a) da
\end{align*}
Meanwhile $\int_{a\in \mathcal{A}} \pi^*_{\mu}(a|s)\tilde\eta(s) da =\tilde\eta(s) \int_{a\in \mathcal{A}} \pi^*_{\mu}(a|s) da=\tilde\eta(s)$ by the property of density, $\int_{a\in \mathcal{A}} \pi^*_{\mu}(a|s) \varpi_1(s,a) da =0$, and $\int_{a\in \mathcal{A}} \pi^*_{\mu}(a|s) \varpi_2(s,a) da =\mathbf{C}\int_{a\in \mathcal{A}} \varpi_2(s,a) da$ by complete slackness, we further have 
$$
V_{\mu}^*(s)-\mu \int_{a\in \mathcal{A}} \pi^{*^2}_{\mu}(a|s) da-\tilde \eta(s)-\mathbf{C}\int_{a\in \mathcal{A}} \varpi_2(s,a)da =0.
$$
Since $0 \leq \pi_{\mu}^*(a|s)\leq \mathbf{C}$, thus $\mu\int_{a\in \mathcal{A}} \pi^{*^2}_{\mu}(a|s) da=\mu \mathbb{E}\pi^*_{\mu}(a|s) \in [0, \mathbf{C}]$. Therefore, 
$$
\eta(s):=\tilde\eta(s)-V_{\mu}^*(s)\in [-\mu \mathbf{C}-\mathbf{C}\int_{a \in \mathcal{A}}\varpi_2(s,a)da,-\mathbf{C}\int_{a \in \mathcal{A}}\varpi_2(s,a)da].
$$
The stationary condition can be reformulated as 
\begin{equation}
\mathbb{E}_{S^{t+1}|s,a}\big[R(S^{t+1},s,a)+\gamma V_{\mu}^*(S^{t+1}) \big]-\mu\text{prox}^{\circ}(\pi_{\mu}^*(a|s))-\eta(s)+\varpi_1(s,a)-\varpi_2(s,a)-V_{\mu}^*(s)=0.
\label{bigpsi_opt}
\end{equation}
Obviously, $(\pi_{\mu}^*,V_{\mu}^*)$ is a solution for the above equation for some $\eta(s),\varpi_1(s,a)$, and $\varpi_2(s,a)$, such that 
\begin{align*}
\varpi_1(s,a)\geq 0, & \varpi_2(s,a)\geq 0, \varpi_1(s,a)\cdot \pi_{\mu}(a|s)=0, \varpi_2(s,a)\cdot(\mathbf{C}-\pi_{\mu}(a|s))=0\\
\text{and } \eta(s)\in & \Big[-\mu \mathbf{C}-\mathbf{C}\int_{a \in \mathcal{A}}\varpi_2(s,a)da,-\mathbf{C}\int_{a \in \mathcal{A}}\varpi_2(s,a)da\Big].  
\end{align*}

When $\sigma(\mathcal{W}_{s,2})=0$, we have $\varpi_2(s,a)=0$. \newline
Plugging in to equation \eqref{bigpsi_opt}, and denote $\mathcal{W}_{s,1}$ as $\mathcal{W}_s$, we have the exact form of \eqref{tempcon}.

\subsection{Proofs on Kernel Representation}

\subsubsection{Proof of Theorem \ref{cts_weight}}
\begin{customthm}{S.2}\label{cts_weight}
We define the optimal weight function as $u^*=\argmax_{u\in L^2(C_0)}\mathcal{L}^2(V_{\mu},\pi_{\mu} ,{\eta},\varpi,u)$. Let $\mathbb{C}(\mathcal{S} \times \mathcal{A})$ be all continuous functions on $\mathcal{S} \times \mathcal{A}$. For any $(s,a) \in \mathcal{S} \times \mathcal{A}$ and $s^{\prime} \in \mathcal{S}$, the optimal weight function $u^*(S^t,A^t) \in   L^2(C_0)  \cap \mathbb{C}(\mathcal{S} \times \mathcal{A})$ and is unique if the reward function $R(s^{\prime},s,a)$ and the transition kernel $\mathbf{P}(s^{\prime}|s,a)$ are continuous over $(s,a)$.
\end{customthm}

\textbf{Proof: } Denote $\widetilde{u} = {\mathcal{G}}_{V_\mu,\pi_\mu}(S^{t},A^{t},S^{t+1}) - {\eta}(S^{t})  + \varpi(S^{t},A^t)  - V_{\mu}(S^{t})$.
It follows from the definition of $\mathcal{L}^2(V_{\mu},\pi_{\mu},{\eta},\varpi ,u)$, we have that
\begin{align*}
 & \min_{V_{\mu},\pi_{\mu},{\eta},\varpi} \max_{u} \quad \mathcal{L}^2(V_{\mu},\pi_{\mu} ,{\eta},\varpi,u) \\
 = \ &  \min_{V_{\mu},\pi_{\mu},{\eta},\varpi} \max_{u} \;  \left( \mathbb E_{S^{t}, A^{t},S^{t+1}} \big[\big(
{\mathcal{G}}_{V_\mu,\pi_\mu}(S^{t},A^{t},S^{t+1}) - {\eta}(S^{t})  + \varpi(S^{t},A^t))  - V_{\mu}(S^{t})\big) u(S^{t},A^{t}) \big] \right)^2 \\
=\ & \min_{V_{\mu},\pi_{\mu},{\eta},\varpi} \max_{u} \;  \big \langle  \big(
{\mathcal{G}}_{V_\mu,\pi_\mu}(S^{t},A^{t},S^{t+1}) - {\eta}(S^{t})  + \varpi(S^{t},A^t))  - V_{\mu}(S^{t})\big), u(S^{t},A^{t})  \big \rangle^2 \\
=\ & \min_{V_{\mu},\pi_{\mu},{\eta},\varpi}  \;  \big \langle  \big(
{\mathcal{G}}_{V_\mu,\pi_\mu}(S^{t},A^{t},S^{t+1}) - {\eta}(S^{t})  + \varpi(S^{t},A^t))  - V_{\mu}(S^{t})\big), \frac{\sqrt{C_0}\widetilde{u}}{\|\widetilde{u}\|_{L^2}})  \big \rangle^2 \\
=\ & \min_{V_{\mu},\pi_{\mu},{\eta},\varpi}  \;  \big \langle  \big(
{\mathcal{G}}_{V_\mu,\pi_\mu}(S^{t},A^{t},S^{t+1}) - {\eta}(S^{t})  + \varpi(S^{t},A^t))  - V_{\mu}(S^{t})\big),  \big(
{\mathcal{G}}_{V_\mu,\pi_\mu}(S^{t},A^{t},S^{t+1}) - {\eta}(S^{t})  + \\
\ & \qquad  \qquad \qquad  \varpi(S^{t},A^t))  - V_{\mu}(S^{t})\big)  \big \rangle
\cdot \;  \bigg \langle \frac{ C_0 \widetilde{u}}{\|\widetilde{u}\|_{L^2}}, \frac{ \widetilde{u}}{\|\widetilde{u}\|_{L^2}} \bigg \rangle  \\
= \ & \min_{V_{\mu},\pi_{\mu},{\eta},\varpi} \quad  \big \langle  \big(
{\mathcal{G}}_{V_\mu,\pi_\mu}(S^{t},A^{t},S^{t+1}) - {\eta}(S^{t})  + \varpi(S^{t},A^t))  - V_{\mu}(S^{t})\big),  \big(
{\mathcal{G}}_{V_\mu,\pi_\mu}(S^{t},A^{t},S^{t+1}) - {\eta}(S^{t})  + \\
\ & \qquad  \qquad \qquad  \varpi(S^{t},A^t))  - V_{\mu}(S^{t})\big)  \big \rangle \\
= \ &  \min_{V_{\mu},\pi_{\mu},{\eta},\varpi} \quad  \mathbb E_{S^{t}, A^{t}} \Big[ \sqrt{C_0}\Big(
{\mathcal{G}}_{V_\mu,\pi_\mu}(S^{t},A^{t},S^{t+1}) - {\eta}(S^{t})  + \varpi(S^{t},A^t))  - V_{\mu}(S^{t})\Big) \Big]^2,
\end{align*}
where the third equality is obtained by maximization condition of the inner product
between $u$ and $
{\mathcal{G}}_{V_\mu,\pi_\mu}(S^{t},A^{t},S^{t+1}) - {\eta}(S^{t})  + \varpi(S^{t},A^t)  - V_{\mu}(S^{t})$ is that the two terms should have the same direction; the fourth equality is obtained by the equality condition of the Cauchy-Schwartz inequality.

Such finding indicates that there exists a closed form solution of the the optimal weight function $u^*$, such that
$$
u^*(s,a) = {\mathcal{G}}_{V_\mu^*,\pi_\mu^*}(s,a,s^{\prime}) - {\eta}(s)  + \varpi(s,a)  - V_{\mu}^*(s),
$$
which is equal to $\widetilde{u}$ when $(V_{\mu},\pi_{\mu})=(V_{\mu}^*,\pi_{\mu}^*)$.

Notice that for a given $\mu$, $\mathcal{W}_s$ is fully determined by $Q_{\mu}^*(s,a)$, thus by Equation \eqref{vfunc},\eqref{pdfpi}, we have that $\pi_{\mu}^*(a|s),V_{\mu}^*(s)$ is continuous over $Q_{\mu}^*(s,a)$. Additionally, by the complete slackness and stationary condition in Theorem \ref{kkt}, we have
\begin{align*}
& - {\eta}(s)  + \varpi(s,a)  = -Q_{\mu}^*(s,a)-\mu+V_{\mu}^*(s), \qquad \text{if $\varpi(s,a)\neq 0$}; \\
& - {\eta}(s)   = - Q_{\mu}^*(s,a)-\mu+2\mu\pi_{\mu}^*(a|s)+V_{\mu}^*(s), \quad \text{if $\varpi(s,a)= 0$}.
\end{align*}
Since $V_{\mu}^*,\pi_{\mu}^*$ can be represented by functions of $Q_{\mu}^*(s,a)$, the Lagrange multipliers $- {\eta}(s)  + \varpi(s,a)$ can also be represented by a function of $Q_{\mu}^*(s,a)$, and is also continuous over $Q_{\mu}^*(s,a)$.

As $\pi_{\mu}^*(a|s),V_{\mu}^*(s),- {\eta}(s)  + \varpi(s,a) $ are all continuous over $Q_{\mu}^*(s,a)$, we only need to prove that $Q_{\mu}^*(s,a)$ is continuous over $(s,a)$. By the stationarity equation in Theorem \ref{tempconsis}, $\mathbb{E}_{s^{\prime}|s,a}[R(s^{\prime},s,a)]=g(Q_{\mu}^*(s,a))$. Since the reward function $R(s^{\prime},s,a)$ and the transition kernel $\mathbf{P}(s^{\prime}|s,a)$ are continuous over $(s,a)$ by assumption, $Q_{\mu}^*(s,a)$ is continuous for any $(s,a)$ as  $\mathbb{E}_{s^{\prime}|s,a}[R(s^{\prime},s,a)]$ is continuous for any $(s,a)$. Therefore, the optimal weight function $u^*(s,a)$ is continuous over any arbitrary state-action pair $(s,a)$.

\subsubsection{Proof of Theorem \ref{rkhs_embed}}
\begin{customthm}{S.3}\label{rkhs_embed}
Suppose $u^*\in \mathcal{H}_{\mathcal{K}}^{C_0}$ is reproduced by a universal kernel $K(\cdot,\cdot)$, then the minimax optimizer \eqref{minimax} can be decoupled to a single-stage minimization problem as
\begin{equation*}
\begin{aligned}
&\min _{V_\mu, \pi_\mu, \eta, \varpi} \mathcal{L}_U=\mathbb{E}_{S^t, \widetilde{S}^t, A^t, \widetilde{A}^t, S^{t+1}, \tilde{S}^{t+1}}\left[\left(\mathcal{G}_{V_{\mu},\pi_\mu}\left(S^t, A^t, S^{t+1}\right)-\eta\left(S^t\right)+\varpi\left(A^t \mid S^t\right)-V_\mu\left(S^t\right)\right)\right.\\
&\left.\cdot C_0 K\left(S^t, A^t;\widetilde{S}^t, \widetilde{A}^t\right)\left(\mathcal{G}_{V_{\mu},\pi_\mu}(\widetilde{S}^t, \widetilde{A}^t, \widetilde{S}^{t+1})-\eta(\widetilde{S}^t)+\varpi(\widetilde{A}^t \mid \widetilde{S}^t)-V_\mu(\widetilde{S}^t)\right)\right],
\end{aligned}
\end{equation*}
\end{customthm}
where $(\widetilde{S}^t, \widetilde{A}^t,\widetilde{S}^{t+1})$ is an independent copy of the transition pair $(S^t,A^t,S^{t+1})$.

\textbf{Proof:} Let $\tilde u = \mathbb{E}_{S^t,A^t}\left[\left(\mathcal{G}_{V_\mu,\pi_mu}(S^t,A^t,S^{t+1})-\eta(S^t)+\varpi(A^t|S^t)-V_{\mu}(S^t)\right)K(\cdot,\{S^t,A^t\})\right].$, and define the inner product $\langle\cdot,\cdot\rangle_{\mathcal{H}_{\text{RKHS}}}$ in $\mathcal{H}_{\mathcal{K}}^{C_0}$. It follows from the definition of $\mathcal{L}(V_{\mu},\pi_{\mu},\eta,\varpi,u)$ and kernel reproducing property we have, 
\begin{equation*}
\begin{aligned}
&\min _{V_\mu, \pi_\mu, \eta, \varpi} \max _u \mathcal{L}^2\left(V_\mu, \pi_\mu, \eta, \varpi, u\right)\\
&=\min _{V_\mu, \pi_\mu, \eta, \varpi} \max _u\left(\mathbb{E}_{S^t, A^t} \left[\left(\mathcal{G}_{V_{\mu},\pi_\mu}\left(S^t, A^t, S^{t+1}\right)-\eta\left(S^t\right)+\varpi\left(A^t \mid S^t\right)-V_\mu\left(S^t\right)\right) u\left(S^t, A^t\right)\right]\right)^2\\
&=\min _{V_\mu, \pi_\mu, \eta, \varpi} \max _u\Bigg(\mathbb { E }_{ S ^ { t } , A ^ { t } } \Bigg[\bigg\langle\left(\mathcal{G}_{V_{\mu},\pi_\mu}\left(S^t, A^t, S^{t+1}\right)-\eta\left(S^t\right)+\varpi\left(A^t \mid S^t\right)-V_\mu\left(S^t\right)\right) \cdot\\
&\qquad \qquad \qquad \qquad \qquad K\left(\cdot  ;S^t, A^t\right), u\left(S^t, A^t\right)\bigg\rangle_{\mathcal{H}_{\mathrm{RKHS}}}\Bigg]\Bigg)^2\\
&=\min _{V_\mu, \pi_\mu, \eta, \varpi} \max _u\Bigg\langle\mathbb { E } _ { S ^ { t } , A ^ { t } } \left[\left(\mathcal{G}_{V_{\mu},\pi_\mu}\left(S^t, A^t, S^{t+1})-\eta\left(S^t\right)+\varpi\left(A^t \mid S^t\right)-V_\mu\left(S^t\right)\right)\right.\right. \cdot\\
& \qquad \qquad \qquad \qquad \qquad \left.\left.K\left(\cdot  ; S^t, A^t\right\}\right)\right], u\left(S^t, A^t\right)\Bigg\rangle_{\mathcal{H}_{\mathrm{RKHS}}}^2\\
&=\min _{V_\mu, \pi_\mu, \eta, \varpi}\Bigg\langle\mathbb { E } _ { S ^ { t } , A ^ { t } } \Bigg[\left(\mathcal{G}_{V_\mu,\pi_\mu}\left(S^t, A^t, S^{t+1}\right)-\eta\left(S^t\right)+\varpi\left(A^t \mid S^t\right)-V_\mu\left(S^t\right)\right) \cdot\\
& \qquad \qquad \qquad \qquad \qquad K\left(\cdot ;S^t, A^t\right)\Bigg], \frac{\sqrt{C_0} \widetilde{u}}{\|\widetilde{u}\|_{\mathcal{H}_{\mathrm{RKHS}}}}\Bigg\rangle_{\mathcal{H}_{\mathrm{RKHS}}}^2,
\end{aligned}
\end{equation*}
where the last equality holds because of the maximization of inner product between $\tilde u$ and $\mathbb{E}_{S^t,A^t}[\left(\mathcal{G}_{V_\mu,\pi_\mu}(S^t,A^t,S^{t+1})-\eta(S^t)+\varpi(A^t|S^t)-V_{\mu}(S^t)\right)K(\cdot ; S^t,A^t)]$ should have the same direction. Then we have,

\begin{equation*}
\begin{aligned}
&\min _{V_\mu, \pi_\mu, \eta, \varpi}\Big\langle\mathbb { E } _ { S ^ { t } , A ^ { t } } \left[\left(\mathcal{G}_{V_\mu,\pi_\mu}\left(S^t, A^t, S^{t+1}\right)-\eta\left(S^t\right)+\varpi\left(A^t \mid S^t\right)-V_\mu\left(S^t\right)\right)\cdot
K\left(\cdot; S^t, A^t \right)\right], \\
& \qquad \qquad \qquad \qquad \qquad \sqrt{C_0} \tilde{u} /\|\widetilde{u}\|_{\mathcal{H}_{\mathrm{RKHS}}} \Big\rangle_{\mathcal{H}_{\mathrm{RKHS}}}^2\\
&=\min _{V_\mu^, \pi_\mu, \eta, \varpi}\Big\langle\mathbb { E } _ { S ^ { t } , A ^ { t } } \left[\left(\mathcal{G}_{V_\mu,\pi_\mu}\left(S^t, A^t, S^{t+1}\right)-\eta\left(S^t\right)+\varpi\left(A^t \mid S^t\right)-V_\mu\left(S^t\right)\right) \cdot  K(\cdot; S^t,A^t) \right],\\
&\qquad \qquad \mathbb { E } _ { S ^ { t } , A ^ { t } } \left[\left(\mathcal{G}_{V_\mu,\pi_\mu}\left(S^t, A^t, S^{t+1}\right)-\eta\left(S^t\right)+\varpi\left(A^t \mid S^t\right)-V_\mu\left(S^t\right)\right)\cdot K(\cdot  ;S^t,A^t)\right]\Big \rangle\\
& \qquad \qquad \qquad \cdot\left\langle\frac{\tilde{u}}{\|\widetilde{u}\|_{\mathcal{H}_{\mathrm{RKHS}}}}, \frac{C_0 \widetilde{u}}{\|\widetilde{u}\|_{\mathcal{H}_{\mathrm{RKHS}}}}\right\rangle_{\mathcal{H}_{\mathrm{RKHS}}}\\
&=\min _{V_\mu, \pi_\mu, \eta, \varpi}\Bigg\langle\mathbb { E } _ { S ^ { t } , A ^ { t } } \left[\left(\mathcal{G}_{V_\mu,\pi_\mu}\left(S^t, A^t, S^{t+1}\right)-\eta\left(S^t\right)+\varpi\left(A^t \mid S^t\right)-V_\mu\left(S^t\right)\right) \cdot K\left(\cdot ;  S^t, A^t\right)\right],\\
& C_0 \mathbb{E}_{\widetilde{S}^t, \widetilde{A}^t}\left[\left(\mathcal{G}_{V_\mu,\pi_\mu}\left(\widetilde{S}^t, \widetilde{A}^t, \widetilde{S}^{t+1}\right)-\eta\left(\widetilde{S}^t\right)+\varpi(\widetilde{A}^t \mid \widetilde{S}^t)-V_\mu(\widetilde{S}^t)\right) \cdot K\left(\cdot  ;\widetilde{S}^t, \widetilde{A}^t\right)\right]\Bigg\rangle_{\mathcal{H}_{\mathrm{RKHS}}},
\end{aligned}
\end{equation*}
where the first equality is by the equality condition of Cauchy-Schwarz inequality, i.e. $\tilde u/\|\tilde u\|_{\mathcal{H}_{\text{RKHS}}}$ is linear dependent of $\mathbb{E}_{S^t,A^t}\left[\left(\mathcal{G}_{V_\mu,\pi_\mu}(S^t,A^t,S^{t+1})-\eta(S^t)+\varpi(A^t|S^t)-V_{\mu}(S^t)\right)K(\cdot ; S^t,A^t)\right].$ Then, by the reproducing property of $K(S^t,A^t ;\tilde S^t,\tilde A^t)$, we have 
\$
 & \min_{V_{\mu},\pi_{\mu},{\eta},\varpi} \max_{u \in \mathcal{H}_{\mathcal{K}}^{C_0}} \quad \mathcal{L}^2(V_{\mu},\pi_{\mu} ,{\eta},\varpi,u) \\
=\ & \min_{V_{\mu},\pi_{\mu},{\eta},\varpi} \quad  \mathbb{E}_{S^{t},\widetilde{S}^{t} , A^{t}, \widetilde{A}^{t}} \Big[\big(
{\mathcal{G}}_{V_\mu,\pi_\mu}(S^{t},A^{t},S^{t+1})   - {\eta}(S^{t})  + \varpi(S^{t},A^t))  - V_{\mu}(S^{t})\big) \\
&\qquad {C_0} \Big \langle K\big( S^{t},A^{t} ;  \cdot \big),    K\big( \widetilde{S}^{t},\widetilde{A}^{t} ; \cdot \big) \Big \rangle_{\mathcal{H}_{\text{RKHS}}} \big(
\mathcal{G}_{V_\mu,\pi_\mu}(\widetilde{S}^{t},\widetilde{A}^{t},\widetilde {S}^{t+1})   - {\eta}(\widetilde{S}^{t})  + \varpi(\widetilde{A}^{t}|\widetilde{S}^{t})  - V_{\mu}(\widetilde{S}^{t})\big)  \Big] \\
=\ &  \min_{V_{\mu},\pi_{\mu},{\eta},\varpi} \quad  \mathbb{E}_{S^{t},\widetilde{S}^{t}, A^{t},\widetilde{A}^{t} } {C_0} \Big[\big(
{\mathcal{G}}_{V_\mu,\pi_\mu}(S^{t},A^{t},S^{t+1})   - {\eta}(S^{t})  + \varpi(S^{t},A^t))  - V_{\mu}(S^{t})\big) \\
& \qquad  K\big( S^{t},A^{t} ;  \widetilde{S}^{t},\widetilde{A}^{t}\big) \big(
\mathcal{G}_{V_\mu, \pi_\mu}(\widetilde{S}^{t},\widetilde{A}^{t},\widetilde{S}^{t+1})   - {\eta}(\widetilde{S}^{t})  + \varpi(\widetilde{A}^{t}|\widetilde{S}^{t})  - V_{\mu}(\widetilde{S}^{t})\big)  \Big] \\ 
=\ &  \min_{V_{\mu},\pi_{\mu},{\eta},\varpi} \quad  \mathbb{E}_{S^{t},\widetilde{S}^{t}, A^{t},\widetilde{A}^{t},S^{t+1},\widetilde{S}^{t+1} } {C_0} \Big[\big(
\widetilde{\mathcal{G}}_{V_\mu,\pi_\mu}(S^{t},A^{t}, S^{t+1})   - {\eta}(S^{t})  + \varpi(S^{t},A^t))  - V_{\mu}(S^{t})\big) \\
& \qquad  K\big( S^{t},A^{t} ;  \widetilde{S}^{t},\widetilde{A}^{t}\big) \big(
\widetilde{\mathcal{G}}_{V_\mu,\pi_\mu}(\widetilde{S}^{t},\widetilde{A}^{t},\widetilde{S}^{t+1})   - {\eta}(\widetilde{S^{t}})  + \varpi(\widetilde{A}^{t}|\widetilde{S}^{t})  - V_{\mu}(\widetilde{S}^{t})\big)  \Big].
\$
Thus, we finish the proof.

\subsection{Proofs on Generic Properties of Quasi-optimal Bellman Operator}
\subsubsection{Proof of Proposition \ref{contrac}}
\begin{customprop}{S.1}
\label{contrac}
The quasi-optimal Bellman operator $\mathcal{B}_{\mu}$ is $\gamma$-contractive with respect to the supreme norm over $\mathcal{S}$. That is $\|\mathcal{B}_{\mu}V-\mathcal{B}_{\mu}V'\|_{\infty}\leq \gamma \|V-V'\|_{\infty}$, for any generic value functions $\{V,V': \mathcal{S}\to \mathbb{R}\}$.
\end{customprop}
Proposition \ref{contrac} justifies that there exists a unique fixed point of $\mathcal{B}_{\mu}$, i.e., $V^{*}_{\mu}$, indicating that the quasi-optimal value function $V^{*}_{\mu}$ and the induced policy $\pi^{*}_{\mu}$ are well defined and unique.

\textbf{Proof:}
By the definition of $\mathcal{B}_{\mu}$, the explicit form corresponding to $V$ is as follows:
\begin{equation*}
\mathcal{B}_{\mu} V(s)=\max_{\pi} \mathbb{E}_{a \sim \pi(\cdot | s)}\Big[\mathbb{E}_{S^{t+1} | s, a}[R(S^{t+1}, s, a)+\gamma V(S^{t+1})]+\mu \text{prox}^{\circ}(\pi(a |s)) \Big].
\end{equation*}
For any two arbitrary value functions $V$ and $V'$, we have
\begin{align*}
&\left\|\mathcal{B}_{\mu} V(s)-\mathcal{B}_{\mu} V^{\prime}(s)\right\|_{\infty} \\
=& \max_{\pi_1} \mathbb{E}_{a \sim \pi(\cdot | s)}\Big[\mathbb{E}_{S^{t+1} | s, a}[R(S^{t+1}, s, a)+\gamma V(S^{t+1})]+\mu \text{prox}^{\circ}(\pi_1(a |s)) \Big]-\\
& \max_{\pi_2} \mathbb{E}_{a \sim \pi(\cdot | s)}\Big[\mathbb{E}_{S^{t+1} | s, a}[R(S^{t+1}, s, a)+\gamma V'(S^{t+1})]+\mu \text{prox}^{\circ}(\pi_2(a | s) )\Big]\\
& \leq \max_{\pi}\Bigg\{ \mathbb{E}_{a \sim \pi(\cdot | s)}\Big[\mathbb{E}_{S^{t+1} | s, a}[R(S^{t+1}, s, a)+\gamma V(S^{t+1})]+\mu \text{prox}^{\circ}(\pi(a | s) )\Big]- \\
&\mathbb{E}_{a \sim \pi(\cdot | s)} \Big[\mathbb{E}_{S^{t+1} | s, a}[R(S^{t+1},s, a)+\gamma V'(S^{t+1})]+\mu \text{prox}^{\circ}(\pi(a | s)) \Big]\Bigg\}\\
& =  \max_{\pi}\gamma \mathbb{E}_{a\sim \pi(\cdot|s), S^{t+1}|s,a}\Big[\left(V(S^{t+1})-V'(S^{t+1})\right)\Big]\\
& \leq \gamma \|V(s)-V'(s)\|_{\infty}.
\end{align*}

\subsubsection{Proof of Proposition \ref{v-error}}
\begin{customprop}{S.2}
\label{v-error}
 For any $s\in \mathcal{S}$, the performance error between $V_{\mu}^{*}(s)$ and $V^*(s)$ satisfies
$$
\|V_{\mu}^{*}-V^*\|_{\infty}\leq \frac{\mu \cdot \max\{|1-\mathbf{C}|,1\}}{1-\gamma},
$$
where $\mathbf{C}$ is the upper bound for induced policy $\pi_{\mu}$.
\end{customprop}
\textbf{Proof of Proposition \ref{v-error}:}
\begin{align*}
    \|V_{\mu}^{*}-V^*\|_{\infty}&=\|\mathcal{B}_{\mu}V_{\mu}^{*}-\mathcal{B}V^*\|_{\infty}\\
    & \leq \|\mathcal{B}_{\mu}V_{\mu}^{*}-\mathcal{B}_{\mu}V^*\|_{\infty}+
    \|\mathcal{B}_{\mu}V^*-\mathcal{B}V^*\|_{\infty}.
\end{align*}
Notice that $\|\mathcal{B}_{\mu}V_{\mu}^{*}-\mathcal{B}_{\mu}V^*\|_{\infty}\leq \gamma\|V_{\mu}^{*}-V^*\|_{\infty}$ by Theorem \ref{contrac}, and   $\|\mathcal{B}_{\mu}V^*-\mathcal{B}V^*\|_{\infty}\leq \mu\cdot \max\{|1-\mathbf{C}|,1\}$ by Proposition \ref{approx}. Therefore, 
$$
(1-\gamma)\|V_{\mu}^{*}-V^*\|_{\infty}\leq \mu\cdot \max\{|1-\mathbf{C}|,1\}.
$$
We finish the proof.

\subsection{Proof of Theorem \ref{lambda-sparse}}

\textbf{Proof of Theorem \ref{lambda-sparse}:} We first prove that when $\mu \to \infty$, $\pi_{\mu}^*$ would degenerate to uniform distribution over $\mathcal{A}$.
By \eqref{pdfpi}, we only need to prove that for arbitrary small $\epsilon>0$
$$
\Big|\frac{Q_{\mu}^{*}(s,a)}{2\mu}-\frac{\int_{a\in \mathcal{W}_s} Q_{\mu}^{*}(s,a) da}{2\mu\sigma(\mathcal{W}_{s})}+\frac{1}{\sigma(\mathcal{W}_{s})}-\frac{1}{\sigma(\mathcal{A})}\Big|<\epsilon.
$$
Lower bound:
\begin{align}
  \frac{Q_{\mu}^{*}(s,a)}{2\mu}-\frac{\int_{a\in \mathcal{W}_s} Q_{\mu}^{*}(s,a)}{2\mu\sigma(\mathcal{W}_s)}+\frac{1}{\sigma(\mathcal{W}_s)}&\geq    \frac{Q_{\mu}^{*}(s,a)}{2\mu}-\frac{\sigma(\mathcal{W}_s)\max_{a'} Q_{\mu}^{*}(s,a')}{2\mu\sigma(\mathcal{W}_s)}+\frac{1}{\sigma(\mathcal{W}_s)}\\
  & \geq \frac{Q_{\mu}^{*}(s,a)}{2\mu}-\frac{\max_{a'} Q_{\mu}^{*}(s,a')}{2\mu}+\frac{1}{\sigma(\mathcal{A})}
 \label{lower}
\end{align}
Thus, we aim to prove that
$$
\Big|\frac{Q_{\mu}^{*}(s,a)-\max_{a'} Q_{\mu}^{*}(s,a')}{2\mu}\Big|\to 0.
$$
Let $V^{*}$ be the unique fixed point of \eqref{explicit_bellman}, and $H_{\max}=\max_{\pi}H(\pi)$, where 
$$
H(\pi)=\mathbb{E}_{a\sim \pi(\cdot|s)}[1-\pi(a|s)].
$$
Let $r(s,a):=\mathbb{E}_{S^{t+1}|s,a}[R(S^{t+1},s,a)]$, by the definition of $Q_{\mu}^{*}$, we have 
\begin{align*}
\frac{Q_{\mu}^{*}(s, a)}{2\mu}-\frac{\gamma \mathbb{E}_{S^{t+1} \mid s, a}\left[V_{\mu}^{*}\left(S^{t+1}\right)\right]}{2\mu}=\frac{r(s, a)}{2\mu} \\
\frac{Q_{\mu}^{*}(s, a)}{2\mu}-\frac{\gamma \mathbb{E}_{S^{t+1} \mid s, a}\left[V_{\mu}^{*}\left(S^{t+1}\right)-V^{*}\left(S^{t+1}\right)\right]}{2\mu}-\frac{\gamma \mathbb{E}_{S^{t+1} \mid s, a}\left[V^{*}\left(S^{t+1}\right)\right]}{2\mu}=\frac{r(s, a)}{2\mu} .
\end{align*}
Therefore, 
\begin{equation}
\begin{aligned}
&\frac{Q_{\mu}^{*}(s, a)}{2\mu}-\frac{ \mu \gamma H_{\max }}{2(1-\gamma)} \leq \frac{r(s, a)}{2\mu}+\frac{\gamma \mathbb{E}_{s^{\prime} \mid s, a}\left[V^{*}\left(s^{\prime}\right)\right]}{2\mu}, \\
&\frac{Q_{\mu}^{*}(s, a)}{2\mu}-\frac{\mu \gamma H_{\max }}{2(1-\gamma)} \leq \frac{R_{\max }}{2(1-\gamma) \mu}.
\end{aligned}
\label{hmax1}
\end{equation}

Meanwhile, from another perspective, the proximal Bellman operator \eqref{bellprox} can be treated as a new MDP with the immediate reward $r(s,a)+\mu H(\pi(\cdot|s))$ for given $s,a$. Combine with the fact that 
$$
\frac{\gamma \mu H_{\max}}{1-\gamma}=\max_{\pi}\mathbb{E}_{\pi}\Big[\sum_{t=2}^{\infty}\gamma^{t-1}(\mu-\mu\pi(A^t|S^t))|S^1=s,A^1=a\Big].
$$
Let $\pi_H=\text{argmax}_{\pi}H(\pi(a|s))$, then 
\begin{equation}
\begin{aligned}
\frac{Q_{\mu}^{*}(s, a)}{2\mu}-\frac{\mu \gamma H_{\max }}{2(1-\gamma)} &=\frac{Q_{\mu}^{*}(s, a)}{2\mu}-\max _{\pi} \mathbb{E}_{\pi}\left[\sum_{t=2}^{\infty} \gamma^{t-1}\left(\mu-\mu \pi\left(A^{t} \mid S^{t}\right)\right) \mid S^{1}=s, A^{1}=a\right] \\
& \geq \frac{Q_{\mu}^{\pi_{H}}(s, a)}{2\mu}-\mathbb{E}_{\pi_{H}}\left[\sum_{t=2}^{\infty} \gamma^{t-1}\left(\mu-\mu \pi_{H}\left(A^{t} \mid S^{t}\right)\right) \mid S^{1}=s, A^{1}=a\right] \\
&=\mathbb{E}_{\pi_{H}}\left[\sum_{t=1}^{\infty} \gamma^{t-1} \frac{r\left(S^{t}, A^{t}\right)}{2\mu} \mid S^{1}=s, A^{1}=a\right] \\
& \geq-\frac{R_{\max }}{2(1-\gamma) \mu} .
\end{aligned}
\label{hmax2}
\end{equation}

Based on \eqref{hmax1} and \eqref{hmax2}, we have 
\begin{equation}
\begin{aligned}
\frac{Q_{\mu}^{*}(s, a)}{2\mu}-\frac{\max _{a'} Q_{\mu}^{*}(s, a')}{2\mu} &=\frac{Q_{\mu}^{*}(s, a)}{2\mu}-\frac{ \gamma H_{\max }}{2(1-\gamma)}+\frac{ \gamma H_{\max }}{2(1-\gamma)}-\frac{\max_{a'} Q_{\mu}^{*}(s, a')}{2\mu} \\
& \geq-\frac{ R_{\max }}{(1-\gamma) \mu}.
\end{aligned}
\end{equation}
Similarly, we also have
\begin{equation}
   \frac{Q_{\mu}^{*}(s, a)}{2\mu}-\frac{\max_{a' }Q_{\mu}^{*}(s, a')}{2\mu} \leq \frac{ R_{\max }}{(1-\gamma) \mu}.
  \label{ineq1}
\end{equation}
Therefore, we have the lower bound approaching to $\frac{1}{\sigma(\mathcal{A})}$.

For the upper bound, we have $\int_{a\in \mathcal{A}}\pi_{\mu}^*(a|s) da =1$, thus
\begin{align*}
  & \int_{a\in\mathcal{A}}\Big\{\frac{Q_{\mu}^{*}(s,a)}{2\mu}-\frac{\int_{a'\in \mathcal{W}_{s}} Q_{\mu}^{*}(s,a') da'}{2\mu\sigma(\mathcal{W}_s)}+\frac{1}{\sigma(\mathcal{W}_s)}\Big\}^+ da\\
  &\geq    \int_{a\in\mathcal{A}}\Big\{\frac{\min_{a''}Q_{\mu}^{*}(s,a'')}{2\mu}-\frac{\int_{a'\in \mathcal{W}_s} Q_{\mu}^{*}(s,a')da'}{2\mu\sigma(\mathcal{W}_s)}+\frac{1}{\sigma(\mathcal{W}_s)}\Big\} da\\
 \frac{1}{\sigma(\mathcal{A})} & \geq \frac{\min_{a''}Q_{\mu}^{*}(s,a'')}{2\mu}-\frac{\int_{a'\in \mathcal{W}_s)} Q_{\mu}^{*}(s,a')da'}{2\mu}+\frac{1}{\sigma(\mathcal{W}_s)}.
 \label{lower}
\end{align*}
By \eqref{ineq1}, we then have 
\begin{align}
    \frac{Q_{\mu}^{*}(s,a)}{2\mu}-\frac{\int_{a\in \mathcal{W}_s} Q_{\mu}^{*}(s,a) da}{2\mu\sigma(\mathcal{W}_s)}&+\frac{1}{\sigma(\mathcal{W}_s)}=\frac{Q_{\mu}^{*}(s, a)}{2\mu}-\frac{\max _{a''} Q_{\mu}^{*}(s, a'')}{2\mu}\\ \notag
    & +\frac{\max_{a''}Q_{\mu}^{*}(s,a'')}{2\mu}-\frac{\int_{a'\in \mathcal{W}_s} Q_{\mu}^{*}(s,a')da'}{2\mu}+\frac{1}{\sigma(\mathcal{W}_s)}\\
    & \leq \frac{1}{\sigma(\mathcal{A})}+\frac{R_{\max}}{(1-\gamma)\mu}
\end{align}

Therefore, by the lower bound and upper bound, we conclude that $\pi_{\mu}(a|s)$ will decay to the uniform distribution on $\mathcal{A}$ as $\mu \to \infty$.

For the case when $\mu\to 0$, we prove that $\pi_{\mu}$ would converge to the uniform distribution with the length of the support set equal to $\frac{1}{\mathbf{C}}$. Therefore, when $\mathbf{C}\to \infty$, it will converge to the point mass. According to \eqref{comp-pi}, we only need to prove $\sigma(\mathcal{W}_{s,1})\to 0$ as $\mu\to 0$. Meanwhile by Theorem \eqref{kkt}, $a\in \mathcal{W}_{s,1}$, if 
$$\sigma(\mathcal{W}_{s,1})Q_{\mu}^{*}(s,a)-\Big(\int_{a'\in \mathcal{W}_{s,1}}Q_{\mu}^{*}(s,a') da'-2\mu+2\mu\mathbf{C}\sigma(\mathcal{W}_{s,2})\Big)\in (0, 2\mu \mathbf{C}\sigma(\mathcal{W}_{s,1})).
$$
As $\mu \to 0$, $(0,2\mu \mathbf{C}\sigma(\mathcal{W}_{s,1}) )\to 0$. Thus, by squeeze theorem, we have $ \sigma(\mathcal{W}_{s,1})Q_{\mu}^{*}(s,a)-\Big(\int_{a'\in \mathcal{W}_{s,1}}Q_{\mu}^{*}(s,a') da'-2\mu+2\mu C\sigma(\mathcal{W}_{s,2})\Big) \to 0$ as $\mu \to 0$, which is equivalent to 
$$
\sigma(\mathcal{W}_{s,1})Q_{\mu}^{*}(s,a)-\int_{a'\in \mathcal{W}_{s,1}}Q_{\mu}^{*}(s,a') da' \to 0 \quad \text{for all } a\in \mathcal{W}_{s,1}.
$$
Therefore, $\mathcal{W}_{s,1}$ could only include $a$ with the same value of $Q_{\mu}^{*}(s,a)$, which should only be a series of points rather than an interval. Thus, $\sigma(\mathcal{W}_{s,1})=0$, and $\pi_{\mu}^*(a|s)$ would converge to uniform distribution with interval length $\frac{1}{\mathbf{C}}$.

\subsection{Proof of Lemma \ref{u-bound}}

Before we prove the main result, we first provide a helper lemma for studying the boundedness of the symmetric kernel in the U-statistic. 

\begin{customlemma}{S.1}
Under Assumption 1, for any $s\in \mathcal{S}, a\in \mathcal{A}$ and $\mu \in (0, \infty)$, we have that 
$$
\sup_{s\in \mathcal{S}, a\in \mathcal{A}}\Big|\mathcal{G}_{V_{\mu},\pi_{\mu}}(s, a, s^{\prime})-\eta(s)+\varpi(s,a )-V_{\mu}(s)\Big|\leq M_{\max},
$$
where $M_{\max}=\frac{4}{1-\gamma} R_{\max}+\mu \mathbf{C}$.
\label{u-bound}
\end{customlemma}

\textbf{Proof of Lemma \ref{u-bound}:}
\begin{align*}
&\mathcal{G}_{V_{\mu},\pi_{\mu}}(s, a, s^{\prime})-\eta(s)+\varpi(s,a)-V_{\mu}(s) \\
= & R(s^{\prime},s,a)+\gamma V_{\mu}(s^{\prime})+\mu-2\mu\pi_{\mu}(a|s)-\eta(s)+\varpi(s,a)-V_{\mu}(s)\\
\leq & R_{\max}+\mu+\mu \mathbf{C}+\gamma V_{\mu}(s^{\prime})-V_{\mu}(s)\underbrace{-2\mu\pi_{\mu}(a|s)+\varpi(s,a)}_{(a)}.
\end{align*}
By checking the KKT conditions, we can further simplify the term (a). Specifically,
\begin{enumerate}
    \item If $\pi_{\mu}=0$, then $\varpi \geq 0$. By the stationarity equation \eqref{tempcon}, we have
    \begin{align*}
        (a) & = \varpi(s,a) \\
            & = \eta(s)-Q_{\mu}(s,a)-\mu+V_{\mu}(s) \\
            & \leq R_{\max}+\gamma\frac{R_{\max}-\mu H}{1-\gamma}-\mu+\frac{R_{\max}+\mu H}{1-\gamma} \quad \Big(H:= \mathbb{E}_{a\sim \pi_{\mu}(\cdot|s)}(1-\pi_{\mu}(a|s))\Big)\\
            & \leq \frac{2}{1-\gamma}R_{\max}-\mu+\mu H\\
           & \leq \frac{2}{1-\gamma}R_{\max}.
    \end{align*}
    \item If $\pi_{\mu}\in (0,\mathbf{C}]$, then $\varpi=0$
    $$
    (a)=-2\mu\pi_{\mu}(a|s)<0.
    $$

\end{enumerate}

Therefore, 
\begin{align*}
& \mathcal{G}_{\pi_{\mu}}(s,a, s^{\prime})-\eta(s)+\varpi(s,a )-V_{\mu}(s) \\
\leq & R_{\max}+\mu+\mu \mathbf{C}+\gamma V_{\mu}(s^{\prime})-V_{\mu}(s)+\frac{2}{1-\gamma}R_{\max}\\
\leq & R_{\max}+\mu+\mu \mathbf{C}+\gamma \frac{R_{\max}+\mu H}{1-\gamma}-\frac{-R_{\max}+\mu H}{1-\gamma}+\frac{2}{1-\gamma}R_{\max}\\
\leq & \frac{4}{1-\gamma}R_{\max}+\mu \mathbf{C}+\mu-\mu H\\
\leq &  \frac{4}{1-\gamma}R_{\max}+\mu \mathbf{C}.
\end{align*}
Thus, we gain the upper bound. For the lower bound, the same technique is applied, and we can also gain that 
$$
\mathcal{G}_{V_{\mu},\pi_{\mu}}(s,a, s^{\prime})-\eta(s)+\varpi(s,a )-V_{\mu}(s)\geq -\frac{4}{1-\gamma}R_{\max}-\mu \mathbf{C}.
$$
Therefore, this completes the proof.

\subsection{Proof of Theorem \ref{exp-decay}}
\textbf{Proof of Theorem \ref{exp-decay}:} We first define an operator $\mathcal{P}$ from $\mathcal{G}_{V_\mu,\pi_\mu}(S^k,A^k,S^{k+1})$ to $\mathcal{G}_{V_{\mu},\pi_{\mu}}(S^k, A^{k}, S^{k+1})-\eta(S^k)+\varpi(S^k,A^k)$ to simplify the expression, such that 
\begin{align*}
    \mathcal{P} \mathcal{G}_{V_\mu,\pi_{\mu}}(S^k,A^k,S^{k+1})& := \mathcal{G}_{V_{\mu},\pi_{\mu}}(S^k, A^{k}, S^{k+1})-\eta(S^k)+\varpi(A^k|S^k),
\end{align*}
We further define several other notations 
\begin{equation*}
\begin{aligned}
      U_T := & \binom{T}{2}^{-1}\sum_{1\leq j\neq k \leq T}K(S^j,A^j ;S^k,A^k )\{ \mathcal{P}\mathcal{G}_{V_\mu,\pi_\mu}(S^j,A^j,S^{j+1})-V_{\mu}(S^j)\} \cdot \\
      & \qquad \qquad \qquad \{ \mathcal{P}\mathcal{G}_{V_\mu,\pi_\mu}(S^k,A^k,A^{k+1})-V_{\mu}
      (S^k)\}
\end{aligned}      
\end{equation*}
\begin{equation*}
\begin{aligned}
&\tilde{K}\left(S^{t}, A^{t}, S^{t+1};\widetilde{S}^{t}, \widetilde{A}^{t}, \widetilde{S}^{t+1}\right) \\
&:=K\left(S^{t}, A^{t}; \widetilde{S}^{t}, \widetilde{A}^{t} \right)\left\{\mathcal{P} \mathcal{G}_{V_\mu,\pi_\mu}\left(S^{t}, A^{t},S^{t+1}\right)-V_{\mu}\left(S^{t}\right)\right\}\left\{\mathcal{P} \mathcal{G}_{V_\mu,\pi_\mu}\left(\widetilde{S}^{t}, \widetilde{A}^{t},\widetilde{S}^{t+1}\right)-V_{\mu}\left(\widetilde{S}^{t}\right)\right\} .
\end{aligned}
\end{equation*}

Let the expectation with respect to stationary trajectory and i.i.d training set as $\mathbb{E}_T$ and $\mathbb{E}$ respectively. For any finite threshold parameter $\mu <\infty$ and any $\epsilon>0$, we have \begin{equation*}
\begin{aligned}
\mathbb{P}\left(\left|\widehat{\mathcal{L}_{U}}-\mathcal{L}_{U}\right|>\epsilon\right) &=\mathbb{P}\left(\left|\widehat{\mathcal{L}_{U}}-\mathbb{E}\left(U_{T}\right)+\mathbb{E}\left(U_{T}\right)-\mathcal{L}_{U}\right|>\epsilon\right) \\
& \leq \underbrace{\mathbb{P}\left(\left|\widehat{\mathcal{L}_{U}}-\mathbb{E}\left(U_{T}\right)\right|>\frac{\epsilon}{2}\right)}_{(i)}+\underbrace{\mathbb{P}\left(\left|\mathbb{E}\left(U_{T}\right)-\mathcal{L}_{U}\right|>\frac{\epsilon}{2}\right)}_{(ii)} .
\end{aligned}
\end{equation*}

For $(i)$, since the Gaussian kernel satisfy that $|K(\cdot; \cdot)|\leq 1$, then by Lemma \ref{u-bound}, we have
$$
\tilde{K}\left(s,a,s^{\prime};\tilde s,\tilde a,\tilde {s^{\prime}}\right)\leq M_{\max}^2,
$$
for any $s,\tilde s, a,\tilde a$. By Hoeffding's inequality, we have
\begin{equation}
    (i)\leq 2\exp\Big\{-\frac{n\epsilon^2}{2M^4_{\max}}\Big\}.
  \label{hoeff}
\end{equation}

For the term $(ii)$, the expectation of $U_T$ as $\mathbb{E}_T(U_T)$ can be calculated as follows:
\begin{equation*}
\begin{aligned}
\mathbb{E}_T(U_T)& =\binom{T}{2}^{-1} \sum_{1\leq j\neq k \leq T}\mathbb{E}_T\Big[
K(S^j,A^j;S^k,A^k)\{ \mathcal{P}\mathcal{G}_{V_\mu,\pi_\mu}(S^j,A^j,S^{j+1})-V_{\mu}(S^j)\}\cdot\\
&\qquad \qquad \qquad \{ \mathcal{P}\mathcal{G}_{V_\mu,\pi_\mu}(S^k,A^k,S^{k+1})-V_{\mu}(S^j)\}\Big].
\end{aligned}
\end{equation*}

If with-in trajectory samples are independent, then it is obvious that 
\begin{equation*}
\mathbb{E}_T(U_T)=\mathbb{E}_T\Big[\tilde{K}\left(S^{t}, A^{t}, S^{t+1};\widetilde{S}^{t}, \widetilde{A}^{t}, \widetilde{S}^{t+1} \right)\Big]:= U^*.
\end{equation*}
However, for weakly dependent data, dependency may introduce an additional bias term $\mathbb{E}_T(U_T)-U^*$, thus we further decompose the term $(ii)$ as
\begin{equation*}
(ii)=\mathbb{P}(\underbrace{\left|\mathbb{E}\left(U_{T}\right)-\mathbb{E}\left[\mathbb{E}_{T}\left(U_{T}\right)\right]\right|}_{(iii)}+\underbrace{\left.\mid \mathbb{E}\left[\mathbb{E}_{T}\left(U_{T}\right)\right]-\mathbb{E} U^{\star}\right) \mid}_{(iv)}>\frac{\epsilon}{2}) .
\end{equation*}

For the term $(iii)$, we follow a similar idea to use a novel decomposition of the variance term of U-statistic from \cite{han2018exponential}. The idea is to break down the summation of U-statistic into numerous parts, where the current time is affected  by randomness, and the historical time will be canceled out after conditioning on the future.

As $|\tilde K(\cdot \; ;\; \cdot)|$ is bounded by $M_{\max}^2$, under the mixing condition of Assumption \ref{mixing}, the exponential inequality from \cite{merlevede2009bernstein} can be applied to to bound each decomposition part.

Then we follow the Theorem 3.1 from \cite{han2018exponential} that for any $\epsilon_0$,
\begin{equation}
\mathbb{P}(\left|\mathbb{E}\left(U_{T}\right)-\mathbb{E}\left[\mathbb{E}_{T}\left(U_{T}\right)\right]\right|>\epsilon_0)\leq 2\exp\Big\{-\left(\frac{M_{\max}^4}{T\epsilon_0^2C^{\prime}_1}+\frac{M_{\max}^2\log\log(4T)\log T}{T\epsilon_0 C^{\prime}_1} \right)^{-1}\Big\},
\label{ut}
\end{equation}
where $C^{\prime}_1$ is some constant. 

Then, we proceed to bound the term $(iv)$. By Hoeffding decomposition of kernel function $\tilde{K}\left(S^t,A^t,S^{t+1} ;\tilde S^t,\tilde A^t,\tilde S^{t+1}\right)$, there exist kernel functions $\tilde{K}_1(S^t,A^t,S^{t+1})$ and $\tilde{K}_2\left(S^t,A^t,S^{t+1} ;\tilde S^t,\tilde A^t,\tilde S^{t+1}\right)$ such that
\begin{equation*}
\begin{aligned}
\tilde{K}_{1}\left(s, a, s^{\prime}\right) &=\mathbb{E}_{T} \tilde{K}\left(s, a, s^{\prime};\widetilde{S}^{t}, \widetilde{A}^{t}, \widetilde{S}^{t+1} \right)-U^*, \\
\tilde{K}_{2}\left(s, a, s^{\prime};\widetilde{s}, \widetilde{a}, \widetilde{s^{\prime}}\right) &=\tilde{K}\left(s, a, s^{\prime};\widetilde{s}, \widetilde{a}, \widetilde{s}^{\prime}\right)-\tilde{K}_{1}\left(s, a, s^{\prime}\right)-\tilde{K}_{1}\left(\widetilde{s}, \widetilde{a}, \widetilde{s^{\prime}}\right)-U^*,
\end{aligned}
\end{equation*}
and $\mathbb{E}_T\tilde{K}_1(S^t,A^t,S^{t+1})=0$, $\mathbb{E}_T\tilde{K}_2\left(S^t,A^t,S^{t+1};\tilde S^t,\tilde A^t,\tilde S^{t+1}\right)=0$. Then by Hoeffding decomposition of $U_T$, we have
\begin{align*}
    U_T=  U^*+\frac{2}{n}\sum^{T}_{t=1}{\tilde K_1}(S^t,A^t,S^{t+1})+U_{\tilde{K}_2}.
\end{align*}
Taking the expectation from both sides:
\begin{align*}
    \mathbb{E}_T[U_T]&=U^*+\frac{2}{n}\sum^T_{k=1}\mathbb{E}_T\tilde{K}_1(S^t,A^t,S^{t+1})+\mathbb{E}_T[U_{\tilde K_2}] \\
    & =U^* + \mathbb{E}_T[U_{\tilde K_2}] 
\end{align*}
Therefore, by Lyapunov inequality, we can bound the bias term
\begin{equation}
\begin{aligned}
&\left|\mathbb{E}_{T}\left[U_{T}\right]-U^{\star}\right|=\left|\mathbb{E}_{T}\left[U_{\tilde{K}_{2}}\right]\right| \leq \mathbb{E}_{T}\left[\left|U_{\tilde{K}_{2}}\right|\right] \leq \sqrt{\mathbb{E}_{T}\left[U_{\tilde{K}_{2}}^{2}\right]}\\
&=\sqrt{\sum_{1 \leq h_{1} \leq l_{1} \leq T, 1 \leq h_{2} \leq l_{2} \leq T} \mathbb{E}_{T}\Big[\tilde{K}_{2}\left(S^{h_{1}}, A^{h_{1}}, S^{h_{1}+1} ;S^{l_{1}}, A^{l_{1}}, S^{l_{1}+1}\right)}\\
&\overline{\cdot\tilde{K}_{2}\left(S^{h_{2}}, A^{h_{2}}, S^{h_{2}+1};S^{l_{2}}, A^{l_{2}}, S^{l_{2}+1}\right)\Big] \frac{4}{T^{2}(T-1)^{2}}} .
\end{aligned}
\label{lyapu}
\end{equation}

We proceed by the discussing the relationship between $h_1,h_2,l_1,l_2$.

\textbf{Case 1.1:} If $1 \leq h_{1} \leq h_{2} \leq l_1  \leq l_{2}  \leq T$ and $l_2-l_1\leq h_1-h_2$.
\newline Under the mixing condition assumption, and by Generalized Correlation inequality in Lemma 2 of, we have 
\begin{equation*}
\begin{aligned}
&\left|\mathbb{E}_{T}\left[\tilde{K}_{2}\left(S^{h_{1}}, A^{h_{1}}, S^{h_{1}+1};S^{l_{1}}, A^{l_{1}}, S^{l_{1}+1}\right) \tilde{K}_{2}\left(S^{h_{2}}, A^{h_{2}}, S^{h_{2}+1};S^{l_{2}}, A^{l_{2}}, S^{l_{2}+1}\right)\right]\right| \\
\leq & 4\left(M_{\max }^{2 r}\right)^{1/r}\beta^{1/s}\left(h_{2}-h_{1}\right),
\end{aligned}
\end{equation*}
where $1/r+1/s=1, s>-1$. 

\textbf{Case 1.2:} If $1 \leq h_{1} \leq h_{2} \leq l_1 \leq l_{2}  \leq T$ and $h_1-h_2\leq l_2-l_1$. \newline
Similar as Case 1.1, we have
\begin{equation*}
\begin{aligned}
&\left|\mathbb{E}_{T}\left[\tilde{K}_{2}\left(S^{h_{1}}, A^{h_{1}}, S^{h_{1}+1};S^{l_{1}}, A^{l_{1}}, S^{l_{1}+1}\right) \tilde{K}_{2}\left(S^{h_{2}}, A^{h_{2}}, S^{h_{2}+1};S^{l_{2}}, A^{l_{2}}, S^{l_{2}+1}\right)\right]\right| \\
\leq & 4\left(M_{\max}^{2 r}\right)^{1/r}\beta^{1/s}\left(l_{2}-l_{1}\right).
\end{aligned}
\end{equation*}

Combine Case 1.1 and Case 1.2, we apply the bounded inequalities (2.17-2.21) from \cite{yoshihara1976limiting},  and have the following result
\begin{equation*}
\begin{aligned}
&\Big|\sum_{1 \leq h_{1} \leq h_{2} \leq l_{1} \leq l_{2} \leq T} \mathbb{E}_{T}\left[\tilde{K}_{2}\left(S^{h_{1}}, A^{h_{1}}, S^{h_{1}+1};S^{l_{1}}, A^{l_{1}}, S^{l_{1}+1}\right)\right.\\
&\left.\tilde{K}_{2}\left(S^{h_{2}}, A^{h_{2}}, S^{h_{2}+1};S^{l_{2}}, A^{l_{2}}, S^{l_{2}+1}\right)\right] \Big|\\
&\leq \sum_{\substack{l_{2}-l_{1} \leq h_{2}-h_{1} \\ 1 \leq h_{1} \leq h_{2} \leq l_{1} \leq l_{2} \leq T}} \Big| \mathbb{E}_{T}\left[\tilde{K}_{2}\left(S^{h_{1}}, A^{h_{1}}, S^{h_{1}+1};S^{l_{1}}, A^{l_{1}}, S^{l_{1}+1}\right)\right.\\
&\left.\cdot \tilde{K}_{2}\left(S^{h_{2}}, A^{h_{2}}, S^{h_{2}+1};S^{l_{2}}, A^{l_{2}}, S^{l_{2}+1}\right)\right] \Big| +\\
&\sum_{\substack{h_{2}-h_{1} \leq l_{2}-l_{2} \\ 1 \leq h_{1} \leq h_{2} \leq l_{1} \leq l_{2} \leq T}} \Big| \mathbb{E}_{T}\left[\tilde{K}_{2}\left(S^{h_{1}}, A^{h_{1}}, S^{h_{1}+1};S^{l_{1}}, A^{l_{1}}, S^{l_{1}+1}\right)\right.\\
&\left.\tilde{K}_{2}\left(S^{h_{2}}, A^{h_{2}}, S^{h_{2}+1};S^{l_{2}}, A^{l_{2}}, S^{l_{2}+1}\right)\right] \Big|\\
&\leq M_{\max }^{2} T^{2} \sum_{j=1}^{T}(j+1) \beta^{1 / s}(j)=\mathcal{O}\left(M_{\max }^{2} T^{3-\tau}\right) \text {, }
\end{aligned}
\end{equation*}
where 
\begin{equation}
\tau=\frac{\left(\frac{2}{s+1}-\frac{2}{1-\delta_{1}}\right)}{\left(\frac{1}{\delta_{1}-1}\right)\left(1+\frac{1}{s+1}\right)}.
\label{tau}
\end{equation}

\textbf{Case 2:} If $1\leq h_1\leq l_1\leq h_2\leq l_2\leq T$.
\newline Using similar technique as Case 1.1 and 1.2, we have
\begin{equation*}
\begin{aligned}
&\Big|\sum_{1 \leq h_{1} \leq l_{1} \leq h_{2} \leq l_{2} \leq T} \mathbb{E}_{T}\left[\tilde{K}_{2}\left(S^{h_{1}}, A^{h_{1}}, S^{h_{1}+1};S^{l_{1}}, A^{l_{1}}, S^{l_{1}+1}\right)\right. \\
&\left.\tilde{K}_{2}\left(S^{h_{2}}, A^{h_{2}}, S^{h_{2}+1};S^{l_{2}}, A^{l_{2}}, S^{l_{2}+1}\right)\right]\Big|\\
&\leq \sum_{\substack{l_{2}-h_{2} \leq l_{1}-h_{1} \\ 1 \leq h_{1} \leq l_{1} \leq h_{2} \leq l_{2} \leq T}}  \Big| \mathbb{E}_{T}\left[\tilde{K}_{2}\left(S^{h_{1}}, A^{h_{1}}, S^{h_{1}+1};S^{l_{1}}, A^{l_{1}}, S^{l_{1}+1}\right)\right.\\
&\left.\cdot \tilde{K}_{2}\left(S^{h_{2}}, A^{h_{2}}, S^{h_{2}+1}; S^{l_{2}}, A^{l_{2}}, S^{l_{2}+1}\right)\right] \Big| +\\
&\sum_{\substack{l_{1}-h_{1} \leq l_{2}-h_{2} \\ 1 \leq h_{1} \leq l_{1} \leq h_{1} \leq l_{2} \leq T}} \Big| \mathbb{E}_{T}\left[\tilde{K}_{2}\left(S^{h_{1}}, A^{h_{1}}, S^{h_{1}+1};S^{l_{1}}, A^{l_{1}}, S^{l_{1}+1}\right)\right.\\
&\left.\tilde{K}_{2}\left(S^{h_{2}}, A^{h_{2}}, S^{h_{2}+1};S^{l_{2}}, A^{l_{2}}, S^{l_{2}+1}\right)\right] \Big|\\
&=\mathcal{O}\left(M_{\max}^{2} T^{3-\tau}\right)
\end{aligned}
\end{equation*}

\textbf{Case 3:} If $1\leq h_1\leq l_1\leq T$ and $1\leq h_2=l_2\leq T$. 
\newline Following the same technique, we have
\begin{align*}
&\Big|\sum_{1 \leq h_{2}=l_{2} \leq T} \sum_{1 \leq h_{1} \leq l_{1} \leq T} \mathbb{E}_{T}\left[\tilde{K}_{2}\left(S^{h_{1}}, A^{h_{1}}, S^{h_{1}+1};S^{l_{1}}, A^{l_{1}}, S^{l_{1}+1}\right)\right.\\
&\left.\cdot \tilde{K}_{2}\left(S^{h_{2}}, A^{h_{2}}, S^{h_{2}+1};S^{l_{2}}, A^{l_{2}}, S^{l_{2}+1}\right)\right] \Big|\\
&\leq \sum_{1 \leq h_{1}=l_{1} \leq T} \sum_{1 \leq h_{2}=l_{2} \leq T} \Big| \mathbb{E}_{T}\left[\tilde{K}_{2}\left(S^{h_{1}}, A^{h_{1}}, S^{h_{1}+1};S^{l_{1}}, A^{l_{1}}, S^{l_{1}+1}\right)\right.\\
&\left.\tilde{K}_{2}\left(S^{h_{2}}, A^{h_{2}}, S^{h_{2}+1};S^{l_{2}}, A^{l_{2}}, S^{l_{2}+1}\right)\right] \Big|+\\
&2 \sum_{1 \leq h_{1}<l_{1} \leq T} \sum_{1 \leq h_{2}=l_{2} \leq T} \Big| \mathbb{E}_{T}\left[\tilde{K}_{2}\left(S^{h_{1}}, A^{h_{1}}, S^{h_{1}+1};\S^{l_{1}}, A^{l_{1}}, S^{l_{1}+1}\right)\right.\\
&\left.\cdot \tilde{K}_{2}\left(S^{h_{2}}, A^{h_{2}}, S^{h_{2}+1};S^{l_{2}}, A^{l_{2}}, S^{l_{2}+1}\right)\right] \Big|\\
&\leq U_{\max}^{2} T^{2}+M_{\max}^{2} T^{2} \sum_{j=1}^{T} \beta^{1 / s}(j)=\mathcal{O}\left(M_{\max}^{2} T^{2}\right).
\end{align*}
\textbf{Case 4:} If $1\leq h_1= l_1\leq T$ and $1\leq h_2\leq l_2\leq T$. \newline Using the same technique, we can obtain the same rate as follows:
\begin{align*}
&\Big|\sum_{1 \leq h_{1}=l_{1} \leq T} \sum_{1 \leq h_{2} \leq l_{2} \leq T} \mathbb{E}_{T}\left[\tilde{K}_{2}\left(S^{h_{1}}, A^{h_{1}}, S^{h_{1}+1};S^{l_{1}}, A^{l_{1}}, S^{l_{1}+1}\right)\right.\\
&\left.\cdot \tilde{K}_{2}\left(S^{h_{2}}, A^{h_{2}}, S^{h_{2}+1};S^{l_{2}}, A^{l_{2}}, S^{l_{2}+1}\right)\right] \Big|\\
&=\mathcal{O}\left(M_{\max}^{2} T^{2}\right).
\end{align*}

Combine Case 1-4 with the equation \eqref{lyapu}, we conclude that 
$$
|\mathbb{E}U_T-U^*|\leq C^{\prime}_0 M_{\max}^2T^{-\frac{1+\tau}{2}}\quad a.s.
$$
We further use the continuous mapping theorem to conclude that
\begin{equation}
    \Big|\mathbb{E}[\mathbb{E}_T(U_T)]-\mathbb{E}U^*\Big|\leq C^{\prime}_0M_{\max}^2T^{-\frac{1+\tau}{2}} \quad a.s.,
\label{ut1}
\end{equation}
where $\tau$ is defined in \eqref{tau} and $C^{\prime}_0$ is a constant.

As $\tau>0$, we have $T^{-\frac{1+\tau}{2}}<T^{-\frac{1}{2}}$. Combine \eqref{ut} and \eqref{ut1}, for sufficiently large $T$, we have 

\begin{equation}
\begin{aligned}
(ii) &\left.=\mathbb{P}\left(\left|\mathbb{E}\left(U_{T}\right)-\mathbb{E}\left[\mathbb{E}_{T}\left(U_{T}\right)\right]\right|+\mid \mathbb{E}\left[\mathbb{E}_{T}\left(U_{T}\right)\right]-\mathbb{E} U^{\star}\right) \mid>\frac{\epsilon}{2}\right) \\
& \leq 2 \exp \left(-\frac{T C^{\prime}_1\left(\epsilon / 2-C^{\prime}_0 M_{\max}^{2} T^{-(1+\tau) / 2}\right)^{2}}{M_{\max}^{4}+M_{\max}^{2}\left(\epsilon / 2-C^{\prime}_0 M_{\max}^{2} T^{-(1+\tau) / 2}\right) \log T \log \log 4 T}\right) \\
&=2 \exp \left(-\frac{T C^{\prime}_1 \epsilon^{2} / 4-T c_{1} \epsilon C^{\prime}_0 M_{\max}^{2} T^{-(1+\tau) / 2}+T C^{\prime}_1{C^{\prime}_0}^{2} M_{\max}^{4} T^{-(1+\tau)}}{M_{\max}^{4}+M_{\max}^{2}\left(\epsilon / 2-C^{\prime}_0 M_{\max}^{2} T^{-(1+\tau) / 2}\right) \log T \log \log 4 T}\right) \\
&=2 \exp \left(-\frac{T c_{1} \epsilon^{2} / 4-T T^{-(1+\tau) / 2} c_{1} \epsilon C^{\prime}_0 M_{\max}^{2}+c_{1} {C^{\prime}_0}^{2} M_{\max}^{4} T^{-\tau}}{M_{\max}^{4}+M_{\max}^{2}\left(\epsilon / 2-C^{\prime}_0 M_{\max}^{2} T^{-(1+\tau) / 2}\right) \log T \log \log 4 T}\right) 
\end{aligned}
\end{equation}
Then by the monotonicity of $\exp(\cdot)$, 
\begin{equation}
\begin{aligned}
& \frac{T T^{-(1+\tau) / 2} C^{\prime}_{1}  \epsilon C^{\prime}_0 M_{\max}^{2}-T^{-\tau}  C^{\prime}_{1}  {C^{\prime}_0}^{2} M_{\max}^{4}-T C^{\prime}_{1} \epsilon^{2} / 4}{M_{\max}^{4}+\log T \log \log 4 T M_{\max}^{2} \epsilon / 2-T-(1+\tau) / 2 \log T \log \log 4 T C^{\prime}_0 M_{\max}^{4}} \\
\leq &-\frac{T  C^{\prime}_{1} \epsilon^{2} / 4-T^{1 / 2}  C^{\prime}_{1} \epsilon C^{\prime}_0 M_{\max}^{2}+T^{-\tau}  C^{\prime}_{1}  {C^{\prime}_0}^{2} M_{\max}^{4}}{M_{\max}^{4}+\log T \log \log 4 T M_{\max}^{2} \epsilon / 2-T{ }^{-1 / 2} \log T \log \log 4 T C^{\prime}_0 M_{\max}^{4}} \\
\leq &-\frac{c C^{\prime}_{1} \epsilon^{2} T / 4-C^{\prime}_0 C^{\prime}_1 \epsilon M_{\max}^{2} \sqrt{T}}{M_{\max}^{2}\left(\epsilon / 2-C^{\prime}_0 M_{\max}^{2} / \sqrt{T}\right) \log T \log \log 4 T+M_{\max}^{4}}
\end{aligned}
\label{termii}
\end{equation}
where $C^{\prime}_1$ is a constant. Combine \eqref{hoeff} and \eqref{termii}, we simplify the terms and then 
\begin{align*}
    \mathbb{P}(|\widehat {\mathcal{L}_U}-\mathcal{L}_U|>\epsilon)\leq & \ C_1\exp \bigg(-\frac{\epsilon^2T-C_2\epsilon M^2_{\max}\sqrt{T}}{M_{\max}^2+(\frac{\epsilon}{2}-\frac{C_2M^2_{\max}}{\sqrt{T}})\log T \log\log(T)}\bigg) + C_3\exp \left(\frac{-n\epsilon^2}{M_{\max}^4}\right),
\end{align*}
where $C_1,C_2,C_3$ are some constants depending on $\delta_1$ respectively, and $M_{\max}=\frac{4}{1-\gamma} R_{\max}+\mu \mathbf{C}$.

\subsection{Proof of Theorem \ref{performance_error}}

\begin{proof}[Proof of Theorem \ref{performance_error}]
To bound the performance error, we first decompose it as 
\$
\|  \widehat {V}_{\mu}^{\theta_1,k} - V^{*} \|^2_{L^{2}}   \leq  \|  \widehat {V}_{\mu}^{\theta_1} - {V}_{\mu}^{\theta_1,k} \|^2_{L^{2}} + \|  \widehat {V}_{\mu}^{\theta_1} - {V}^{*} \|^2_{L^{2}}  + \epsilon_{\text{approximation error}} 
\$
where the first term is the optimization error and the last term is the approximation error. Then we proceed to bound 
\#
 \|  \widehat {V}_{\mu}^{\theta_1} - {V}^{*} \|^2_{L^{2}} \leq 
\left|  \| \underbrace{\widehat {V}_{\mu}^{\theta_1} - {V}^{\widetilde{\pi}_{\mu}}_{\mu} \|_{L^{2}}}_{\Delta_1} +  \underbrace{\| {V}^{\widetilde{\pi}_{\mu}}_{\mu}  - {V}^{*} \|_{L^{2}}}_{\Delta_2} \right|^2 . \label{main_ineq}
\#
where ${V}^{\widetilde{\pi}_{\mu}}_{\mu}$ satisfying the  stationarity equation \eqref{tempcon} and ${V}^{*}$ is the unique fixed point of $\mathcal{B}$. First, we move to bound $\Delta_1$. Follow a similar kernel reproducing property and a eigen decomposition spirit in \cite{bertsekas1997nonlinear, sutton2018reinforcement, zhou2021estimating}, we have 
\$
&\frac{2}{\kappa_{\min}}\big(\mathcal{L}_{U}(\widehat {V}_{\mu}^{\theta_1},  \widehat{\pi}_{\mu}^{\theta_2}, \eta^{\xi_1},  \varpi^{\xi_2}) - \mathcal{L}_{U}({V}^{\widetilde{\pi}_{\mu}}_{\mu},\widetilde{\pi}_{\mu},{\eta},\varpi)\big) + \\
&2\| \big(\mu \text{prox}^{\circ}(\widehat{\pi}_{\mu}^{\theta_2}(A^{t}|S^{t})) -   \mu \text{prox}^{\circ}(\widetilde{\pi}_{\mu}(A^{t}|S^{t}))  \big) - \big( \widehat \eta^{\xi_1}(S^{t} )- {\eta}(S^{t})  \big) + \big( \widehat \varpi^{\xi_2}(S^{t},A^{t}) - \varpi(S^{t},A^{t}) \big) \|^2_{L^2} \\ \geq &  \|\gamma \big(\mathbb E_{S^{t+1}|S^{t},A^{t} }[\widehat {V}_{\mu}^{\theta_1}(S^{t+1})]    - \mathbb E_{S^{t+1}|S^{t},A^{t} }[V^{\widetilde{\pi}_{\mu}}_{\mu}(S^{t+1})]\big) - \big(\widehat {V}_{\mu}^{\theta_1}(S^{t}) - V^{\widetilde{\pi}_{\mu}}_{\mu}(S^{t})  \big) \|^2_{L^2} .
\$
Then by 
\$
\| \mu \text{prox}^{\circ}(\widehat{\pi}_{\mu}^{\theta_2}(A^{t}|S^{t})) -   \mu \text{prox}^{\circ}(\widetilde{\pi}_{\mu}(A^{t}|S^{t}))  \|^2_{L^2} \leq & {\mu^2}\|\widehat{\pi}_{\mu}^{\theta_2}(A^{t}|S^{t}) - \widetilde{\pi}_{\mu}(A^{t}|S^{t})\|^2_{L^2} \leq  \mathbf{C}\mu^2.
\$
and the auxiliary functions $\eta^{\xi_1}(s)\in [-\mathbf{C}\mu,0]$ for any $s \in \mathcal{S}$, then 
\$
&\| \widehat \eta^{\xi_1}(S^{t} )- {\eta}(S^{t}) \|^2_{L^2} \leq \left(\mathbf{C}{\mu} + \mathbf{C}{\mu}\right)^2 = (\mathbf{C}\mu)^2 \\
& \|\widehat {\eta}^{\xi_1}_1(S^{t} ) - {\eta}_1(S^{t} )\|^2_{L^2} \leq \ \frac{2}{\kappa_{\min}} \big(\mathcal{L}_{U}(\widehat {V}_{\mu}^{\theta_1},  \widehat{\pi}_{\mu}^{\theta_2}, \widehat \eta^{\xi_1}, \widehat \varpi^{\xi_2}) - \mathcal{L}_{U}(V^{\widetilde{\pi}_{\mu}}_{\mu},\widetilde{\pi}_{\mu},{\eta},\varpi)\big)
\$
Then we conclude that 
\$
\| \widehat {V}_{\mu}^{\theta_1}(S^{t}) - V^{\widetilde{\pi}_{\mu}}_{\mu}(S^{t})  \|^2_{L^2} \leq & \frac{C_5(\mathcal{L}_{U}(\widehat {V}_{\mu}^{\theta_1},  \widehat{\pi}_{\mu}^{\theta_2}, \widehat \eta^{\xi_1}, \widehat \varpi^{\xi_2}) - \mathcal{L}_{U}(V^{\widetilde{\pi}_{\mu}}_{\mu},\widetilde{\pi}_{\mu},{\eta},\varpi))}{\kappa_{\min}(1-\gamma)^2}  + \frac{C_6\mu^2}{(1-\gamma)^2} \\
\leq & \frac{C_5(\mathcal{L}_{U}(\widehat {V}_{\mu}^{\theta_1},  \widehat{\pi}_{\mu}^{\theta_2}, \widehat \eta^{\xi_1}, \widehat \varpi^{\xi_2}) - \mathcal{L}_U^{*}}{\kappa_{\min}(1-\gamma)^2}  + \frac{C_6\mu^2}{(1-\gamma)^2}
\$
where $C_5$ and $C_6$ are some constants, and 
\$
\mathcal{L}_U^{*} :=  \inf_{\{V_{\mu},\pi_{\mu},{\eta},\varpi\}}\mathcal{L}_{U}(V_{\mu},\pi_{\mu},{\eta},\varpi)
\$

Now, we have the remainder term $\Delta_2$ to bound. 
$$
\Delta_2 \leq  \underbrace{\|V^{\widetilde{\pi}_{\mu}}_{\mu} -  V^{*}_{\mu} \|_{L^2}}_{\Delta_{2}^{1}} +   \underbrace{\|  V^{*}_{\mu}-V^* \|_{L^2}}_{\Delta_{2}^{2}}
$$
We first bound $\Delta_{2}^{1}$. For any $s \in \mathcal{S}$, then we have that
\$
\mathcal{B}_{\mu}V^{\widetilde{\pi}_{\mu}}_{\mu}(s) =& \max_{\pi}  \mathbb{E}_{a \sim \pi(\cdot |s),\ S^{t+1}|s,a}\Big[R(S^{t+1},s,a) + \gamma V^{\widetilde{\pi}_{\mu}}_{\mu}({S}^{t+1}) + \mu \text{prox}(\pi(a|s))\Big] \\
=&  \mathbb{E}_{a \sim \widetilde{\pi}_{\mu}(\cdot |s),\ S^{t+1}|s,a}\Big[R(S^{t+1},s,a) + \gamma V^{\widetilde{\pi}_{\mu}}_{\mu}({S}^{t+1}) + \mu \text{prox}(\widetilde{\pi}_{\mu}(a|s))\Big] \\
=&  \mathbb{E}_{a \sim \widetilde{\pi}_{\mu}(\cdot |s),\ S^{t+1}|s,a}\Big[R(S^{t+1},s,a) + \gamma V^{\widetilde{\pi}_{\mu}}_{\mu}({S}^{t+1}) +  {\mu}(1-\widetilde{\pi}_{\mu}(a|s))\Big] \\
=&  \mathbb{E}_{a \sim \widetilde{\pi}_{\mu}(\cdot |s),\ S^{t+1}|s,a}\Big[R(S^{t+1},s,a) + \gamma V^{\widetilde{\pi}_{\mu}}_{\mu}({S}^{t+1}) +  {\mu} - \mu \widetilde{\pi}_{\mu}(a|s)\Big] + \\
& \qquad \qquad \mathbb{E}_{a \sim \widetilde{\pi}_{\mu}(\cdot |s)}\Big[{\mu} \widetilde{\pi}_{\mu}(a|s)\Big].
\$
As $(V^{\widetilde{\pi}_{\mu}}_{\mu},\widetilde{\pi}_{\mu})$ is the solution of the stationarity equation, 
\$
\mathbb{E}_{a \sim \widetilde{\pi}_{\mu}(\cdot |s),\ S^{t+1}|s,a}\Big[R(S^{t+1},s,a) + \gamma V^{\widetilde{\pi}_{\mu}}_{\mu}({S}^{t+1}) + {\mu} - \mu \widetilde{\pi}_{\mu}(a|s)\Big] \leq V^{\widetilde{\pi}_{\mu}}_{\mu}(s)
\$
and since $\mathbb{E}_{a \sim \widetilde{\pi}_{\mu}(\cdot |s)}\Big[ {\mu} \widetilde{\pi}_{\mu}(a|s)\Big] \leq {\mu}$, then we have 
 \$
 \mathcal{B}_{\mu}V^{\widetilde{\pi}_{\mu}}_{\mu}(s) \leq V^{\widetilde{\pi}_{\mu}}_{\mu}(s) +{\mu}\mathbf{C}.
 \$
For the lower bound, as 
\$
\mathbb{E}_{a \sim \widetilde{\pi}_{\mu}(\cdot |s),\ S^{t+1} | s, a}\Big[R(S^{t+1},s,a) + \gamma V^{\widetilde{\pi}_{\mu}}_{\mu}({S}^{t+1}) +  {\mu} - \mu \widetilde{\pi}_{\mu}(a|s)  - V^{\widetilde{\pi}_{\mu}}_{\mu}(s) \mid S^{t}=s \Big]  \geq & - \mathbf{C}{\mu}
\$
so similarly, we conclude that 
\$
\mathbf{C}{\mu} + \mathcal{B}_{\mu}V^{\widetilde{\pi}_{\mu}}_{\mu}(s) \geq V^{\widetilde{\pi}_{\mu}}_{\mu}(s).
\$
If follows the definition of the proximal Bellman operator $\mathcal{B}_{\mu}$ and due to the monotonicity of the Bellman operator that $\mathcal{B}_{\mu} V_1(s) \geq  \mathcal{B}_{\mu} V_2(s)$ for generic value functions $V_1(s) \geq V_2(s)$, and the $\mathcal{B}_{\mu}V(s) \geq \mathcal{B}^{\widetilde{\pi}_{\mu}}_{\mu}V(s)$ for any generic value function $V$, where $\mathcal{B}^{\widetilde{\pi}_{\mu}}_{\mu}$ is the proximal Bellman evaluation operator, i.e., 
\$
\mathcal{B}^{\widetilde{\pi}_{\mu}}_{\mu}V(s) :=  \mathbb{E}_{a \sim \widetilde{\pi}_{\mu}(\cdot|s), \ {S}^{t+1}|s,a}\Big[R(S^{t+1},s,a) + \gamma V({S}^{t+1}) + \mu \text{prox}\big(\widetilde{\pi}_{\mu}(a|s)\big) \Big] .
\$
Note that, $V^{\widetilde{\pi}_{\mu}}_{\mu}$ is unique fixed point of the Bellman operator $\mathcal{B}^{\widetilde{\pi}_{\mu}}_{\mu}$, thus $\lim_{i \rightarrow \infty }(\mathcal{B}^{\widetilde{\pi}_{\mu}}_{\mu})^{i}V^{\widetilde{\pi}_{\mu}}_{\mu}(s) = V^{\widetilde{\pi}_{\mu}}_{\mu}(s)$, where $i \in \mathbb{Z}^{+}$. And for any initial value function. e.g.,  $V^{\widetilde{\pi}_{\mu}}_{\mu}$, $\lim_{i \rightarrow \infty }(\mathcal{B}_{\mu})^{i}V^{\widetilde{\pi}_{\mu}}_{\mu}(s) = V^{*}_{\mu}(s)$ holds. Therefore the following inequality holds that
\begin{align}
 &V^{\widetilde{\pi}_{\mu}}_{\mu}(s) =  \lim_{i \rightarrow \infty }(\mathcal{B}^{\widetilde{\pi}_{\mu}}_{\mu})^{i}V^{\widetilde{\pi}_{\mu}}_{\mu}(s) \leq \lim_{i \rightarrow \infty }(\mathcal{B}^{\widetilde{\pi}_{\mu}}_{\mu})^{i}\Big(V^{\widetilde{\pi}_{\mu}}_{\mu} + \mathbf{C}{\mu}\Big)(s) \leq \lim_{i \rightarrow \infty }(\mathcal{B}_{\mu})^{i} \Big(V^{\widetilde{\pi}_{\mu}}_{\mu}  + \mathbf{C}{\mu}\Big)(s) \notag \\
\implies & V^{\widetilde{\pi}_{\mu}}_{\mu}(s) \leq \lim_{i \rightarrow \infty }(\mathcal{B}_{\mu})^{i} V^{\widetilde{\pi}_{\mu}}_{\mu}(s) + \sum^{\infty}_{i = 1}{{\mathbf{C}{\mu}\gamma^{i-1}}} \leq V^{*}_{\mu}(s) + \frac{\mathbf{C}{\mu}}{(1-\gamma)}.
 \label{upper_bound_con}
\end{align}
We repeatedly apply a similar procedure, without loss of generality. We first show one step that
\$
  \mathcal{B}_{\mu}(\mathcal{B}_{\mu}V^{\widetilde{\pi}_{\mu}}_{\mu}(s)) \leq   \mathcal{B}_{\mu}(V^{\widetilde{\pi}_{\mu}}_{\mu}(s) + {\mathbf{C}{\mu}})=  \mathcal{B}_{\mu}(V^{\widetilde{\pi}_{\mu}}_{\mu}(s)) + {\mathbf{C}{\mu}\gamma} \leq V^{\widetilde{\pi}_{\mu}}_{\mu}(s) + \mathbf{C}{\mu} + {\mathbf{C}{\mu}\gamma}.
\$
Then we apply infinite many time $\mathcal{B}_{\mu}$, then we can have that
\#
V^{*}_{\mu}(s) = \lim_{i \rightarrow \infty }(\mathcal{B}_{\mu})^{i}V^{\widetilde{\pi}_{\mu}}_{\mu}(s) \leq V^{\widetilde{\pi}_{\mu}}_{\mu}(s)  + \sum^{\infty}_{i=1} { \mathbf{C}{\mu}\gamma^{i-1}} = V^{\widetilde{\pi}_{\mu}}_{\mu}(s)   + \frac{\mathbf{C}{\mu}}{(1-\gamma)}. 
\label{lower_bound_con}
\#
Combine with the inequalities \eqref{upper_bound_con}-\eqref{lower_bound_con}, we immediately have that
\$
\|V^{*}_{\mu} - V^{\widetilde{\pi}_{\mu}}_{\mu}  \|_{L^2} \leq  \frac{\mathbf{C}{\mu}}{(1-\gamma)}
\$
Next, by Proposition \ref{v-error}, we have
\$
\|V^{*}_{\mu}-V^{*}\|_{\infty}\leq \frac{\mu \cdot \max\{|1-\mathbf{C}|,1\}}{1-\gamma},
\$
Now, we need to bound the excess risk. The excess risk can be decomposed into approximation error and estimation error, i.e.
\$
\mathcal{L}_{U}(\widehat {V}_{\mu}^{\theta_1},  &\widehat{\pi}_{\mu}^{\theta_2}, \widehat \eta^{\xi_1}, \widehat \varpi^{\xi_2}) - \mathcal{L}^{*}_{U} = \underbrace{\left(\inf _{(V_{\mu}^{\theta_1},{\pi}_{\mu}^{\theta_2},\eta^{\xi_1}, \varpi^{\xi_2}) \in \Theta_1 \times \Theta_2 \times \Xi_1 \times \Xi_2 } \mathcal{L}_U(V_{\mu}^{\theta_1},{\pi}_{\mu}^{\theta_2},\eta^{\xi_1}, \varpi^{\xi_2})-\mathcal{L}_U^{*}\right)}_{\Delta_{\text{approx}}} \\
&+\underbrace{\left(\mathcal{L}_U\left(\widehat {V}_{\mu}^{\theta_1}, \widehat{\pi}_{\mu}^{\theta_2}, \widehat \eta^{\xi_1},\widehat \varpi^{\xi_2}\right)-\inf _{(V_{\mu}^{\theta_1},{\pi}_{\mu}^{\theta_2},\eta^{\xi_1}, \varpi^{\xi_2}) \in \Theta_1 \times \Theta_2 \times \Xi_1 \times \Xi_2 } \mathcal{L}_U(V_{\mu}^{\theta_1},{\pi}_{\mu}^{\theta_2},\eta^{\xi_1}, \varpi^{\xi_2})\right)}_{\Delta_{\text{est}}},
\$
where ${\Delta_{\text{approx}}}$ is the approximation error and ${\Delta_{\text{est}}}$ is the estimation error. The approximation error is assumed to be zero in our proof for simplicity. At first, we consider to bound the estimation error. 
\$
& \mathcal{L}_U\left(\widehat {V}_{\mu}^{\theta_1}, \widehat{\pi}_{\mu}^{\theta_2},\widehat \eta^{\xi_1}, \widehat \varpi^{\xi_2}\right)-\inf _{(V_{\mu}^{\theta_1},{\pi}_{\mu}^{\theta_2},\eta^{\xi_1},  \varpi^{\xi_2}) \in \Theta_1 \times \Theta_2 \times \Xi_1 \times \Xi_2 } \mathcal{L}_U(V_{\mu}^{\theta_1},{\pi}_{\mu}^{\theta_2}, \eta^{\xi_1},  \varpi^{\xi_2}) \\
& := \mathcal{L}_U\left(\widehat {V}_{\mu}^{\theta_1}, \widehat{\pi}_{\mu}^{\theta_2},\widehat \eta^{\xi_1}, \widehat \varpi^{\xi_2}\right)-\mathcal{L}_{U}(V^{{\pi}^{\circ}_{\mu}}_{\mu},{\pi}^{\circ}_{\mu}, \eta^{\xi_1},  \varpi^{\xi_2})  \\
& \leq \mathcal{L}_U\left(\widehat {V}_{\mu}^{\theta_1}, \widehat{\pi}_{\mu}^{\theta_2}, \widehat \eta^{\xi_1}, \widehat \varpi^{\xi_2}\right) - \mathcal{L}_{U}(V^{{\pi}^{\circ}_{\mu}}_{\mu},{\pi}^{\circ}_{\mu},\eta^{\xi_1},  \varpi^{\xi_2}) + \widehat{\mathcal{L}_{U}}(V^{{\pi}^{\circ}_{\mu}}_{\mu},{\pi}^{\circ}_{\mu},\eta^{\xi_1},  \varpi^{\xi_2}) - \widehat{\mathcal{L}_{U}}\left(\widehat {V}_{\mu}^{\theta_1}, \widehat{\pi}_{\mu}^{\theta_2}, \widehat \eta^{\xi_1}, \widehat \varpi^{\xi_2}\right) \\
& \leq \left(\mathcal{L}_U\left(\widehat {V}_{\mu}^{\theta_1}, \widehat{\pi}_{\mu}^{\theta_2}, \widehat \eta^{\xi_1}, \widehat \varpi^{\xi_2}\right) -  \widehat{\mathcal{L}_{U}}\left(\widehat {V}_{\mu}^{\theta_1}, \widehat{\pi}_{\mu}^{\theta_2}, \widehat \eta^{\xi_1}, \widehat \varpi^{\xi_2}\right) \right) - \left( \mathcal{L}_{U}(V^{{\pi}^{\circ}_{\mu}}_{\mu},{\pi}^{\circ}_{\mu},\eta^{\xi_1},  \varpi^{\xi_2}) - \widehat{\mathcal{L}_{U}}(V^{{\pi}^{\circ}_{\mu}}_{\mu},{\pi}^{\circ}_{\mu},\eta^{\xi_1},  \varpi^{\xi_2}) \right) \\
& \leq 2 \sup_{(V_{\mu}^{\theta_1},{\pi}_{\mu}^{\theta_2},\eta^{\xi_1},  \varpi^{\xi_2}) \in \Theta_1 \times \Theta_2 \times \Xi_1 \times \Xi_2 } \left| \mathcal{L}_{U}(V_{\mu}^{\theta_1},{\pi}_{\mu}^{\theta_2},\eta^{\xi_1},  \varpi^{\xi_2}) - \widehat{\mathcal{L}_{U}}(V_{\mu}^{\theta_1},{\pi}_{\mu}^{\theta_2},\eta^{\xi_1},  \varpi^{\xi_2})  \right|.
 \$
where $\eta^{\xi_1}, \varpi^{\xi_2}$ are Lagrange multipliers satisfying minimal Bayes risk associated with $V^{{\pi}^{\circ}_{\mu}}_{\mu},{\pi}^{\circ}_{\mu}$ for the rest of this proof. Observe that the randomness of $\sup_{(V_{\mu}^{\theta_1},{\pi}_{\mu}^{\theta_2},\eta^{\xi_1}, \varpi^{\xi_2}) \in \Theta_1 \times \Theta_2 \times \Xi_1 \times \Xi_2}  | \mathcal{L}_{U}(V_{\mu}^{\theta_1},{\pi}_{\mu}^{\theta_2},\eta^{\xi_1}, \varpi^{\xi_2}) - \widehat{\mathcal{L}_{U}}(V_{\mu}^{\theta_1},{\pi}_{\mu}^{\theta_2},\eta^{\xi_1}, \varpi^{\xi_2}) |$
 can be decomposed into two parts, one is from the $n$ number of i.i.d. trajectories and another one is from the dependent transition within each trajectory. For each single trajectory, we define the quantity
\$
U^{\star}(V_{\mu}^{\theta_1},{\pi}_{\mu}^{\theta_2},\eta^{\xi_1}, \varpi^{\xi_2}) = &   \Lambda_{V_{\mu}^{\theta_1},{\pi}_{\mu}^{\theta_2}}(S^{t}_i,A^{t}_i,S^{t+1}_i) K( S^{t}_i,A^{t}_i; \widetilde{S}^{t}_i,\widetilde{A}^{t}_i)  \Lambda_{V_{\mu}^{\theta_1},{\pi}_{\mu}^{\theta_2}}(\widetilde{S}^{t}_i,\widetilde{A}^{t}_i,V_{\mu}^{\theta_1}),
\$
where $\mathbb E_{T}$ is defined as taking expectation to single stationary trajectory and $\mathbb E$ is defined as taking expectation to i.i.d. trajectory random variable $\mathcal{D}_{1}$, respectively. Without loss of generality, we assume $C_0=1$. The U-statistic approximation for $\mathbb{E}_T(U^{\star})$ is as follows:
\$
&U_{T}(V_{\mu}^{\theta_1},{\pi}_{\mu}^{\theta_2},\eta^{\xi_1}, \varpi^{\xi_2})  \\ 
\coloneqq &\frac{2}{T(T-1)}  \sum_{ 1\leq j\neq k \leq T} \Big[\big(
\Lambda_{V_{\mu}^{\theta_1},{\pi}_{\mu}^{\theta_2}}(S_i^{j},A_i^{j},S_i^{j+1}) K\big( S_i^{j},A_i^{j}; S_i^{k},{A}_i^{k}\big) \big(
\Lambda_{V_{\mu}^{\theta_1},{\pi}_{\mu}^{\theta_2}}(S_i^{k},A_i^{k},V^{\theta_1}_{\mu}) \Big].
\$
Then the uniform process is bounded by
\$
& \sup_{(V_{\mu}^{\theta_1},{\pi}_{\mu}^{\theta_2},\eta^{\xi_1}, \varpi^{\xi_2}) \in \Theta_1 \times \Theta_2 \times \Xi_1 \times \Xi_2 } \left| \mathcal{L}_{U}(V_{\mu}^{\theta_1},{\pi}_{\mu}^{\theta_2},\eta^{\xi_1}, \varpi^{\xi_2}) - \widehat{\mathcal{L}_{U}}(V_{\mu}^{\theta_1},{\pi}_{\mu}^{\theta_2},\eta^{\xi_1}, \varpi^{\xi_2})  \right| \\
& \leq  \sup_{(V_{\mu}^{\theta_1},{\pi}_{\mu}^{\theta_2},\eta^{\xi_1}, \varpi^{\xi_2}) \in \Theta_1 \times \Theta_2 \times \Xi_1 \times \Xi_2 } \left| \mathcal{L}_{U}(V_{\mu}^{\theta_1},{\pi}_{\mu}^{\theta_2},\eta^{\xi_1}, \varpi^{\xi_2}) - \mathbb P^{(\mathcal{D}_{i:n})}_n \mathbb  E_T[U^{\star}(V_{\mu}^{\theta_1},{\pi}_{\mu}^{\theta_2},\eta^{\xi_1}, \varpi^{\xi_2})] \right| \\
&\; + \sup_{(V_{\mu}^{\theta_1},{\pi}_{\mu}^{\theta_2},\eta^{\xi_1}, \varpi^{\xi_2}) \in \Theta_1 \times \Theta_2 \times \Xi_1 \times \Xi_2  } \left| \mathbb P^{(\mathcal{D}_{i:n})}_n \mathbb  E_T[U^{\star}(V_{\mu}^{\theta_1},{\pi}_{\mu}^{\theta_2},\eta^{\xi_1}, \varpi^{\xi_2})]  - \mathbb P^{(\mathcal{D}_{i:n})}_n U_{T}(V_{\mu}^{\theta_1},{\pi}_{\mu}^{\theta_2},\eta^{\xi_1}, \varpi^{\xi_2})] \right| \\
& \leq \underbrace{\sup_{(V_{\mu}^{\theta_1},{\pi}_{\mu}^{\theta_2},\eta^{\xi_1}, \varpi^{\xi_2}) \in \Theta_1 \times \Theta_2 \times \Xi_1 \times \Xi_2  } \left| \mathcal{L}_{U}(V_{\mu}^{\theta_1},{\pi}_{\mu}^{\theta_2},\eta^{\xi_1}, \varpi^{\xi_2}) - \mathbb P^{(\mathcal{D}_{i:n})}_n \mathbb  E_T[U^{\star}(V_{\mu}^{\theta_1},{\pi}_{\mu}^{\theta_2},\eta^{\xi_1}, \varpi^{\xi_2})] \right|}_{\Delta_1} \\
&\; + \frac{1}{n}\sum^n_{i=1} \sup_{(V_{\mu}^{\theta_1},{\pi}_{\mu}^{\theta_2},\eta^{\xi_1}, \varpi^{\xi_2}) \in \Theta_1 \times \Theta_2 \times \Xi_1 \times \Xi_2  } \left|  \mathbb E_T[U^{\star}(V_{\mu}^{\theta_1},{\pi}_{\mu}^{\theta_2},\eta^{\xi_1}, \varpi^{\xi_2})] - U_{T}(V_{\mu}^{\theta_1},{\pi}_{\mu}^{\theta_2},\eta^{\xi_1}, \varpi^{\xi_2})] \right|,
\$

where $\mathbb P^{(\mathcal{D}_{i:n})}_n$ is the empirical measure with respect to $\mathcal{D}_{i:n}=\{\mathcal{D}_i\}^n_{i=1}$ and we simply denotes it as $\mathbb P_n$ in the following proof. The last term is the bound for uniform process w.r.t sum of trajectories. In this sense, it is necessary to bound
\$
\Delta_2 = \sup_{(V_{\mu}^{\theta_1},{\pi}_{\mu}^{\theta_2},\eta^{\xi_1}, \varpi^{\xi_2}) \in \Theta_1 \times \Theta_2 \times \Xi_1 \times \Xi_2 } \left|  \mathbb  E_T[U^{\star}(V_{\mu}^{\theta_1},{\pi}_{\mu}^{\theta_2},\eta^{\xi_1}, \varpi^{\xi_2})] - U_{T}(V_{\mu}^{\theta_1},{\pi}_{\mu}^{\theta_2},\eta^{\xi_1}, \varpi^{\xi_2})] \right|,
\$
since the trajectories $\{\mathcal{D}_i\}^n_{i=1}$ are i.i.d. Now, we process to bound $\Delta_1$. $\Delta_1$ can be re-expressed as the empirical process of $\{\mathcal{D}_i\}^n_{i=1}$ w.r.t. the probability space $(\Omega_N, \mathcal{F}_N, \mathbb P)$ equipped with empirical measure $\mathbb P_n$ such that
\$
\Delta_1 &= \sup_{(V_{\mu}^{\theta_1},{\pi}_{\mu}^{\theta_2},\eta^{\xi_1}, \varpi^{\xi_2}) \in \Theta_1 \times \Theta_2 \times \Xi_1 \times \Xi_2 } \left| \mathbb E (\mathbb  E_T[U^{\star}(V_{\mu}^{\theta_1},{\pi}_{\mu}^{\theta_2},\eta^{\xi_1}, \varpi^{\xi_2})]) - \mathbb P_n \mathbb  E_T[U^{\star}(V_{\mu}^{\theta_1},{\pi}_{\mu}^{\theta_2},\eta^{\xi_1}, \varpi^{\xi_2})] \right| \\
 &= \sup_{(V_{\mu}^{\theta_1},{\pi}_{\mu}^{\theta_2},\eta^{\xi_1}, \varpi^{\xi_2}) \in \Theta_1 \times \Theta_2 \times \Xi_1 \times \Xi_2 } \left| \mathbb E G(V_{\mu}^{\theta_1},{\pi}_{\mu}^{\theta_2},\eta^{\xi_1}, \varpi^{\xi_2}; \mathcal{D}_{i})- \mathbb P_n G(V_{\mu}^{\theta_1},{\pi}_{\mu}^{\theta_2},\eta^{\xi_1}, \varpi^{\xi_2}; \mathcal{D}_{i})\right|,
\$
where $G(V_{\mu}^{\theta_1},{\pi}_{\mu}^{\theta_2},\eta^{\xi_1}, \varpi^{\xi_2}; \mathcal{D}_i)$ is the random function associated with random variable $\mathcal{D}_i$. To bound $\Delta_1$, it is needed to calculate the covering number $\mathcal{N}\left({\epsilon}, \mathcal{F}_{\theta,\xi}, \left\{\mathcal{D}_i\right\}^n_{i=1} \right)$ by Pollard's tail inequality \citep{pollard2012convergence}, where the function space is the composite space $\mathcal{F}_{\theta,\xi} = G(\Theta_1 \times \Theta_2 \times \Xi_1 \times \Xi_2 )$. Specifically, $G(V_{\mu}^{\theta_1},{\pi}_{\mu}^{\theta_2},\eta^{\xi_1}, \varpi^{\xi_2}; \mathcal{D}_i) =  \mathbb{E}_{T} \big[ M(V_{\mu}^{\theta_1},{\pi}_{\mu}^{\theta_2},\eta^{\xi_1}, \varpi^{\xi_2}; \mathcal{D}_i) K( \{S^{t}_i,A^{t}_i\}, \{ \widetilde{S}^{t}_i,\widetilde{A}^{t}_i\})  \linebreak \widetilde{M}(V_{\mu}^{\theta_1},{\pi}_{\mu}^{\theta_2},  \eta^{\xi_1}, \varpi^{\xi_2};  \mathcal{D}_i) \big]$,
 where $M(V_{\mu}^{\theta_1},{\pi}_{\mu}^{\theta_2},\eta^{\xi_1}, \varpi^{\xi_2}; \mathcal{D}_i) = \mu_{V_{\mu}^{\theta_1},{\pi}_{\mu}^{\theta_2}}(S^{t}_i,A^{t}_i,S^{t+1}_i)$ and $\widetilde{M}(V_{\mu}^{\theta_1},{\pi}_{\mu}^{\theta_2},\eta^{\xi_1}, \varpi^{\xi_2}; \mathcal{D}_i) = \Lambda_{V_{\mu}^{\theta_1}, {\pi}_{\mu}^{\theta_2}}(\widetilde{S}^{t}_i,\widetilde{A}^{t}_i,\widetilde{S}^{t+1}_i)$. Next, we proceed to bound the distance in composite space $\mathcal{F}_{\theta,\xi}$. In particular, let $(V_{\mu}^{\theta_1}, {\pi}_{\mu}^{\theta_2},\eta^{\xi_1}, \varpi^{\xi_2}), \linebreak ({V_{\mu}^{\theta_1}}^{\prime},{{\pi}_{\mu}^{\theta_2}}^{\prime},{\eta^{\xi_1}}^{\prime}, {\varpi^{\xi_2}}^{\prime}) \in \Theta_1 \times \Theta_2 \times \Xi_1 \times \Xi_2 $ are two arbitrary functions, then the empirical norm distance w.r.t. $\{\mathcal{D}_i\}^n_{i=1}$ for the two function can be upper bounded by
\#
& \mathbb P_{n}  \left|G(V_{\mu}^{\theta_1},{\pi}_{\mu}^{\theta_2},\eta^{\xi_1}, \varpi^{\xi_2};\mathcal{D}_i ) -  G({V_{\mu}^{\theta_1}}^{\prime},{{\pi}_{\mu}^{\theta_2}}^{\prime},{\eta^{\xi_1}}^{\prime}, {\varpi^{\xi_2}}^{\prime};\mathcal{D}_i) \right| \\
= & \mathbb P_{n}  \bigg|  \mathbb{E}_{T} \big[ M(V_{\mu}^{\theta_1},{\pi}_{\mu}^{\theta_2},\eta^{\xi_1}, \varpi^{\xi_2}; \mathcal{D}_i) K( \{S^{t},A^{t}\}, \{ \widetilde{S}^{t},\widetilde{A}^{t}\})  \widetilde{M}(V_{\mu}^{\theta_1},{\pi}_{\mu}^{\theta_2},\eta^{\xi_1}, \varpi^{\xi_2}; \mathcal{D}_i) \big] - \\
& \quad   \quad \mathbb{E}_{T} \big[ M({V_{\mu}^{\theta_1}}^{\prime},{{\pi}_{\mu}^{\theta_2}}^{\prime},{\eta^{\xi_1}}^{\prime}, {\varpi^{\xi_2}}^{\prime};\mathcal{D}_i) K( S^{t},A^{t}; \widetilde{S}^{t},\widetilde{A}^{t})  \widetilde{M}({V_{\mu}^{\theta_1}}^{\prime},{{\pi}_{\mu}^{\theta_2}}^{\prime},{\eta^{\xi_1}}^{\prime}, {\varpi^{\xi_2}}^{\prime};\mathcal{D}_i) \big] \bigg| \\
= & \mathbb P_{n}  \bigg|  \mathbb{E}_{T} \Big[ K( S^{t},A^{t}; \widetilde{S}^{t},\widetilde{A}^{t}) \big( M(V_{\mu}^{\theta_1},{\pi}_{\mu}^{\theta_2},\eta^{\xi_1}, \varpi^{\xi_2}; \mathcal{D}_i)\widetilde{M}(V_{\mu}^{\theta_1},{\pi}_{\mu}^{\theta_2},\eta^{\xi_1}, \varpi^{\xi_2}; \mathcal{D}_i) - \\
& \quad   \quad M({V_{\mu}^{\theta_1}}^{\prime},{{\pi}_{\mu}^{\theta_2}}^{\prime},{\eta^{\xi_1}}^{\prime}, {\varpi^{\xi_2}}^{\prime};\mathcal{D}_i) \widetilde{M}({V_{\mu}^{\theta_1}}^{\prime},{{\pi}_{\mu}^{\theta_2}}^{\prime},{\eta^{\xi_1}}^{\prime}, {\varpi^{\xi_2}}^{\prime};\mathcal{D}_i)\big)  \Big]\bigg| \\
= & \mathbb P_{n}  \Big\{\mathbb{E}_{T} \Big| K( S^{t},A^{t}; \widetilde{S}^{t},\widetilde{A}^{t}) \big( M(V_{\mu}^{\theta_1},{\pi}_{\mu}^{\theta_2},\eta^{\xi_1}, \varpi^{\xi_2}; \mathcal{D}_i)\widetilde{M}(V_{\mu}^{\theta_1},{\pi}_{\mu}^{\theta_2},\eta^{\xi_1}, \varpi^{\xi_2}; \mathcal{D}_i) - \\
& \quad   \quad M({V_{\mu}^{\theta_1}}^{\prime},{{\pi}_{\mu}^{\theta_2}}^{\prime},{\eta^{\xi_1}}^{\prime}, {\varpi^{\xi_2}}^{\prime};\mathcal{D}_i) \widetilde{M}({V_{\mu}^{\theta_1}}^{\prime},{{\pi}_{\mu}^{\theta_2}}^{\prime},{\eta^{\xi_1}}^{\prime}, {\varpi^{\xi_2}}^{\prime};\mathcal{D}_i)\big)\Big|\Big\}\\
\leq & \mathbb P_{n}  \Big\{\mathbb{E}_{T} \Big| K( S^{t},A^{t};\widetilde{S}^{t},\widetilde{A}^{t}) \Big| \cdot  \mathbb{E}_{T} \Big|\big( M(V_{\mu}^{\theta_1},{\pi}_{\mu}^{\theta_2},\eta^{\xi_1}, \varpi^{\xi_2}; \mathcal{D}_i) +M({V_{\mu}^{\theta_1}}^{\prime},{{\pi}_{\mu}^{\theta_2}}^{\prime},{\eta^{\xi_1}}^{\prime}, {\varpi^{\xi_2}}^{\prime};\mathcal{D}_i) \Big|  \cdot \\
& \quad   \quad \mathbb{E}_{T} \Big|\big( \widetilde{M}(V_{\mu}^{\theta_1},{\pi}_{\mu}^{\theta_2},\eta^{\xi_1}, \varpi^{\xi_2}; \mathcal{D}_i) - \widetilde{M}({V_{\mu}^{\theta_1}}^{\prime},{{\pi}_{\mu}^{\theta_2}}^{\prime},{\eta^{\xi_1}}^{\prime}, {\varpi^{\xi_2}}^{\prime};\mathcal{D}_i) \Big| \Big\}\\
\leq & 
M_{\max,1}\bigg(\mathbb{P}_n  \mathbb{E}_{T}  |\eta^{\xi_1} -  {\eta^{\xi_1}}^{\prime}| +  \mu\mathbb{P}_n  \mathbb{E}_{T}  |\text{prox}^{\circ}(\pi_\mu^{\theta_2}) - \text{prox}^{\circ}({\pi_\mu^{\theta_2}}^{\prime}) | \\
&+ \mathbb{P}_n\mathbb{E}_{T} |(\gamma V_{\mu}^{\theta_1} - V_{\mu}^{\theta_1}) -  (\gamma{V_{\mu}^{\theta_1}}^{\prime} - {V_{\mu}^{\theta_1}}^{\prime})|  +  \mathbb{P}_n  \mathbb{E}_{T} |\psi^{\theta} -{\varpi^{\xi_2}}^{\prime} | \bigg) \\
= &  M_{\max,1}  \big( \mathbb{P}_n  \mathbb{E}_{T} |\eta^{\xi_1} -  {\eta^{\xi_1}}^{\prime}| +  {\mu}\mathbb{P}_n \mathbb{E}_{T}  |\pi_\mu^{\theta_2} - {\pi_\mu^{\theta_2}}^{\prime} | + (1+\gamma) \mathbb{P}_n \mathbb{E}_{T} | V_{\mu}^{\theta_1} -{V_{\mu}^{\theta_1}}^{\prime}  |  +  \mathbb{P}_n \mathbb{E}_{T}  |\psi^{\theta} -{\varpi^{\xi_2}}^{\prime} |  \big)\\
  \leq  &     M_{\max,1} \big(  \mathbb{P}_n \|\eta^{\xi_1} -  {\eta^{\xi_1}}^{\prime}\|_{\infty}  + {\mu}\mathbb{P}_n  \| \pi_\mu^{\theta_2} - {\pi_\mu^{\theta_2}}^{\prime} \|_{\infty} + (1+\gamma) \mathbb{P}_n  \| V_{\mu}^{\theta_1} -{V_{\mu}^{\theta_1}}^{\prime}  \|_{\infty}  +  \mathbb{P}_n  \| \varpi^{\xi_2} -{\varpi^{\xi_2}}^{\prime} \|_{\infty} \big), \label{distance}
\#
where $M_{\max,1} =2M_{\max}$. Therefore, as the proximal parameter $0 \leq \mu \leq \mu_{\max} < \infty$, for any $\varepsilon > 0$ the metric entropy $\log \mathcal{N}\left((\mu_{\max}+4) M_{\max,1}{\varepsilon}, \mathcal{F}_{\theta,\xi},\left\{\mathcal{D}_i\right\}^n_{i=1} \right)$ can be bound with respect to separate metric entropy of $(\Theta_1, \Theta_2 , \Xi_1 , \Xi_2)$. Denote $\min(2(\mu_{\max}+4) M_{\max},1)$ as $\widetilde{C}$, then
\$
\mathcal{N}  & \Big( (\mu_{\max}+4) M_{\max,1} {\varepsilon}, \mathcal{F}_{\theta,\xi}, \left\{\mathcal{D}_i\right\}^n_{i=1} \Big) \\
\leq & \;  \mathcal{N}\left(\widetilde{C}{\varepsilon},\Theta_1,\left\{\mathcal{D}_i\right\}^n_{i=1} \right) \mathcal{N}\left(\widetilde{C}{\varepsilon},\Theta_2,\left\{\mathcal{D}_i\right\}^n_{i=1} \right) \mathcal{N}\left(\widetilde{C}{\varepsilon},\Xi_1,\left\{\mathcal{D}_i\right\}^n_{i=1} \right)  \mathcal{N}\left(\widetilde{C}{\epsilon}, \Xi_2, \left\{\mathcal{D}_i\right\}^n_{i=1} \right) 
\$
 To bound these factors, we first introduce a idea of pseudo-dimension , that is, for any set $\mathcal{X}$, any points $x^{1: N} \in \mathcal{X}^N$, any class $\mathcal{F}$ of functions on $\mathcal{X}$ taking values in $[0,C]$ with pseudo-dimension $D_{\mathcal{F}} < \infty$ and any $\epsilon$ > 0, we have 
\$
\mathcal{N}\left(\epsilon, \mathcal{F}, x^{1: N}\right) \leqslant e\left(D_{\mathcal{F}}+1\right)\left(\frac{2 e C}{\epsilon}\right)^{D_{\mathcal{F}}}
\$
Therefore, we have
\$
\mathcal{N}  & \Big( 2(\mu_{\max}+4) M_{\max} {\epsilon}, \mathcal{F}_{\theta,\xi}, \left\{\mathcal{D}_i\right\}^n_{i=1} \Big) \\
\leq & e^4\left(D_{\Theta_1}+1\right)\left(D_{\Theta_2}+1\right)\left(D_{\Xi_1}+1\right)\left(D_{\Xi_2}+1\right)\left(\frac{2 e M_{\max}}{\widetilde{C}\epsilon}\right)^{D_{\Theta_1}+D_{\Theta_2}+D_{\Xi_1}+D_{\Xi_2}}
\$
which implies
\$
\mathcal{N} & \Big(\frac{\epsilon}{2}, \mathcal{F}_{\theta,\xi}, \left\{\mathcal{D}_i\right\}^n_{i=1} \Big) \\
\leq & e^4\left(D_{\Theta_1}+1\right)\left(D_{\Theta_2}+1\right)\left(D_{\Xi_1}+1\right)\left(D_{\Xi_2}+1\right)\left(\frac{8(\mu_{\max}+4) M^3_{\max}e}{\widetilde{C}\epsilon}\right)^{D_{\Theta_1}+D_{\Theta_2}+D_{\Xi_1}+D_{\Xi_2}}\\
:= & C_{1} \left(\frac{1}{\epsilon}\right)^{D_{\mathcal{F}_{\theta,\xi}}}
\$
where $C_{1}= e^4\left(D_{\Theta_1}+1\right)\left(D_{\Theta_2}+1\right)\left(D_{\Xi_1}+1\right)\left(D_{\Xi_2}+1\right)\left(\frac{8(\mu_{\max}+4) M^3_{\max}e}{\widetilde{C}}\right)^{D_{\Theta_1}+D_{\Theta_2}+D_{\Xi_1}+D_{\Xi_2}}$ and $D_{\mathcal{F}_{\theta,\xi}}=D_{\Theta_1}+D_{\Theta_2}+D_{\Xi_1}+D_{\Xi_2}$  i.e., the “effective” psuedo dimension.

Then we apply Pollard tail inequality, for any $n \geq 32/\epsilon^2$, we have
\$
& \mathbb{P}(\sup_{(V_{\mu}^{\theta_1},{\pi}_{\mu}^{\theta_2},\eta^{\xi_1}, \varpi^{\xi_2}) \in \Theta_1 \times \Theta_2 \times \Xi_1 \times \Xi_2 } \left| \mathbb E G(V_{\mu}^{\theta_1},{\pi}_{\mu}^{\theta_2},\eta^{\xi_1}, \varpi^{\xi_2}; \mathcal{D}_{i})- \mathbb P_n G(V_{\mu}^{\theta_1},{\pi}_{\mu}^{\theta_2},\eta^{\xi_1}, \varpi^{\xi_2}; \mathcal{D}_{i})\right| \geq \frac{\epsilon}{2} ) \\
\leq & 8   C_{1} \left(\frac{1}{\epsilon}\right)^{D_{\mathcal{F}_{\theta,\xi}}} \exp\left(-\frac{n\epsilon^2}{512
M^2_{\max}}\right)
\$
Then we can obtain 
\$
& \mathbb{E}\left[\sup_{(V_{\mu}^{\theta_1},{\pi}_{\mu}^{\theta_2},\eta^{\xi_1}, \varpi^{\xi_2}) \in \Theta_1 \times \Theta_2 \times \Xi_1 \times \Xi_2 } \left| \mathbb E G(V_{\mu}^{\theta_1},{\pi}_{\mu}^{\theta_2},\eta^{\xi_1}, \varpi^{\xi_2}; \mathcal{D}_{i})- \mathbb P_n G(V_{\mu}^{\theta_1},{\pi}_{\mu}^{\theta_2},\eta^{\xi_1}, \varpi^{\xi_2}; \mathcal{D}_{i})\right|^2\right] \\
=& \int_{0}^{\infty} \mathbb{P}(\sup_{(V_{\mu}^{\theta_1},{\pi}_{\mu}^{\theta_2},\eta^{\xi_1}, \varpi^{\xi_2}) \in \Theta_1 \times \Theta_2 \times \Xi_1 \times \Xi_2 } \left| \mathbb E G(V_{\mu}^{\theta_1},{\pi}_{\mu}^{\theta_2},\eta^{\xi_1}, \varpi^{\xi_2}; \mathcal{D}_{i})- \mathbb P_n G(V_{\mu}^{\theta_1},{\pi}_{\mu}^{\theta_2},\eta^{\xi_1}, \varpi^{\xi_2}; \mathcal{D}_{i})\right|^2 \geq t)dt \\
=&  \int_{0}^{} \mathbb{P}(\sup_{(V_{\mu}^{\theta_1},{\pi}_{\mu}^{\theta_2},\eta^{\xi_1}, \varpi^{\xi_2}) \in \Theta_1 \times \Theta_2 \times \Xi_1 \times \Xi_2 } \left| \mathbb E G(V_{\mu}^{\theta_1},{\pi}_{\mu}^{\theta_2},\eta^{\xi_1}, \varpi^{\xi_2}; \mathcal{D}_{i})- \mathbb P_n G(V_{\mu}^{\theta_1},{\pi}_{\mu}^{\theta_2},\eta^{\xi_1}, \varpi^{\xi_2}; \mathcal{D}_{i})\right|^2 \geq t)dt  \\
+ & \int_{
u}^{\infty} \mathbb{P}(\sup_{(V_{\mu}^{\theta_1},{\pi}_{\mu}^{\theta_2},\eta^{\xi_1}, \varpi^{\xi_2}) \in \Theta_1 \times \Theta_2 \times \Xi_1 \times \Xi_2 } \left| \mathbb E G(V_{\mu}^{\theta_1},{\pi}_{\mu}^{\theta_2},\eta^{\xi_1}, \varpi^{\xi_2}; \mathcal{D}_{i})- \mathbb P_n G(V_{\mu}^{\theta_1},{\pi}_{\mu}^{\theta_2},\eta^{\xi_1}, \varpi^{\xi_2}; \mathcal{D}_{i})\right|^2 \geq t)dt \\ 
\leq & u + \int_{u}^{\infty} 8   C_{1} \left(\frac{1}{t}\right)^{D_{\mathcal{F}_{\theta,\xi}}} \exp\left(-\frac{nt^2}{512
M^2_{\max}}\right) dt \\
= & u + \frac{64  C_{1} \left(\frac{1}{u}\right)^{D_{\mathcal{F}_{\theta,\xi}}}}{n} \exp\left(-\frac{nu}{512
M^2_{\max}}\right)
\$
With probability $1-\delta$, minimizing the RHS with respect to $u$, and plug the minimizer in, we have 
\$
&\mathbb{E}\left[\sup_{(V_{\mu}^{\theta_1},{\pi}_{\mu}^{\theta_2},\eta^{\xi_1}, \varpi^{\xi_2})} \left| \mathbb E G(V_{\mu}^{\theta_1},{\pi}_{\mu}^{\theta_2},\eta^{\xi_1}, \varpi^{\xi_2}; \mathcal{D}_{i})- \mathbb P_n G(V_{\mu}^{\theta_1},{\pi}_{\mu}^{\theta_2},\eta^{\xi_1}, \varpi^{\xi_2}; \mathcal{D}_{i})\right|^2\right]  \\
\leq &  \frac{64D_{\mathcal{F}_{\theta,\xi}}\log(8C_{1}\left(\frac{1}{\delta}\right))}{n},
\$
where $C_2 = 8C_1$. Therefore, we conclude that, with probability $1-\delta$, we have 
\$
\Delta_1 \leq & \sqrt{ \frac{64D_{\mathcal{F}_{\theta,\xi}}\log\left(\frac{8C_{1}}{\delta}\right)}{n}} := \sqrt{\frac{C_3 D_{\mathcal{F}_{\theta,\xi}}\log \left(\frac{8C_{1}}{\delta}\right)}{n}}
\$
Next, we proceed to bound $\Delta_2$. To simply the notation, we denote the U-statistic kernel as
\$
\bar K ( S^{t},A^{t} ;  \widetilde{S}^{t},\widetilde{A}^{t}) \coloneqq  \Lambda_{V_{\mu}^{\theta_1},{\pi}_{\mu}^{\theta_2}}(S_i^{t},A_i^{t},S_i^{t+1}) K\big( S_i^{t},A_i^{t} ;  \widetilde{S}^{t}_{i},\widetilde{A}^{t}_{i}\big) 
\Lambda_{V_{\mu}^{\theta_1},{\pi}_{\mu}^{\theta_2}}(\widetilde{S}^{t}_{i},\widetilde{A}^{t}_{i},S_i^{t+1})  .
\$
By Hoeffding's decomposition of kernel function $\bar K( S^{t},A^{t};  \widetilde{S}^{t},\widetilde{A}^{t})$, there exists kernel functions $\bar K_1 ( S^{t},A^{t} )$ and $\bar K_2( S^{t},A^{t};  \widetilde{S}^{t},\widetilde{A}^{t})$ that $\mathbb E_T \bar K_1( \widetilde{S}^{t},\widetilde{A}^{t} ) = 0$ and $\mathbb E_T \bar K_2 ( s,a; \widetilde{S}^{t},\widetilde{A}^{t}) =0$. The U-statistic $U_T$ can be decomposed into
\$
U_T = \mathbb E_{T} [U^{\star}] + \frac{2}{T}  \sum^{T}_{t=1} \bar K_1 \left( S^{t},A^{t} \right) +   U_{\bar K_2} \quad \text{and} \quad  \mathbb E_T [U_T]  = \mathbb E_{T} [U^{\star}]  +  \mathbb E_T[U_{\bar K_2}],
\$
where $U_{\bar K_2}:=U_{\bar K_2}(V_{\mu}^{\theta_1},{\pi}_{\mu}^{\theta_2},\eta^{\xi_1}, \varpi^{\xi_2})$ is defined similarly as in the proof of Theorem \ref{exp-decay}. The details of the decomposition can be seen in the proof of Theorem \ref{exp-decay}. The term $\Delta_2$  can be immediately decomposed as follows
\$
\Delta_2 =& \sup_{(V_{\mu}^{\theta_1},{\pi}_{\mu}^{\theta_2},\eta^{\xi_1}, \varpi^{\xi_2}) \in \Theta_1 \times \Theta_2 \times \Xi_1 \times \Xi_2 } \left| U_{T}(V_{\mu}^{\theta_1},{\pi}_{\mu}^{\theta_2},\eta^{\xi_1}, \varpi^{\xi_2})] -  \mathbb  E_T[ U_{T}(V_{\mu}^{\theta_1},{\pi}_{\mu}^{\theta_2},\eta^{\xi_1}, \varpi^{\xi_2})] \right| + \\
& \qquad \quad  \sup_{(V_{\mu}^{\theta_1},{\pi}_{\mu}^{\theta_2},\eta^{\xi_1}, \varpi^{\xi_2}) \in \Theta_1 \times \Theta_2 \times \Xi_1 \times \Xi_2 } \left| \mathbb E_{T} [U^{\star}(V_{\mu}^{\theta_1},{\pi}_{\mu}^{\theta_2},\eta^{\xi_1}, \varpi^{\xi_2})] -  \mathbb  E_T[ U_{T}(V_{\mu}^{\theta_1},{\pi}_{\mu}^{\theta_2},\eta^{\xi_1}, \varpi^{\xi_2})] \right| \\
=& \underbrace{\sup_{(V_{\mu}^{\theta_1},{\pi}_{\mu}^{\theta_2},\eta^{\xi_1}, \varpi^{\xi_2}) \in \Theta_1 \times \Theta_2 \times \Xi_1 \times \Xi_2 } \left| U_{T}(V_{\mu}^{\theta_1},{\pi}_{\mu}^{\theta_2},\eta^{\xi_1}, \varpi^{\xi_2}) -  \mathbb  E_T[ U_{T}(V_{\mu}^{\theta_1},{\pi}_{\mu}^{\theta_2},\eta^{\xi_1}, \varpi^{\xi_2})] \right|}_{\Delta^1_2} + \\
& \qquad \qquad  \quad \underbrace{\sup_{(V_{\mu}^{\theta_1},{\pi}_{\mu}^{\theta_2},\eta^{\xi_1}, \varpi^{\xi_2}) \in \Theta_1 \times \Theta_2 \times \Xi_1 \times \Xi_2 } \left| \mathbb E_T[U_{\bar K_2}(V_{\mu}^{\theta_1},{\pi}_{\mu}^{\theta_2},\eta^{\xi_1}, \varpi^{\xi_2})] \right|}_{\Delta^2_2}.
\$
Note that the second term is not exactly zero since the samples are weakly dependent. But next, we will show that $\Delta^2_2$ converges to zero. First, we check the conditions of Lemma 3.1 in \cite{arcones1994central}. Observe that $K(\cdot,\cdot) \leq 1$ and according to Lemma \ref{u-bound}, then
\$
\sup_{(V_{\mu}^{\theta_1},{\pi}_{\mu}^{\theta_2},\eta^{\xi_1}, \varpi^{\xi_2}) \in \Theta_1 \times \Theta_2 \times \Xi_1 \times \Xi_2} &\left| \bar K \left( S^{t},A^{t};  \widetilde{S}^{t},\widetilde{A}^{t}\right) \right|\leq M_{\max}^2 K \left( S^{t},A^{t}; \widetilde{S}^{t},\widetilde{A}^{t}\right) \leq  M_{\max}^2.
\$
Therefore, the kernel $ \bar K $ is a uniformly bounded function. Under Assumption \ref{mixing} that $\beta(m) \lesssim m^{-\delta_1}$ for $\delta_1 > 1$. Therefore, $\beta(m) m^{\delta_1} \rightarrow 0$. By using a similar technique of calculating the metric entropy, for any $\epsilon > 0$, we have the covering number that
\$
\mathcal{N}\left( {\varepsilon}, \mathcal{F}_{\theta,\xi},\|\cdot\|_{L^2}\right) \leq \mathcal{N}\left( {\varepsilon}, \mathcal{F}_{\theta,\xi},\left\{\mathcal{D}_i\right\}^n_{i=1}\right) < \infty
\$
Then the conditions of Lemma 3.1 in \cite{arcones1994central} are satisfied, we have
\#
&\sup_{(V_{\mu}^{\theta_1},{\pi}_{\mu}^{\theta_2},\eta^{\xi_1}, \varpi^{\xi_2}) \in \Theta_1 \times \Theta_2 \times \Xi_1 \times \Xi_2}|\sqrt{T} U_{\bar K_2}(V_{\mu}^{\theta_1},{\pi}_{\mu}^{\theta_2},\eta^{\xi_1}, \varpi^{\xi_2})| = o_p(1) \notag \\
\implies \qquad & \sup_{(V_{\mu}^{\theta_1},{\pi}_{\mu}^{\theta_2},\eta^{\xi_1}, \varpi^{\xi_2}) \in \Theta_1 \times \Theta_2 \times \Xi_1 \times \Xi_2}|U_{\bar K_2}(V_{\mu}^{\theta_1},{\pi}_{\mu}^{\theta_2},\eta^{\xi_1}, \varpi^{\xi_2})| = o_p(1).
\label{remainder_inprob}
\#
Since $U_{\bar K_2}(V_{\mu}^{\theta_1},{\pi}_{\mu}^{\theta_2},\eta^{\xi_1}, \varpi^{\xi_2})$ is uniformly bounded, then \$\sup_{(V_{\mu}^{\theta_1},{\pi}_{\mu}^{\theta_2},\eta^{\xi_1}, \varpi^{\xi_2})} \mathbb |U_{\bar K_2}(V_{\mu}^{\theta_1},{\pi}_{\mu}^{\theta_2},\eta^{\xi_1}, \varpi^{\xi_2})|  < \infty
\$
As $x ,T \rightarrow \infty$, then
\$
\mathbb E\left[\sup_{(V_{\mu}^{\theta_1},{\pi}_{\mu}^{\theta_2},\eta^{\xi_1}, \varpi^{\xi_2})} \mathbb |U_{\bar K_2}(V_{\mu}^{\theta_1},{\pi}_{\mu}^{\theta_2},\eta^{\xi_1}, \varpi^{\xi_2})| I\{\sup_{(V_{\mu}^{\theta_1},{\pi}_{\mu}^{\theta_2},\eta^{\xi_1}, \varpi^{\xi_2})} \mathbb |U_{\bar K_2}(V_{\mu}^{\theta_1},{\pi}_{\mu}^{\theta_2},\eta^{\xi_1}, \varpi^{\xi_2})| > x \}\right] \rightarrow 0,
\$
which means $\sup_{(V_{\mu}^{\theta_1},{\pi}_{\mu}^{\theta_2},\eta^{\xi_1}, \varpi^{\xi_2})} \mathbb |U_{\bar K_2}(V_{\mu}^{\theta_1},{\pi}_{\mu}^{\theta_2},\eta^{\xi_1}, \varpi^{\xi_2})|$ is uniformly integrable. Combine with the weak convergence in \eqref{remainder_inprob}, then as $T \rightarrow \infty$
\#
\Delta^2_2 = \sup_{(V_{\mu}^{\theta_1},{\pi}_{\mu}^{\theta_2},\eta^{\xi_1}, \varpi^{\xi_2})}  \mathbb E |U_{\bar K_2}(V_{\mu}^{\theta_1},{\pi}_{\mu}^{\theta_2},\eta^{\xi_1}, \varpi^{\xi_2})|  \leq  \mathbb E \sup_{(V_{\mu}^{\theta_1},{\pi}_{\mu}^{\theta_2},\eta^{\xi_1}, \varpi^{\xi_2})}|U_{\bar K_2}(V_{\mu}^{\theta_1},{\pi}_{\mu}^{\theta_2},\eta^{\xi_1}, \varpi^{\xi_2})|  \rightarrow 0 .\label{mix_first}
\#
Then we move to bound $\Delta^1_2$. The U-statistic $U_{T}$ is not degenerate, so we adopt Hoeffding's representation \citep{hoeffding1994probability} such that it reduces the problem to a ``first-order'' analysis. Specifically, let $\sigma(T)$ is the collection of all permutations of $\{1,2,...,T\}$, the U-statistic $U_{T}$ can be re-expressed as
\$
U_{T}\left(V_{\mu}^{\theta_1},{\pi}_{\mu}^{\theta_2}, \eta^{\xi_1}, \varpi^{\xi_2}\right) = \frac{1}{T !} \sum_{\sigma(T)} \frac{1}{T_0} \sum_{t=1}^{T_0} \bar K \left(X^{\sigma(t)}, X^{\sigma(T_0+t)}\right),
\$
where $T_0 = \lfloor T / 2\rfloor$. By the trick, we have the following inequality
\$
\Delta_2^1 &=  \sup_{(V_{\mu}^{\theta_1},{\pi}_{\mu}^{\theta_2},\eta^{\xi_1}, \varpi^{\xi_2})} \left| \frac{2}{T(T-1)}  \sum_{ 1\leq i\neq j \leq T} \bar K \left(X^{i}, X^{j}\right)
 -  \mathbb  E_T[ U_{T}(V_{\mu}^{\theta_1},{\pi}_{\mu}^{\theta_2},\eta^{\xi_1}, \varpi^{\xi_2})] \right| \\
&= \sup_{(V_{\mu}^{\theta_1},{\pi}_{\mu}^{\theta_2},\eta^{\xi_1}, \varpi^{\xi_2})} \left| \frac{1}{T !} \sum_{\sigma(T)} \frac{1}{T_0} \sum_{t=1}^{T_0} \bar K \left(X^{\sigma(t)}, X^{\sigma(T_0+t)}\right)
 -  \mathbb  E_T[ U_{T}(V_{\mu}^{\theta_1},{\pi}_{\mu}^{\theta_2},\eta^{\xi_1}, \varpi^{\xi_2})] \right| \\
 &= \sup_{(V_{\mu}^{\theta_1},{\pi}_{\mu}^{\theta_2},\eta^{\xi_1}, \varpi^{\xi_2})} \left| \frac{1}{T !} \sum_{\sigma(T)} \frac{1}{T_0} \sum_{i=t}^{T_0} \bar K \left(X^{\sigma(t)}, X^{\sigma(T_0+t)}\right)
 -  \frac{1}{T !} \sum_{\sigma(T)} \mathbb  E_T[ U_{T}(V_{\mu}^{\theta_1},{\pi}_{\mu}^{\theta_2},\eta^{\xi_1}, \varpi^{\xi_2})] \right| \\
 &\leq  \sup_{(V_{\mu}^{\theta_1},{\pi}_{\mu}^{\theta_2},\eta^{\xi_1}, \varpi^{\xi_2})} \frac{1}{T !} \sum_{\sigma(T)}  \left| \frac{1}{T_0} \sum_{t=1}^{T_0} \bar K \left(X^{\sigma(t)}, X^{\sigma(T_0+t)}\right)
 -  \mathbb  E_T[ U_{T}(V_{\mu}^{\theta_1},{\pi}_{\mu}^{\theta_2},\eta^{\xi_1}, \varpi^{\xi_2})] \right| \\
  &\leq  \frac{1}{T !} \sum_{\sigma(T)}  \mathbb E_{T} \sup_{(V_{\mu}^{\theta_1},{\pi}_{\mu}^{\theta_2},\eta^{\xi_1}, \varpi^{\xi_2})} \left| \frac{1}{T_0} \sum_{t=1}^{T_0} \bar K \left(X^{\sigma(t)}, X^{\sigma(T_0+t)}\right)
 -  \mathbb  E_T[ U_{T}(V_{\mu}^{\theta_1},{\pi}_{\mu}^{\theta_2},\eta^{\xi_1}, \varpi^{\xi_2})] \right| \\
   &=  \sup_{(V_{\mu}^{\theta_1},{\pi}_{\mu}^{\theta_2},\eta^{\xi_1}, \varpi^{\xi_2})} \left| \frac{1}{T_0} \sum_{t=1}^{T_0} \bar K \left(X^{t}, X^{(T_0+t)}\right)
 -  \mathbb  E_T[ U_{T}(V_{\mu}^{\theta_1},{\pi}_{\mu}^{\theta_2},\eta^{\xi_1}, \varpi^{\xi_2})] \right| \\
   &=   \sup_{(V_{\mu}^{\theta_1},{\pi}_{\mu}^{\theta_2},\eta^{\xi_1}, \varpi^{\xi_2})} \left| \frac{1}{T_0} \sum_{t=1}^{T_0} \bar K \left(X^{t}, X^{(T_0+t)}\right)
 -  \mathbb  E_T\left[\frac{1}{T_0} \sum_{t=1}^{T_0} \bar K \left(X^{t}, X^{(T_0+t)}\right) \right] \right| \\
    &=  \sup_{(V_{\mu}^{\theta_1},{\pi}_{\mu}^{\theta_2},\eta^{\xi_1}, \varpi^{\xi_2})} \left| \frac{1}{T_0} \sum_{t=1}^{T_0} \bar G \left(\widetilde{X}^{t}\right)
 -  \mathbb  E_T\left[\frac{1}{T_0} \sum_{t=1}^{T_0} \bar G \left(\widetilde{X}^{t}\right) \right] \right|,
\$
where $\widetilde{X}^t = \left(X^{t}, X^{(T_0+t)}\right)$ which itself is a two-dimensional stationary sequences under mixing condition.  Note that the last term is the expectation of the suprema of the empirical process $1/{T_0} \sum_{t=1}^{T_0} \bar G (\widetilde{X}^{t}) -  \mathbb  E_T[1/{T_0} \sum_{t=1}^{T_0} \bar G (\widetilde{X}^{t})]$ on  the space $\bar{\mathcal{G}}_{\theta,\xi}$. The distance in $\bar{\mathcal{G}}_{\theta,\xi}$ can be bounded by the following,
\$
\mathcal{N}  & \Big( \min\{(2\mu_{\max}+4) M_{\max},1\} {\varepsilon},\bar{\mathcal{G}}_{\theta,\xi}, \{\widetilde{X}^{t}\}^{T_0}_{t=1} \Big) \\
\leq & \;  \mathcal{N}\left(\widetilde{C}{\varepsilon},\Theta_1,\left\{\mathcal{D}_i\right\}^n_{i=1} \right) \mathcal{N}\left(\widetilde{C}{\varepsilon},\Theta_2,\left\{\mathcal{D}_i\right\}^n_{i=1} \right) \mathcal{N}\left(\widetilde{C}{\varepsilon},\Xi_1,\left\{\mathcal{D}_i\right\}^n_{i=1} \right)  \mathcal{N}\left(\widetilde{C}{\epsilon}, \Xi_2, \left\{\mathcal{D}_i\right\}^n_{i=1} \right) \\
= & e^4\left(D_{\Theta_1}+1\right)\left(D_{\Theta_2}+1\right)\left(D_{\Xi_1}+1\right)\left(D_{\Xi_2}+1\right)\left(\frac{2 e M_{\max}}{\widetilde{C}\epsilon}\right)^{D_{\Theta_1}+D_{\Theta_2}+D_{\Xi_1}+D_{\Xi_2}}
\$
which implies
\$
\mathcal{N}  & \Big(\frac{\epsilon}{16},\bar{\mathcal{G}}_{\theta,\xi}, \{\widetilde{X}^{t}\}^{T_0}_{t=1} \Big) \\
\leq  & e^4\left(D_{\Theta_1}+1\right)\left(D_{\Theta_2}+1\right)\left(D_{\Xi_1}+1\right)\left(D_{\Xi_2}+1\right)\left(\frac{64(\mM_{\max}+32) U^2_{\max}e}{\widetilde{C}\epsilon}\right)^{D_{\Theta_1}+D_{\Theta_2}+D_{\Xi_1}+D_{\Xi_2}} \\
:= & C_{3} \left(\frac{1}{\epsilon}\right)^{D_{\bar{\mathcal{G}}_{\theta,\xi}}}
\$
where $D_{\bar{\mathcal{G}}_{\theta,\xi}} = D_{\Theta_1}+D_{\Theta_2}+D_{\Xi_1}+D_{\Xi_2}$.
First, without loss of generality, let $T_0 = 2m_{T_0}k_{T_{0}}$ for appropriate positive integers $m_{T_0}k_{T_{0}}$ as in \citep{yu1994rates}. Follow Lemma 5 in \cite{antos2008learning}, we obtain that 
\$
\mathbb{P}( \sup_{(V_{\mu}^{\theta_1},{\pi}_{\mu}^{\theta_2},\eta^{\xi_1}, \varpi^{\xi_2})} \left| \frac{1}{T_0} \sum_{t=1}^{T_0} \bar G \left(\widetilde{X}^{t}\right)
 -  \mathbb  E_T\left[\frac{1}{T_0} \sum_{t=1}^{T_0} \bar G \left(\widetilde{X}^{t}\right) \right] \right| \geq \frac{\epsilon}{2}) \\
 \leq C_3\left(\frac{1}{\epsilon}\right)^{D_{\bar{\mathcal{G}}_{\theta,\xi}}} \exp \left(-4 C_4 m_{T_0} \epsilon^2\right)+2 m_{T_0} \beta({k_{T_0}})
\$
where $C_4 = \frac{1}{2}\left(\frac{1}{8 M^2_{\max}}\right)^2$. If $D_{\bar{\mathcal{G}}} \geq 2$, and let $\beta(m) \lesssim \exp \left(-\delta_1 m\right), T \geq 1, m_T=\left\lceil\left(C_4 T_0\epsilon^2 / \delta_1\right)^{\frac{1}{2}}\right\rceil, m_{T_0}=T_0 /\left(2 k_{T_0}\right)$, where $D_{\bar{\mathcal{G}}_{\theta,\xi}} \geq 2, C_3, C_4, \delta_1$, we apply Lemma 14 in \cite{antos2008learning}, then  
\$
2 m_{T_0} \beta_{k_{T_0}} + C_1\left(\frac{1}{\epsilon}\right)^{D_{\bar{\mathcal{G}}_{\theta,\xi}}} \exp \left(-4 C_2 m_{T_0} \epsilon^2\right) \leq \delta 
\$
and we have, with probability $1-\delta$, 
\$
\Delta^{1}_2 \leq \sqrt{\frac{2\Delta(\Delta / \delta_1 \vee 1)}{C_4 T_0}}\\
\implies \Delta^{1}_2 \leq \sqrt{\frac{2\Delta(\Delta / \delta_1 \vee 1)}{C_4\lfloor T / 2\rfloor}}
\$
where 
\$
\Delta = (D_{\bar{\mathcal{G}}_{\theta,\xi}} / 2) \log T_0+\log (e / \delta)+\log^{+}\left(C_3 C_4^{D_{\bar{\mathcal{G}}_{\theta,\xi}} / 2}\right) \\ 
\implies \Delta = (D_{\bar{\mathcal{G}}_{\theta,\xi}} / 2) \log(T/2)+\log (e / \delta)+\log^{+}\left(C_3 C_4^{D_{\bar{\mathcal{G}}_{\theta,\xi}} / 2}\right)
\$
Now, we conclude that 
\begin{align*}
\|  \widehat {V}_{\mu}^{\theta_1,k} - V^{*} \|^2_{L^{2}}  
 \leq &
\frac{C_{1}}{\kappa_{\min}(1-\gamma)^2}\left( \sqrt{\frac{C_3 D\log \left(\frac{8C_{1}}{\delta}\right)}{n}} + \sqrt{\frac{2\Delta (\Delta  / \delta_1 \vee 1)}{C_4 \lfloor T / 2\rfloor}}\right)+ \\
 & C_2\frac{\mu^{2}(\mathbf{C}+|1-\mathbf{C}|\vee 1)^2}{(1-\gamma)^2} + C_5\left\|\widehat {V}_{\mu}^{\theta_1}-\widehat {V}_{\mu}^{\theta_1,k}\right\|_{L^{2}}^2 + \epsilon_{\text{approximation error}} 
\end{align*}
where $\Delta = (D_{\bar{\mathcal{G}}_{\theta,\xi}} / 2) \log(\lfloor T / 2\rfloor)+\log (e / \delta)+\log^{+}\left(C_3 C_4^{D_{\bar{\mathcal{G}}_{\theta,\xi}}} / 2\right)$, $D_{\bar{\mathcal{G}}_{\theta,\xi}} = \text{P-dim}(\Theta_{1})+\text{P-dim}(\Theta_2)+\text{P-dim}(\Xi_1)+\text{P-dim}(\Xi_2)$, and $C_1,...,C_5$ are some constants. Adapt the notations for the constants number from Theorem \ref{exp-decay}. By some algebra, we conclude that 
\$
\|  \widehat {V}_{\mu}^{\theta_1,k} - V^{\pi^{*}} \|^2_{L^{2}}  
 \leq &
 \underbrace{\frac{C_{4}}{\kappa_{\min}(1-\gamma)^2}\left( \sqrt{\frac{C_5 D_{\text{P-dim}}\log \left(\frac{8C_{4}}{\delta}\right)}{n}} + \sqrt{\frac{2\big(\frac{\bar\Delta}{\delta_1} \vee 1\big)\bar\Delta }{C_6 \lfloor T / 2\rfloor}}\right)}_{\text{generalization error}} + \\
 & \underbrace{C_7\frac{\mu^{2}(\mathbf{C}+|1-\mathbf{C}|\vee 1)^2}{(1-\gamma)^2}}_{\text{proximal bias}} + \underbrace{C_8\left\|\widehat {V}_{\mu}^{\theta_1}-\widehat {V}_{\mu}^{\theta_1,k}\right\|_{L^{2}}^2}_{\text{optimization error}} + \epsilon_{\text{approximation error}} 
\$
where $\bar \Delta = \frac{D_{\text{P-dim}}\log (\lfloor T / 2\rfloor)}{2} +\log (\frac{e}{\delta})+\log^{+}\big(\frac{C_5 C_6^{D_{\text{P-dim}}}}{2}\big)$, $D_{\text{P-dim}} = \text{P-dim}(\Theta_{1})+\text{P-dim}(\Theta_2)+\text{P-dim}(\Xi_1)+\text{P-dim}(\Xi_2)$, and $C_4,...,C_8$ are some constants.
\end{proof}


\subsection{Proof of Theorem \ref{conv}}

We note that SGD converges has a global convergence to a stationary point with a sublinear rate in the case of convexity. However, the resulting dose not typically holds for the non-convex analysis. The intuition behind the proof is that our quasi-optimal algorithm can be regarded as a special
case of the randomized stochastic descent (RSD) algorithm for solving the non-convex minimization problem.

The convergence analysis of for randomized stochastic descent algorithm has been established in Corollary 2.2 of \citep{ghadimi2013stochastic}. That is, RSD is provably convergent to a stationary point. Follow Theorem 3 in \citep{drori2020complexity}, an unbiased SGD algorithm, i.e., the quasi-optimal algorithm with diminishing learning rate and evaluated on Euclidean distance. Therefore, it suffices to show that the gradient of the loss is unbiased. 

Now we show that the gradient is unbiased, as follows 
\begin{align*}
\nabla_{\theta_1}{\mathcal{L}_U} & =  \mathbb{E}\Big[\nabla_{\theta_1}(\gamma V_{\mu}^{\theta_1}(S^{t+1})-V_{\mu}^{\theta_1}(S^t))K(S^t,A^t;\tilde S^t, \tilde A^t) \Lambda_{V_{\mu},\pi_{\mu}}(\tilde S^{t},\tilde A^{t},\tilde S^{t+1}) \\
& + \Lambda_{V_{\mu},\pi_{\mu}}(S^{t},A^{t}, S^{t+1}) K(S^t,A^t;\tilde S^t, \tilde A^t) \nabla_{\theta_1}(\gamma V_{\mu}^{\theta_1}(\tilde S^{t+1})-V_{\mu}^{\theta_1}(\tilde S^t)) \Big] \\
\nabla_{\theta_2}{\mathcal{L}_U} & =  \mathbb{E}\Big[-2\mu (\nabla_{\theta_2}\pi_{\mu}^{\theta_2}(A^t|S^t)K(S^t,A^t;\tilde S^t, \tilde A^t) \Lambda_{V_{\mu},\pi_{\mu}}(\tilde S^{t},\tilde A^{t},\tilde S^{t+1}) \\
& - \Lambda_{V_{\mu},\pi_{\mu}}(S^{t},A^{t}, S^{t+1}) K(S^t,A^t;\tilde S^t, \tilde A^t) (2\mu (\nabla_{\theta_2}\pi_{\mu}^{\theta_2}(\tilde A^t|S^t)) \Big] \\
\nabla_{\xi_1}{\mathcal{L}_U} & =  \mathbb{E}\Big[\nabla_{\xi_1}\varpi(S^t,A^t)K(S^t,A^t;\tilde S^t, \tilde A^t) \Lambda_{V_{\mu},\pi_{\mu}}(\tilde S^{t},\tilde A^{t},\tilde S^{t+1}) \\
& + \Lambda_{V_{\mu},\pi_{\mu}}(S^{t},A^{t}, S^{t+1}) K(S^t,A^t;\tilde S^t, \tilde A^t) \nabla_{\xi_1}\varpi(\tilde S^t,\tilde A^t) \Big] \\
\nabla_{\xi_2}{\mathcal{L}_U} & =  \mathbb{E}\Big[-\nabla_{\xi_2}\eta(S^t)K(S^t,A^t;\tilde S^t, \tilde A^t) \Lambda_{V_{\mu},\pi_{\mu}}(\tilde S^{t},\tilde A^{t},\tilde S^{t+1}) \\
& - \Lambda_{V_{\mu},\pi_{\mu}}(S^{t},A^{t}, S^{t+1}) K(S^t,A^t;\tilde S^t, \tilde A^t) \nabla_{\xi_2}\eta(\tilde S^t) \Big] \\
\end{align*}
We conclude that the gradient estimator is unbiased. Follow Theorem 3 in \citep{drori2020complexity}, under the conditions stated in Theorem \ref{conv}, we adapt Corollary 2.2 to our quasi-optimal algorithm, it completes the proof.

\section{Experiment Details and Additional Results}\label{sec:experiment}

\textbf{Motivation of Synthetic experiment design:} We aim to test the performance of our proposed method on the settings of bounded and unbounded continuous action space with unimodal and multimodal reward functions. The motivation for testing the proposed method in bounded action space is to test if the proposed method could potentially handle the off-support bias, as illustrated in Figure 2. The reason for considering a multimodal synthetic environment is to evaluate the quasi-optimal policy class (q-Gaussian policy class) works in a relatively complex situation. Especially for the q-Gaussian policy distribution which is unimodal, it is necessary to test if the q-Gaussian policy still works and is robust to the scenario where the optimal policy might be multimodally behaving. 

We make a summary of the synthetic experiments as follows: 

Environment  I: 
\begin{itemize}
    \item Setting: Bounded action space and unimodal reward function
    \item Purpose: To evaluate if the quasi-optimal learning works in the scenario where it might suffer the off-support bias issue as the continuous action space is bounded. 
\end{itemize}

Environment  II: 
\begin{itemize}
    \item Setting: Bounded action space and multimodal reward function
    \item Purpose: In addition to the purpose in Environment I, we aim to implement quasi-optimal learning in a more challenging environment. Also, this is for evaluating the robustness of the unimodal q-Gaussian policy under the scenario that the true optimal policy follows a multimodal probability distribution.  
\end{itemize}

Environment  III:
\begin{itemize}
    \item Setting: High-dimension state space and well-separated reward function. The design of the well-separated reward function causes the effect that the selection of non-optimal or sub-optimal actions greatly damages the rewards and increases the risk. 
    \item Purpose: To evaluate the reliability/safety of quasi-optimal learning. We aim to examine if quasi-optimal learning could perform well in this scenario. As we expect quasi-optimal learning is able to identify the quasi-optimal sub-regions and avoids choosing those non-optimal/sub-optimal actions which greatly damage the performance. 
\end{itemize}

Environment  IV:
\begin{itemize}
\item Setting: High-dimension state space and complex well-separated reward function. 
\item Purpose: In addition to the purpose in Environment III, we target to evaluate the quasi-optimal learning in a more complex environment, imposing great challenges on recovering the quasi-optimal regions for the proposed method. Indeed, imposing more complex structures on reward function indicates imposing difficulties on value function learning and thus imposes great challenges on identifying quasi-optimal regions. 
\end{itemize}

\textbf{Ohio Type 1 Diabetes Dataset:} For individuals in the first cohort, we treat glucose level , carbon-hydrate intake, and acceleration level as state variables, i.e., $S_{i,1}^t,S_{i,2}^t$ and $S_{i,3}^t$ . For individuals in the second cohort, heart rate is used instead of acceleration level as $S_{i,3}^t$. The reward function is defined as
\$
R_i^t=-\frac{\mathbbm{1}(S_{i,1}^t>140)^{1.1}+\mathbbm{1}(S_{i,1}^t<80)(S_{i,1}^t-80)^2}{30}.
\$

\subsection{Additional Experiment Details}

In our implementation, since the objective function, $\hat{\mathcal{L}}_U$ may not be convex with respect to $(\theta, \xi)$. We determine the initial point by randomly generating 200 initial values for all parameters and selecting the one with the smallest objective function value. 

For the discretization-based methods, i.e.,  Greedy-GQ and V-learning, we discretize the original action space into $20$ bins for implementation in synthetic experiments and $14$ bins for real data analysis. The number of bins is chosen by analyzing the distribution of action and the scale of rewards, where too few bins could not lead to an accurate approximation of the whole dynamic, and too many bins may damage the performance of these methods. We use a radial basis to approximate value functions for these two methods based on the recommendation of the original implementation \citep{ertefaie2018constructing,luckett2019estimating}.

For the DeepRL-based continuous control methods, i.e., DDPG, SAC, BEAR, CQL and IQN, we implement them mainly based on well-known offline deep reinforcement learning library \citep{seno2021d3rlpy}. For the general optimization and function approximation settings,  we use a multi-layer perceptron (MLP) with 2 hidden layers, each with 32 nodes for function approximation. We set the batch size to be 64, and use ReLU function as the activation function. In addition to the summary provided below, the initial learning rate is chosen from the set $\{3\times10^{-4},1\times10^{-4},3\times10^{-5}\}$. We use Adam \citep{kingma2014adam} as the optimizer for learning the neural network parameters. We set the discounted factor to be $\gamma=0.9$ for all experiments.

We report all hyperparameters used in training and additional experiment results in this section. The  value of $\mu$ is selected from the set $\{0.01,0.05,0.1,0.2,0.3,0.5\}$. We select $\mu$ by cross-validation for each experiment, specifically we select $\mu$ with the largest fitted V-function value on the initial states of each trajectory, i.e., $\mathbb{P}_n\hat V_{\mu}(S_i^1)-(1-\gamma)^{-1}\mu$, where we mitigate the effect of the threshold parameter $\mu$. 
In our implementation, we set $\mathbf{C}=5$ for all synthetic experiments and real data analysis, and check that the induced policy $\pi_{\mu}$ never reaches the boundary value.

We set the learning rate $\alpha_j$ for the $j$th iteration is be $\frac{\alpha_0}{1+d\sqrt{j}}$, where $\alpha_0$ is the learning rate of the initial iteration, and $d$ is the decay rate of the learning rate. When $n=25$, we set the batch size to be 5, and when $n=50$, we set the batch size to be 7. We use the $L_2$ distance of iterative parameters as the stopping criterion for the SGD algorithm. The $\mu$ selected for each experiment, along with the learning rates and their descent rates, are shown in Table \ref{hyper1} \ref{hyper2} and \ref{hyper3}.
\begin{table}  [H]
\footnotesize
  \caption{Hyperparameters for each synthetic environment}
  \renewcommand{\arraystretch}{1.2}
		\centering
\begin{tabular}{c|cccc}
\hline Hyperparameters & Environment I & Environment II & Environment III& Environment IV  \\
\hline $\mu$ & $0.1$ & $0.05$  &$0.05$ &$0.05$   \\
Learning Rate & $0.002$ & $0.0005$  &$10^{-5}$ &$10^{-5}$   \\
Descent Rate & $10^{-4}$ & $10^{-4}$  &$10^{-4}$ & $10^{-4}$  \\
\hline
\end{tabular} \label{hyper1}
\end{table}

\vspace{-0.2cm}
\begin{table}  [H]
\footnotesize
  \caption{Hyperparameters for Ohio Type I Diabetes Analysis (Cohort I)}
  \renewcommand{\arraystretch}{1.2}
		\centering
\begin{tabular}{c|cccccc}
\hline Patient ID &  540 & 544 & 552& 567 & 584 & 596  \\
\hline $\mu$ & $0.1$ & $0.1$  &$0.1$ &$0.05$ & $0.05$  & $0.1$\\
Learning Rate & $0.001$ & $0.001$  &$0.001$ &$0.0005$ & $0.001$ & $0.02$  \\
Descent Rate & $10^{-4}$ & $10^{-4}$  &$10^{-4}$ & $10^{-4}$ & $10^{-4}$ &$2 \times 10^{-4}$ \\
\hline
\end{tabular} \label{hyper2}
\end{table}

\vspace{-0.2cm}
\begin{table}  [H]
\footnotesize
  \caption{Hyperparameters for Ohio Type I Diabetes Analysis (Cohort II)}
  \renewcommand{\arraystretch}{1.2}
		\centering
\begin{tabular}{c|cccccc}
\hline Patient ID &  559 & 563 & 570& 575 & 588 & 591  \\
\hline $\mu$ & $0.2$ & $0.1$  &$0.2$ &$0.05$ & $0.05$  & $0.05$\\
Learning Rate & $0.005$ & $0.0001$  &$0.005$ &$0.0001$ & $0.0001$ & $0.0001$  \\
Descent Rate & $10^{-4}$ & $10^{-4}$  &$10^{-4}$ & $10^{-4}$ & $10^{-4}$ &$ 10^{-4}$ \\
\hline
\end{tabular} \label{hyper3}
\end{table}

\begin{table}[H]
\footnotesize
  \caption{The mean running time in seconds of each method over $50$ experiment runs in Environment I. The synthetic experiments are conducted on a single \texttt{2.3 GHz Dual-Core Intel Core i5} CPU}.
 \label{simula2}
		\renewcommand{\arraystretch}{1.2}
		\setlength{\tabcolsep}{3pt}
		\centering
\begin{tabular}{cc|cccccc}
\hline n & T & Proposed & SAC & DDPG & BEAR & Greedy-GQ \\
\hline 25 & 24 & 23.11 & 22.17  & 14.31 & 35.42 & 11.39\\
& 36 & 28.91 & 28.12  & 18.35 & 42.16 & 14.47  \\
\hline 50 & 24 & 28.88 & 29.91  & 19.42 & 46.73 & 15.62  \\
& 36 & 45.23 & 44.46  & 36.82 & 63.54 & 24.81 \\
\hline
\end{tabular}
\end{table}

\begin{table}  [H]
\vspace{-0.18cm}
\footnotesize
  \caption{The mean running time in seconds of each method over $50$ experiment runs in Environment II. The synthetic experiments are conducted on a single \texttt{2.3 GHz Dual-Core Intel Core i5} CPU}.
     \vspace{-0.2cm} \label{simula2}
		\renewcommand{\arraystretch}{1.2}
		\setlength{\tabcolsep}{3pt}
		\centering
\begin{tabular}{cc|cccccc}
\hline n & T & Proposed & SAC & DDPG & BEAR &  Greedy-GQ \\
\hline 25 & 24 & 30.93 & 27.29  & 20.12 & 44.01 & 14.56 \\
& 36 & 39.34 & 36.91  & 26.43 & 52.86 & 19.43  \\
\hline 50 & 24 & 41.12 & 42.25  & 28.42 & 55.17 & 21.56\\
& 36 & 60.16 & 56.47  & 45.71 & 72.12 & 32.14 \\
\hline
\end{tabular}
\end{table}

\subsection{Additional Experiment Results}

\subsubsection{Model Performance on Large Dataset}

We evaluate the model performance in large sample size scenarios (10,000 transition pairs ($n=100,T=100$) for all four environments. The results are presented in Figure \ref{fig:large}. Deep RL baseline methods have some improvement in the model performance and variance reduction with increased training samples. Meanwhile, the quasi-optimal learning still outperforms all competing methods as shown in Figure  \ref{fig:large}.

\begin{figure}[H]
     \vspace{-2mm}
    \centering 
     \includegraphics[width=0.5\textwidth]{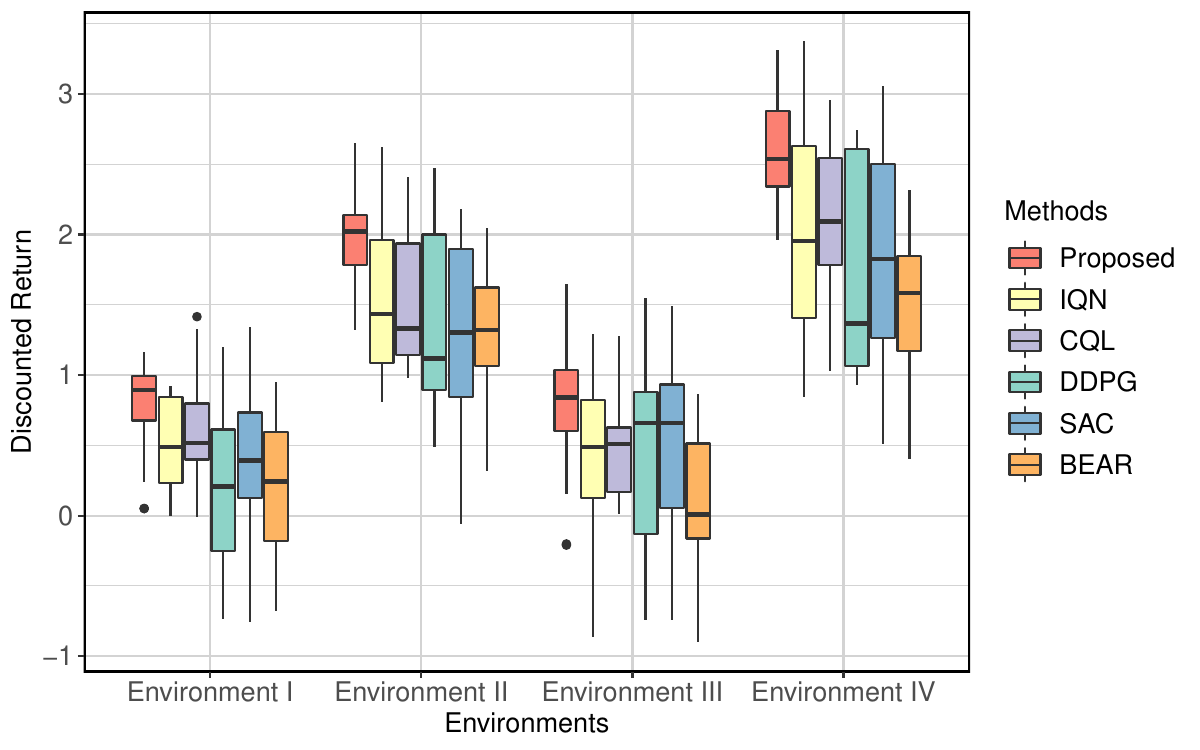}
    \caption{The boxplot of discounted return over 30 repeated experiments with sample size $N=100, T=100$. } 
     \label{fig:large}
\end{figure}

\end{document}